\newcommand{\removelatexerror}{\let\@latex@error\@gobble}
\def \FigWidthSmall{0.46\textwidth}
\newtheorem{theorem}{Theorem}
\newtheorem{lemma}{Lemma}
\newtheorem{corollary}{Corollary}
\newtheorem{assumption}{Assumption}
\newtheorem{remark}{Remark}
\newcommand{\cmark}{\ding{51}}%
\newcommand{\xmark}{\ding{55}}
\begin{document}

\title{FedLGA: Towards System-Heterogeneity of Federated Learning via Local Gradient Approximation }

\author{Xingyu~Li,~Zhe~Qu,~Bo~Tang,~\IEEEmembership{Member,~IEEE},~and~Zhuo~Lu,~\IEEEmembership{Senior~Member,~IEEE}\thanks{Copyright (c) 2015 IEEE. Personal use of this material is permitted. However, permission to use this material for any other purposes must be obtained from the IEEE by sending a request to pubs-permissions@ieee.org.}
	\thanks{Xingyu Li and Bo Tang are with the Department of Electrical and Computer Engineering, Mississippi State University, Mississippi State, MS, 39762 USA. e-mail: xl292@msstate.edu, tang@ece.msstate.edu.}%
	\thanks{Zhe~Qu and Zhuo~Lu are with the Department of Electrical Engineering at University of South Florida, Tampa, FL, 33620. E-mail: zhequ@usf.edu, zhuolu@usf.edu.}
	\thanks{Xingyu~Li and Zhe~Qu are co-first authors.}
}

\markboth{Journal of \LaTeX\ Class Files,~Vol.~14, No.~8, August~2015}%
{Shell \MakeLowercase{\textit{et al.}}: Bare Advanced Demo of IEEEtran.cls for IEEE Computer Society Journals}

\IEEEtitleabstractindextext{%
	\begin{abstract}
		Federated Learning (FL) is a decentralized machine learning architecture, which leverages a large number of remote devices to learn a joint model with distributed training data. However, the system-heterogeneity is one major challenge in a FL network to achieve robust distributed learning performance, which comes from two aspects: i) device-heterogeneity due to the diverse computational capacity among devices; ii) data-heterogeneity due to the non-identically distributed data across the network. Prior studies addressing the heterogeneous FL issue, e.g., FedProx, lack formalization and it remains an open problem. This work first formalizes the system-heterogeneous FL problem and proposes a new algorithm, called FedLGA, to address this problem by bridging the divergence of local model updates via gradient approximation. To achieve this, FedLGA provides an alternated Hessian estimation method, which only requires extra linear complexity on the aggregator. Theoretically, we show that with a device-heterogeneous ratio $\rho$, FedLGA achieves convergence rates on non-i.i.d. distributed FL training data for the non-convex optimization problems with $\mathcal{O} \left( \frac{(1+\rho)}{\sqrt{ENT}} + \frac{1}{T} \right)$ and $\mathcal{O} \left( \frac{(1+\rho)\sqrt{E}}{\sqrt{TK}} + \frac{1}{T} \right)$ for full and partial device participation respectively, where $E$ is the number of local learning epoch, $T$ is the number of total communication round, $N$ is the total device number and $K$ is the number of selected device in one communication round under partially participation scheme. The results of comprehensive experiments on multiple datasets show that FedLGA outperforms current FL methods against the system-heterogeneity. 
	\end{abstract}
	
	\begin{IEEEkeywords}
		Federated Learning, Mobile Edge Computing, Non-convex Optimization, Local Gradient Approximation.
\end{IEEEkeywords}}

\maketitle

\IEEEdisplaynontitleabstractindextext

\IEEEpeerreviewmaketitle

\ifCLASSOPTIONcompsoc
\IEEEraisesectionheading{\section{Introduction}\label{sec:introduction}}
\else
\section{Introduction}

\fi

%
%
%
%
\IEEEPARstart{F}{ederated} Learning (FL) \cite{konecny2016federated, mcmahan2017communication} has emerged as an attractive distributed machine learning paradigm that leverages remote devices to collaboratively learn a joint model with decentralized training data via the coordination of a centralized aggregator. Typically, the joint model is trained on all remote devices in the FL network to solve an optimization problem without exchanging their private training data, which distinguishes the FL paradigm from traditional centralized optimization, and thus the data privacy can be greatly protected  \cite{ lee2013distributed, bonawitz2017practical, mcmahan2017communication}. Specifically, due to the flexibility for remote device participation (e.g., mobile edge computing), devices can randomly join or leave the federated network during the training process. This makes the full participation scheme be infeasible as the network needs extra communication cost to wait for the slowest device, which dominates the bottleneck of FL \cite{ stich2018local, yu2019parallel, wang2019adaptive}. As such, in recent FL algorithms, only a fixed subset of remote devices are chosen by the aggregator in each communication round, also known as the partial participation scheme \cite{mcmahan2017communication, li2019convergence, li2020federated}. 

In the current FL study, there is a fundamental gap that has not been seen in traditional centralized ML paradigms, known as the system-heterogeneity issue. Specifically, we consider that the system-heterogeneous FL issue consists of two types of heterogeneity: \textit{data} and \textit{device}.  
The data-heterogeneity is also known as the non-i.i.d.  training dataset. As the training samples on the remote devices are collected by the devices themselves based on their unique environment, the data distribution can vary heavily between difference remote devices. Although the optimization of non-i.i.d. FL has recently drawn significant attention, prior works have shown that compared to the i.i.d. setting, the performance of the joint model degrades significantly and remains an open problem \cite{zhao2018federated, sattler2019robust, li2019convergence}. 

The device-heterogeneity stems from the heterogeneous FL network, where remote devices are in large numbers and have a variety of computational capacities \cite{bonawitz2019towards, li2020federated}. Specifically, for the partially participated FL scheme where each remote learning process is usually limited to a responding time, the diverged computational capacity can lead to heterogeneous local training updates, e.g., the remote device with limited computational capacity is only able to return a non-finished update. To tackle this problem, several FL frameworks have been studied in literature \cite{stich2018local, li2019convergence, khaled2020tighter, karimireddy2020scaffold, yang2021achieving}. For example, FedProx \cite{li2020federated} develops a broader framework over FedAvg \cite{mcmahan2017communication}, which provides a proximal term to the local objective of heterogeneous remote devices. However, most current works are developed on the side of remote devices, which requires extra computational cost that could worsen the divergence, and there is no widely-accepted formulation provided.

In this paper, we investigate the system-heterogeneous issue in FL. A more realistic FL scenario under the device-heterogeneity is formulated, which synchronously learns the joint model on the aggregator with diverged local updates.  Unlike most current FL approaches, our formulated scenario does not require remote devices to complete all local training epochs before the aggregation, but it leverages whatever their current training updates are at the present time. Particularly, 
different from the previous works that usually establish a communication response threshold in the partial participation scheme, the formulated system-heterogeneous FL provides a guarantee that each remote device shares the same probability of being chosen into the training process.  

Then, the biggest challenge to achieve the distributed optimization objective under the system-heterogeneous FL comes from the diverse local updates. To address this, we propose a new algorithm, called Federated Local Gradient Approximation (FedLGA) which approximates the optimal gradients with a complete local training process from the received heterogeneous remote local learning updates. Specifically, considering the computation complexity, the proposed FedLGA algorithm provides an alternated Hessian estimation method to achieve the approximation, whose extra complexity comparing to existing FL approaches is only linear. Additionally, the FedLGA is deployed on the aggregator of FL, that no extra computational cost is required for remote devices. For the non-convex optimization problem under the system-heterogeneous FL settings, we evaluate our proposed FedLGA algorithm via both theoretical analysis and comprehensive experiments. In summary, we highlight the contribution of this paper as follows

\begin{itemize}
	\item We formulate the system-heterogeneous FL problem and propose the FedLGA as a promising solution, which tackles the heterogeneity of remote local updates due to the diverse remote computational capacity.
	\item For the non-convex optimization problems, the FedLGA algorithm under the system-heterogeneous FL achieves a convergence rate $\mathcal{O} \left( \frac{(1+\rho)}{\sqrt{ENT}} + \frac{1}{T} \right)$ and $\mathcal{O} \left( \frac{(1+\rho)\sqrt{E}}{\sqrt{TK}} + \frac{1}{T} \right)$ for full and partial participation schemes respectively. 
	\item We conduct comprehensive experiments on multiple real-world datasets and the results show that FedLGA outperforms existing FL approaches. 
\end{itemize}

The rest of this paper is organized as follows: Sec.~\ref{Sec:Formulation} describes the background of FL and the formulation of the system-heterogeneous FL problem. Sec.~\ref{Sec:FedLGA} details the development of our proposed FedLGA algorithm, followed by the theoretical analysis and the convergence rate discussion in Sec.~\ref{Sec:Convergence}. Sec.~\ref{Sec:Experiments} provides our comprehensive experimental results and analysis for the proposed FedLGA. The summaries of related works for this paper are introduced in Sec.~\ref{Sec:Related}, followed by a conclusion in Sec.~\ref{Sec:Conclusion}.

\section{Background and Problem Formulation}\label{Sec:Formulation}
\subsection{Federated Learning Objective}

FL methods \cite{mcmahan2017communication, smith2017federated} are designed to solve optimization problems with a centralized aggregator and a large group of remote devices, which collect and process training samples without sharing raw data. For better presentation, we provide a summary of the most important notations throughout the proposed FedLGA algorithm in Table.~\ref{Tab:notation}. Considering a FL system which consists of $N$ remote devices indexed as $\mathcal{N} = \{1, \cdots, N\}$, the objective $f(\cdot)$ that a learning model aims to minimize could be formalized as
\begin{equation}\label{Eq:objective}
	\min_{\bm{w}} f(\bm{w}) = \frac{1}{N} \sum_{i=1}^{N} F_i(\bm{w}),
\end{equation}
where $\bm{w}$ is the learned joint model parameters, note that in this paper, we simplify the dimension of both the inputs data and the deep neural network model $\bm{w}$ into vectors for better presentation. And $F_i (\cdot)$ denotes the local objective for the $i$-th device, which typically represents the empirical risk $l_i(\cdot;\cdot)$ over its private training data distribution $\mathcal{X}_i \sim \mathcal{D}_i$, e.g., $F_i(\bm{w}) = \mathbb{E}_{\mathcal{X}_i \sim \mathcal{D}_i} [l_i (\bm{w}; \mathcal{X}_i)]$. In this paper, we consider the local objective $F_i$ to be non-convex, which is solved by the corresponding local solver e.g., Stochastic Gradient Descent (SGD). During each communication round, remote devices download the current joint model from the aggregator as their local models and perform local solvers towards minimizing the non-convex objective for $E$ epochs as 

\begin{equation}\label{Eq:losssgd}
	\bm{w}_{i,E}^{t} = \bm{w}^t_i - \eta_{l} \sum_{e=0}^{E-1}  \nabla F_i (\bm{w}^t_{i,e}, \mathcal{B}_{i,e} ),
\end{equation}
where $\eta_l$ is the local learning rate, $\nabla F_i (\cdot, \cdot)$ denotes the gradient descent of objective $F_i$,  $\bm{w}_{i,E}^{t}$ represents the updated local model and $\mathcal{B}_{i,e}$ is the $e$-th training batch in SGD, which is typically randomly sampled from $\mathcal{X}_{i}$ at each epoch. The updated local models are sent back to the aggregator for a new joint model with an aggregation rule. In this paper, we consider one round of communication in the network between the aggregator and remote devices as one global iteration, which is performed $T$ times for the joint model training. 

\subsection{Problem Formulation}
\begin{table}[tb]
	\centering
	{
		\begin{tabular}{ c c } 
			\toprule
			\toprule
			$N, i$ &  total number, index of the remote device \\ 
			$f(\cdot)$ & joint objective of FL \\
			$F_i(\cdot)$ & local objective for remote device $i$ \\
			$\mathcal{X}_i$ & private training dataset on remote device $i$\\
			$t$ &  index of global communication round \\ 
			$e$ &  index of local epoch step \\ 
			$\bm{w}^{t}$ & joint model after the aggregation of $t$-th global round \\ 
			$\bm{w}^{t}_{i,e}$& joint model after the aggregation of $t$-th global round \\ 
			$\Delta^t_{i,E}$& local update for device $i$ at $t$-th round as $\bm{w}_{i}^{t} - \bm{w}_{i,E}^{t}$ \\ 
			$\hat{\Delta}^t_{i,E}$& approximated local update from the proposed FedLGA  \\ 			
			
			\bottomrule
	\end{tabular}}
	\caption{Notations summary}
	\label{Tab:notation}
\end{table}

Due to the consideration of device-heterogeneity in the FL network, recent studies mainly focus on the partial participation scheme, which can avoid waiting for the slowest devices in fully participated scenario \cite{li2019convergence, karimireddy2020scaffold, yang2021achieving}. Typically, partially participated FL algorithms establish a threshold $K << N$ at each iteration, i.e., it only selects the first $K$ responded remote devices, all of which complete $E$ local training epochs prior to sending their updated local models to the aggregator.

However, such a partial participation scheme suffers from the known performance-speed dilemma in a system-heterogeneous FL network: a small $K$ can speed up the distributed training but it would also significantly degrade the learning performance as it discards many important training data only stored in those slow devices (i.e., data-heterogeneity \cite{kairouz2019advances}), while a large $K$ can utilize more training data but its distributed training process would be greatly slowed down (i.e., data-heterogeneity \cite{kairouz2019advances}). Though there have been existing studies in literature, the optimization of system-heterogeneous FL lacks formalization. For example, \cite{li2020federated} targets this problem by adding a proximal term, which empirically improves the learning performance, and \cite{wang2020tackling} proves that existing FL algorithms will converge to a stationary status with mismatched objective functions under heterogeneous local epochs. 

Instead of only waiting for all devices to complete $E$ local epochs, a better solution to address this dilemma is to gather all current local learning models and aggregate them in a manner such that all local training data are utilized to learn the joint model. Specifically, we formalize the training process of FL under system-heterogeneity with the following three steps at the $t$-th global iteration.
\begin{figure}[tb]
	\centering
	\includegraphics[width=\FigWidthSmall]{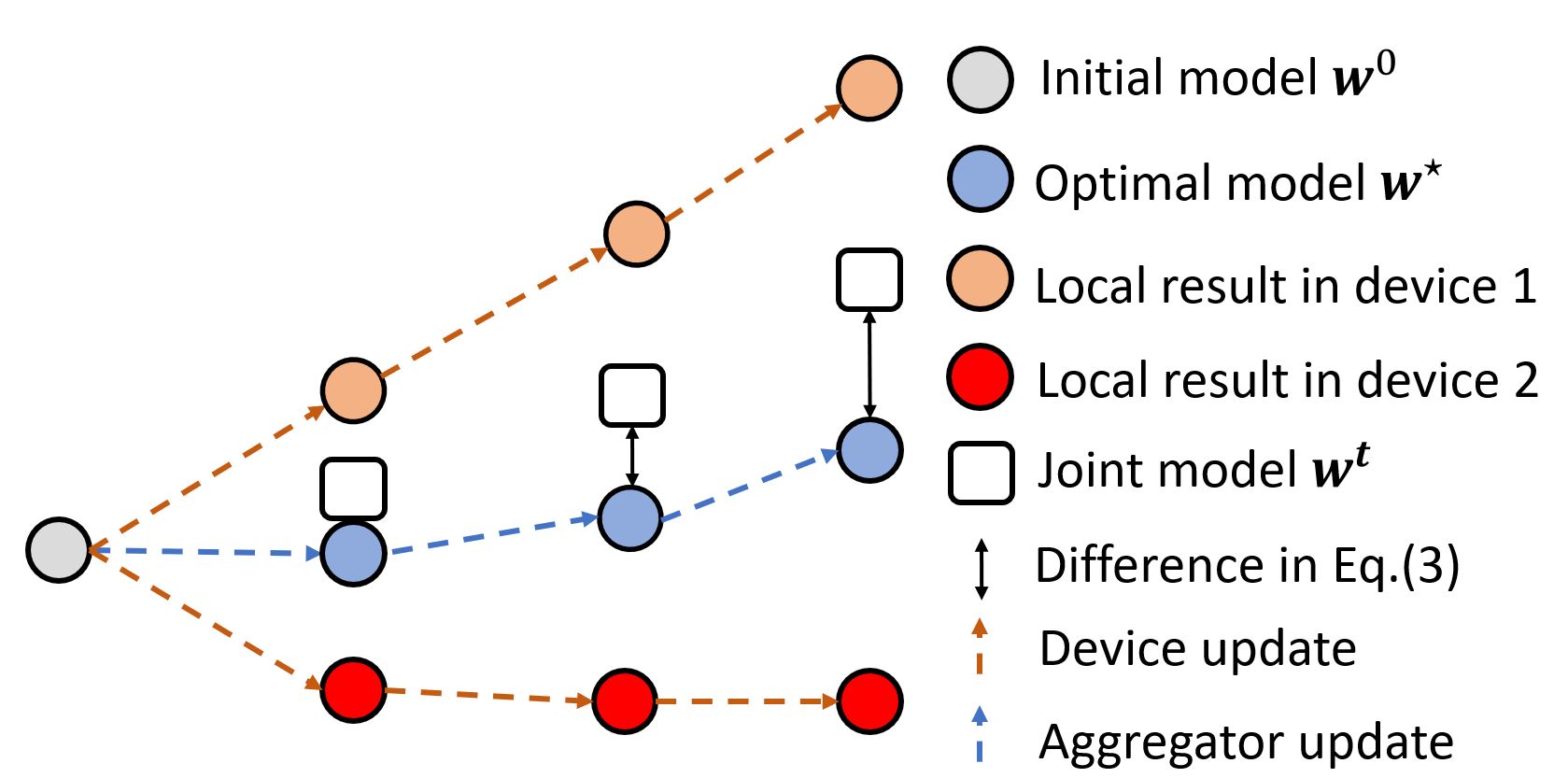}
	\caption{The heterogeneous local gradients due to system-heterogeneity of FL in FedAvg, illustrated for $2$ remote devices with three iterations.}
	\label{Fig:illustrate}
\end{figure}
\begin{itemize}
	\item \textbf{Step. I.} $K$ remote devices are selected by the aggregator as a subset $\mathcal{K}$, where $|\mathcal{K}| = K$, which receive the current joint model $\bm{w}^t$ as their local model $\bm{w}^{t}_{i} =\bm{w}^t$. The aggregator also delivers an expected epoch number $E$. 
	\item \textbf{Step. II} Due to the diverse computational capacity, the $i$-th device performs local training for $E_i$ steps, where $1 \leq E_i < E$. Then, the learning results are sent back to the aggregator synchronously. 
	\item \textbf{Step. III} The aggregator updates the joint model $\bm{w}^{t+1}$ with the received local learning results under a well-designed aggregation rule. 
\end{itemize}

Note that the system-heterogeneous FL formulation has one main difference from the settings in prior works such as FedAvg \cite{li2019convergence}: the $K$ remote devices are randomly selected in each iteration, instead of only considering the $K$ fastest devices, which guarantees that each device shares the same probability $\frac{K}{N}$ of being selected at each iteration. Particularly, we use a virtual subset $\mathcal{K}_1 \in \mathcal{K}$ to represent the remote device $i$ that only performs $E_i < E$ local training epochs, where $|\mathcal{K}_1| = K_1$, and introduce a hyper-parameter $\rho = {K_1}\slash{K}$ as the device-heterogeneous ratio. To better present the diverse local updates due to the system-heterogeneity, we denote the local update of the $i$-th device at iteration $t$ after $E_i$ epochs as  $\Delta^t_{i,E_i} = \bm{w}_{i}^{t} - \bm{w}_{i,E_i}^{t}$, where $\bm{w}_{i}^{t}$ is the initial model before local training (i.e., $\bm{w}_{i}^{t} = \bm{w}_{i,0}^{t}$) and the expected update with full $E$ epochs is $\Delta^t_{i,E}$. Hence, under the system-heterogeneity of FL, we aim to minimize the following objective between $\Delta^t_{i,E}$ and $\Delta^t_{i,E_i}$ at each communication round $t$
\begin{equation}\label{Eq:difference}
	\min \sum_{i \in \mathcal{K}_1}||\Delta^t_{i,E}   -  \Delta^t_{i,E_i}  ||.
\end{equation}

In other words, we want to approximate the expected model update $\Delta^t_{i,E}$ from the received $\Delta^t_{i,E_i}$. This approximation can be performed in the aggregator, which does not introduce any extra computations in remote devices. To achieve this, inspired by prior studies on gradient approximation for improving centralized SGD optimization problems \cite{stich2019error, arjevani2020tight, glasgow2020asynchronous}, we propose the Federated local gradient approximation (FedLGA) algorithm, which is introduced in detail in the next section.

\section{Proposed Algorithm: FedLGA}\label{Sec:FedLGA}
\subsection{Design Motivation}
For better presentation, we first introduce an example to illustrate the problem of the diverged local gradients in system-heterogeneous FL as shown in Fig.~\ref{Fig:illustrate}. The introduced FL network consists of $2$ remote devices, where device $1$ can perform the expected $E$ local epochs as a complete local training process in each iteration and device $2$ is only able to perform $E_i$ steps.
To denote the optimal objective of the target FL network, we let $\bm{w}^{\star} \in \mathbb{R}^{n}$ be the global optimum joint model for $f(\bm{w}^{t})$ which can be only ideally obtained when these two devices perform $E$ epochs. We can notice that, due to the uncompleted local learning of device $2$, the direction of joint model $\bm{w}^{t}$ incrementally deviates from $\bm{w}^{\star}$.

For the $i$-th device in the $t$-th iteration, when the aggregator receives local update $\Delta^t_{i,E_i}$, our proposed FedLGA algorithm applies the following Taylor expansion \cite{folland2005higher, bischof1993structured} to approximate the ideal update $\Delta^t_{i,E}$
\begin{equation}\label{Eq:Taylor}
	\begin{split}
		\Delta^t_{i,E}  = \Delta^t_{i,E_i} +  &\nabla_{\bm{g}}  (\bm{w}_{i,E_i}^{t})  (\bm{w}_{i,E}^{t} - \bm{w}_{i, E_i}^{t}) \\
		& + \mathcal{O}((\bm{w}_{i,E}^{t} - \bm{w}_{i, E_i}^{t})^2)I_n,
	\end{split}
\end{equation}
where $I_n$ is a $n$-dimension vector with all elements equal to 1, $(\bm{w}_{i,E}^{t} - \bm{w}_{i, E_i}^{t})^2$ denotes $(w_{i,E,1}^{t}- w_{i,E_i,1}^{t})^{a_1} \cdots (w_{i,E,n}^{t}- w_{i,E_i,n}^{t})^{a_n}$ with $\sum\nolimits_{p=1}^{n} a_p = 2$ as illustrated in \cite{zheng2017asynchronous}, and $\nabla_{\bm{g}} (\cdot) = \nabla^{2} F_i (\cdot)$ is the matrix whose element $g_{j, k} = \frac{\partial F_{i}^{2}}{\partial {w}_{i,j}^{t} \partial {w}_{i,k}^{t} }$ for $j, k \in n$, as we use $\bm{g}$ to represent $\frac{\Delta^{t}_{i,E} -  \Delta^t_{i,E_i}}{E-E_i}$ which is the averaged gradient $\nabla F_i (\cdot)$ between epoch $E_i$ and $E$. This also tells that the joint model drifting issue in Fig.~\ref{Fig:illustrate} is caused by ignoring the higher-order terms $\nabla_{\bm{g}}(\bm{w}_{i,E_i}^{t})  (\bm{w}_{i,E}^{t} - \bm{w}_{i, E_i}^{t})  + \mathcal{O}((\bm{w}_{i,E}^{t} - \bm{w}_{i, E_i}^{t})^2)I_n$.
Hence, we can tackle the difference in Eq.~\eqref{Eq:difference} by approximating the higher-order terms in Eq.~\eqref{Eq:Taylor} for each device $i \in \mathcal{K}_1$. To achieve this, a straightforward way is to use the full Taylor expansion for gradient compensation.

\subsection{Hessian Approximation}
However, computing the full Taylor expansion can be practically unrealistic because of two fundamental challenges: i) for the devices $i \in \mathcal{K}_1$, the $\bm{w}_{i,E}^{t}$ is still not known to the aggregator; ii) the approximation of higher-order terms in Taylor expansion requires a sum of an infinite number of items, where even solving the first-order approximation $ \nabla_{\bm{g}}(\bm{w}_{i,E_i}^{t})  (\bm{w}_{i,E}^{t} - \bm{w}_{i, E_i}^{t})$ is also highly non-trivial. To address the first challenge, we make a first-order approximation of $\bm{w}_{i,E}^{t}$ from those devices with full local epochs, which is denoted by $\hat{\bm{w}}_{i,E}^{t}$. Inspired by prior works on asynchronous FL weight approximation \cite{xie2019asynchronous}, we obtain the first-order approximation of $\hat{\bm{w}}_{i,E}^{t} = \bm{w}^{t} + \frac{1}{K_2}  \sum\nolimits_{i \in \mathcal{K}_2} \Delta^t_{i,E}$, where $\mathcal{K}_2 = \mathcal{K} -  \mathcal{K}_1$ is the set of devices with full local epochs. As such, we show the first-order item approximation as 
\begin{equation}\label{Eq:firstorder}
	\hat{\Delta}^t_{i,E}  \approx \Delta^t_{i,E_i} +  \nabla_{\bm{g}}  (\bm{w}_{i,E_i}^{t})  (\hat{\bm{w}}_{i,E}^{t} - \bm{w}_{i, E_i}^{t}),
\end{equation}
where $\hat{\Delta}^t_{i,E}$ denotes the approximated heterogeneous local updates that $i \in \mathcal{K}_1$, for distinguishing the approximation from the ideal updates $\Delta^t_{i,E}, i \in \mathcal{K}_2$.
Note that the second challenge comes from the derivative term $\nabla_{\bm{g}} (\bm{w}_{i,E_i}^{t})$, which corresponds to the Hessian matrix of the local objective function $F_i(\cdot)$ that is defined as $\mathbf{H} = [h_i^{j,k}], j,k = 1, \cdots, n$, where $[h_i^{j,k}] = \frac{\partial F_{i}(\cdot)^{2}}{\partial {w}_{i,j}^{t} \partial {w}_{i,k}^{t} }$. Since the computation cost of obtaining the Hessian matrix of a deep learning model is still expensive, our FedLGA algorithm applies the outer product matrix of $ \nabla_{\bm{g}}(\bm{w}_{i,E_i}^{t})$, which is denoted as $G(\bm{w}_{i,E_i}^{t} )$ that follows
\begin{equation}\label{Eq:Outer_matrix}
	G(\bm{w}_{i,E_i}^{t}) = \left( \frac{\partial F_{i}(\bm{w}_{i,E_i}^{t})}{\partial \bm{w}_{i,E_i}^{t}} \right) \left( \frac{\partial F_{i}(\bm{w}_{i,E_i}^{t})}{\partial \bm{w}_{i,E_i}^{t}} \right)^{\top}.
\end{equation} 
This outer product of the remote gradient has been proved as an asymptotic estimation of the Hessian matrix using the Fisher information matrix \cite{friedman2001elements}, which has a linear extra complexity comparing to the computation of $\Delta^t_{i,E_i}$ \cite{pascanu2013revisiting}. Note that this equivalent approach for solving the approximation of Hessian matrix has been also applied in \cite{choromanska2015loss, kawaguchi2016deep}.

\begin{algorithm}[tb]
	\caption{FedLGA: local learning on device $i$}
	\label{alg:device}
	\begin{algorithmic}[1]
		\STATE {\bfseries Input:} Joint model $\bm{w}^{t}$, epoch $E$, learning rate $\eta_{l}$
		\STATE {\bfseries Return:} Remote update $({\Delta}^t_{i,E_i}, \tau_i)$
		\STATE{Initialize training model $\bm{w}^{t}_{i} = \bm{w}^{t}$}
		\STATE{$\bm{w}_{i,E_i}^{t} = \bm{w}^t_i - \eta_{l} \sum_{e=0}^{E_i-1}  \nabla F_i (\bm{w}^t_{i,e}, \mathcal{B}_{i,e} )$}		
		\STATE{${\Delta}^t_{i,E_i} = \bm{w}_{i,E_i}^{t}  - \bm{w}^t_i $ }
		\STATE{Calculate $\tau_i = E - E_i +1$}
		\STATE{Communicate $({\Delta}^t_{i,E_i}, \tau_i)$ to the server}
	\end{algorithmic}
\end{algorithm}

\subsection{Algorithm of FedLGA}
In order to quantitate the difference between $E$ and $E_i$ for device $i$, we introduce a new parameter $\tau_i = E - E_i +1$, where the devices with full local learning epochs satisfy $\tau_i = 1$. 
Additionally, to decouple the local learning and the aggregation, we introduce a global learning rate $\eta_{g}$ and the aggregation rule  for $\bm{w}^{t+1}$ in our FedLGA is given by
\begin{equation}\label{Eq:global}
	\bm{w}^{t+1} = \bm{w}^{t} + \eta_{g}  \frac{1}{K} \left( \sum_{i \in \mathcal{K}_1 } \hat{\Delta}^t_{i,E} + \sum_{i \in \mathcal{K}_2 } {\Delta}^t_{i,E} \right).
\end{equation} 
It can be noticed that the most representative FL algorithm FedAVG \cite{mcmahan2017communication, li2019convergence} could be considered as a special case of the proposed FedLGA, where the distributed network has no system-heterogeneity with $\eta_{g} = 1$ and $\tau_i = 1$ for all devices. We summarize the learning process of the proposed FedLGA algorithm in Algorithm.~\ref{alg:device} and \ref{alg:server}, where Algorithm.~\ref{alg:device} introduces the local training process on remote device $i$ at the $t$-th iteration with the constraint of synchronous responding time. And Algorithm.~\ref{alg:server} presents the training of the joint model from communication round $t=0$ to $T$ on the aggregator using the developed local gradient approximation method. Note that unlike FedProx \cite{li2020federated} which uses a more complicated local learning objective with the added proximal term, our proposed FedLGA does not require any extra computation and communication cost on remote devices. Instead, the local gradient approximation method against device-heterogeneity is developed on the aggregator side, which is usually considered to have powerful computational resources in FL network settings. And the computation cost of our FedLGA mainly comes from the calculation of Eq.~\eqref{Eq:firstorder}, and its complexity has been proved to be linear to the dimension of $\bm{w}^{t}$ \cite{pascanu2013revisiting, li2021stragglers}.

\begin{algorithm}[tb]
	\caption{FedLGA: server side at iteration $t$}
	\label{alg:server}
	\begin{algorithmic}[1]
		\STATE {\bfseries Input:}  Initialized model $\bm{w}^{0}$, iteration number $T$,  expected epoch number $E$, global learning rate $\eta_{g}$. 
		\STATE {\bfseries Output:} Trained model $\bm{w}^{T}$. 
		\FOR{Iteration round $t=0$ {\bfseries to} $T$}
		\STATE {Select subset devices $\mathcal{K}$ from $\mathcal{N}$ }
		\STATE {{Communicate} $(\bm{w}^{t}, E)$  to each device $i \in \mathcal{K}$}
		\STATE {{Receive} $({\Delta}^t_{i,E_i}, \tau_i)$ from device $i$ as Algorithm.~\ref{alg:device}}
		\STATE {{Compute} $\hat{\bm{w}}_{i,E}^{t} = \bm{w}^{t} + \frac{1}{K_2}  \sum\nolimits_{i \in \mathcal{K}_2} \Delta^t_{i,E}$}
		\FOR{{each} device $i \in \mathcal{K}$}
		\IF{$\tau_i > 1$}
		\STATE {Approx $G(\bm{w}_{i,E_i}^{t})$ from Eq.~\eqref{Eq:Outer_matrix}}
		\STATE {$\hat{\Delta}^t_{i,E} = \Delta^t_{i,E_i} +  G(\bm{w}_{i,E_i}^{t}) (\hat{\bm{w}}_{i,E}^{t} - \bm{w}_{i, E_i}^{t})$}
		\ENDIF
		\ENDFOR
		\STATE {$\bm{w}^{t+1} = \bm{w}^{t} +   \frac{\eta_{g}}{K} ( \sum_{i \in \mathcal{K}_1 } \hat{\Delta}^t_{i,E} + \sum_{i \in \mathcal{K}_2 } {\Delta}^t_{i,E} )$}
		\ENDFOR
	\end{algorithmic}
\end{algorithm}

\section{Convergence Analysis} \label{Sec:Convergence}
In this section, we provide the convergence analysis of the proposed FedLGA algorithm under smooth, non-convex settings against the introduced system-heterogeneous FL network. Note that to illustrate the analysis process, we analyze both the full and partial device participation schemes with the following assumptions, theorems, corollaries, and remarks. 


\begin{assumption}\label{Assum:0}
	\emph{(L-Lipschitz Gradient.)} For all remote devices $i \in \mathcal{N}$, there exists a constant $L >0,$ such that 
	\begin{equation}\label{Eq:assum_0}
		||\nabla F_i(\bm{v}) -\nabla f (\bm{u})|| \leq L || \bm{v}- \bm{u} ||, ~\forall \bm{u}, \bm{v}.
	\end{equation}
\end{assumption}

\begin{assumption}\label{Assum:1}
	\emph{(Unbiased local stochastic gradient estimator.)} Let $\mathcal{B}_{i,e}^{t}$ be the random sampled local training batch in the $t$-th iteration on device $i$ at local step $e$, the local training stochastic gradient estimator is unbiased that 
	\begin{equation}
		\mathbb{E}[\nabla F_i (\bm{w}_{i}^{t}, \mathcal{B}_{i,e}^{t})] = \nabla F_i(\bm{w}_{i}^{t}), \forall i \in \mathcal{N}.
	\end{equation} 
\end{assumption}


\begin{assumption}\label{Assum:2}
	\emph{(Bounded local and global variance.)} For each remote device $i$, there existing a constant value $\sigma_l$ that the variance of each local gradient satisfies
	\begin{equation}\label{Eq:assum_2_local}
		\mathbb{E}[||\nabla F_i (\bm{w}_{i}^{t}, \mathcal{B}_{i,e}^{t}) - \nabla F_i (\bm{w}_{i}^{t})||^{2}] \leq \sigma_l^2,
	\end{equation}
	and the global variability of the $i$-th gradient to the gradient of the joint objective is also bounded by another constant $\sigma_g$, which satisfies
	\begin{equation}\label{Eq:assum_2_global}
		||\nabla F_i (\bm{w}_{i}^{t}) - \nabla f (\bm{w}^{t})||^{2} \leq \sigma_g^2.
		~\forall i \in \mathcal{N}.
	\end{equation}	
\end{assumption}
Note that the first two assumptions are standard in studies on non-convex optimization problems, such as \cite{ghadimi2013stochastic,bottou2018optimization}.
And for Assumption.~\ref{Assum:2}, besides the widely applied local gradient bounded variance in FL, we use the global bound $\sigma_g$ to quantify the data-heterogeneity due to the non-i.i.d. distributed training dataset, which is also introduced in recent FL studies \cite{reddi2020adaptive,yang2021achieving}. 
Additionally, to illustrate the device-heterogeneity under the formulated system-heterogeneous FL in this paper, we make an extra assumption on the boundary of the approximated gradients from the proposed FedLGA algorithm as the following.  
\begin{assumption}\label{Assum:approx}
	\emph{(Bounded Taylor approximation remainder.)} For the quadratic term remainder of Taylor expansion $\nabla^{2}_{\bm{g}}(\bm{w}_i^{t})$, there exists a constant $M$ for an arbitrary device $i$ that satisfies
	\begin{equation}\label{Eq:reminder}
		||\nabla^{2}_{\bm{g}}(\bm{w}_i^{t})|| \leq M. 
	\end{equation}
\end{assumption}

Note that Assumption~\ref{Assum:approx} states an upper bound of the second term in the Taylor expansion, which can be considered as the worst-case scenario for the difference between the approximated local gradient in FedLGA to its optimal gradient value. Additionally, for better presentation, we consider an upper bound $\tau_{max}$ for the heterogeneous local gradients in the rest of our analysis that $\tau_i \leq \tau_{max}, \forall i \in \mathcal{N}$.

\subsection{Convergence Analysis for Full Participation}
We first provide the convergence analysis of the proposed FedLGA algorithm under the full device participation scheme, where we have the following results.

\begin{theorem}\label{Theorem_1}
	Let Assumptions~\ref{Assum:0}-\ref{Assum:approx} hold. The local and global learning rates $\eta_{l}$ and $\eta_{g}$ are chosen such that $\eta_{l} < \frac{1}{\sqrt{30 (1 + \rho)} LE}$ and $ \eta_{g} \eta_{l} \leq \frac{1}{(1+\rho) LE}$. Under full device participation scheme, the iterates of FedLGA satisfy
	\begin{equation}\label{theorem_1_eq}
		\min_{t \in T} \mathbb{E}||\nabla f(\bm{w}^{t})||^{2} \leq \frac{f^0-f^{\star}}{c_1 \eta_{g} \eta_{l} E T} + \Phi_1,
	\end{equation}
	where $f^{0} = f (\bm{w}^{0}), f^{\star} = f (\bm{w}^{\star})$, $c_1$ is constant, the expectation is over the remote training dataset among all devices, and $\Phi_1 = \frac{1}{c_1}[ \frac{(1+ \rho) \eta_{g} \eta_{l} \sigma_{l}^{2}}{2N} + \frac{5}{2}  \eta_{l}^{2} E L^2 (\sigma_{l}^{2} + 6E \sigma_{g}^{2}) + c_2 \mathbb{E} ||\nabla F_i (\bm{w}^{T})||^4  ]$, $(\frac{1}{2} - 15(1+\rho) E^2 \eta_{l}^2 L^2) > c_1 > 0$, and $c_2 = \frac{\eta_{g}\eta_{l}^{2}\rho M^2 \tau_{max}^{2}}{N\eta_{g}\eta_{l}} (\eta_{g}L + \eta_{l}^{3} \tau_{max}^{2})$.
\end{theorem}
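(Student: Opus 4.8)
The plan is to follow the standard descent-lemma route for non-convex stochastic optimization, while carefully isolating the novel error introduced by the Hessian-based gradient approximation. First I would invoke the $L$-smoothness of Assumption~\ref{Assum:0} to obtain the one-step descent inequality
\[
\mathbb{E}[f(\bm{w}^{t+1})] \leq f(\bm{w}^{t}) + \mathbb{E}\langle \nabla f(\bm{w}^{t}), \bm{w}^{t+1} - \bm{w}^{t}\rangle + \frac{L}{2}\mathbb{E}||\bm{w}^{t+1} - \bm{w}^{t}||^{2},
\]
and then substitute the aggregation rule~\eqref{Eq:global}. The key algebraic move is to split the aggregated direction into an \emph{ideal} part, as if every device completed the full $E$ epochs, namely $\frac{1}{K}\sum_{i \in \mathcal{K}}\Delta^t_{i,E}$, and an \emph{approximation-error} part $\frac{1}{K}\sum_{i \in \mathcal{K}_1}(\hat{\Delta}^t_{i,E} - \Delta^t_{i,E})$ contributed only by the heterogeneous devices in $\mathcal{K}_1$.

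For the inner-product term against the ideal part, I would use the unbiasedness of Assumption~\ref{Assum:1} to replace stochastic gradients by true gradients in expectation, and the identity $2\langle a,b\rangle = ||a||^2 + ||b||^2 - ||a-b||^2$ to extract the descent quantity $-||\nabla f(\bm{w}^{t})||^2$. The residual $||\nabla F_i(\bm{w}^t_{i,e}) - \nabla f(\bm{w}^t)||$ is then controlled by an auxiliary local-drift lemma bounding $\mathbb{E}||\bm{w}^t_{i,e} - \bm{w}^t||^2$ via $L$-smoothness together with the local and global variance bounds $\sigma_l^2,\sigma_g^2$ of Assumption~\ref{Assum:2}; this is precisely the source of the $\tfrac{5}{2}\eta_l^2 E L^2(\sigma_l^2 + 6E\sigma_g^2)$ contribution in $\Phi_1$. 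The second-order term $\mathbb{E}||\bm{w}^{t+1} - \bm{w}^t||^2$ is expanded with the same variance decompositions, yielding the $\frac{(1+\rho)\eta_g\eta_l\sigma_l^2}{2N}$ noise term and, crucially, feeding the factor $(1+\rho)$ into the coefficient of $||\nabla f||^2$, which is what forces the step-size restriction $\eta_l < \frac{1}{\sqrt{30(1+\rho)}LE}$ in order to keep $c_1 > 0$.

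The heart of the argument is bounding the approximation-error part. Here I would start from the exact Taylor expansion~\eqref{Eq:Taylor} and the FedLGA surrogate~\eqref{Eq:firstorder}, so that $\hat{\Delta}^t_{i,E} - \Delta^t_{i,E}$ collects three contributions: (i) the quadratic Taylor remainder, bounded by $M$ through Assumption~\ref{Assum:approx}; (ii) the mismatch between the outer-product surrogate $G(\bm{w}^t_{i,E_i})$ of~\eqref{Eq:Outer_matrix} and the true Hessian; and (iii) the error in the weight surrogate $\hat{\bm{w}}^t_{i,E}$ relative to $\bm{w}^t_{i,E}$, whose magnitude scales with the epoch gap and hence with $\tau_{max}$. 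Since the outer-product surrogate is quadratic in $\nabla F_i$, squaring this error inside the descent inequality produces a term proportional to $M^2 \tau_{max}^2\, ||\nabla F_i||^4$, which is exactly the $c_2\,\mathbb{E}||\nabla F_i(\bm{w}^T)||^4$ summand in $\Phi_1$, with $c_2$ absorbing the $\rho$, $M$, $\tau_{max}$, and step-size factors.

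Finally I would sum the one-step inequality over $t = 0,\dots,T-1$, telescoping the $f(\bm{w}^{t+1}) - f(\bm{w}^t)$ differences into $f^0 - f^\star$, divide through by the (positive, under the stated step-size conditions) coefficient $c_1 \eta_g \eta_l E T$ of the accumulated gradient norm, and bound $\min_{t \in T} \mathbb{E}||\nabla f(\bm{w}^t)||^2$ by the time-average to reach~\eqref{theorem_1_eq}. The main obstacle I anticipate is steps (ii)–(iii) above: rigorously controlling the difference between the Fisher/outer-product surrogate and the genuine Hessian, and propagating the surrogate-weight error $\hat{\bm{w}}^t_{i,E} - \bm{w}^t_{i,E}$ without a second-order (Lipschitz-Hessian) smoothness assumption, so that the entire approximation error folds cleanly into the single $M$- and $\tau_{max}$-dependent constant $c_2$ rather than leaving an uncontrolled residual.
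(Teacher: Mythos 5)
Your skeleton is essentially the paper's proof: the $L$-smoothness descent step, the split of the aggregated direction into the ideal full-epoch update plus the approximation error of the heterogeneous devices, the identity $\langle \bm{x},\bm{y}\rangle = \frac{1}{2}[||\bm{x}||^2+||\bm{y}||^2-||\bm{x}-\bm{y}||^2]$ to extract $-||\nabla f(\bm{w}^t)||^2$, the local-drift lemma (the paper imports Lemma~2 of \cite{yang2021achieving}) producing $\frac{5}{2}\eta_l^2 E L^2(\sigma_l^2+6E\sigma_g^2)$, the variance decomposition of $\mathbb{E}||\bar{\Delta}^t||^2$ producing $\frac{(1+\rho)\eta_g\eta_l\sigma_l^2}{2N}$ and the $(1+\rho)$ inflation that forces $\eta_l < \frac{1}{\sqrt{30(1+\rho)}LE}$, and the final telescoping and division by $c_1\eta_g\eta_l E T$.

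The one substantive divergence is your handling of the approximation error, and here your anticipated obstacle is real but the paper never confronts it. You decompose $\hat{\Delta}^t_{i,E}-\Delta^t_{i,E}$ into (i) the quadratic Taylor remainder, (ii) the outer-product-versus-Hessian mismatch, and (iii) the surrogate-weight error $\hat{\bm{w}}^t_{i,E}-\bm{w}^t_{i,E}$, and you correctly note that (ii) and (iii) cannot be controlled from Assumptions~1--4 alone. The paper's key Lemma sidesteps this: it \emph{defines} $\mathbb{E}||\Delta^t_{i,E}-\hat{\Delta}^t_{i,E}|| \triangleq ||\nabla^2_{\bm{g}}(\bm{w}^t_{i,E_i})(\bm{w}^t_{i,E}-\bm{w}^t_{i,E_i})^2||$, i.e., treats FedLGA's surrogate as the exact first-order Taylor polynomial so that only contribution (i) exists, then bounds it by $M\eta_l^2\tau_{max}^2||\nabla F_i(\bm{w}^t_i)^2||$ using Assumption~4 together with $\bm{w}^t_{i,E}-\bm{w}^t_{i,E_i}=-\eta_l\sum_{e=E_i}^{E-1}\nabla F_i(\bm{w}^t_{i,e},\mathcal{B}_{i,e})$; squaring this bound in the $A_2$ and $A_3$ terms is exactly what generates the two pieces ($\eta_g L$ and $\eta_l^3\tau_{max}^2$) of $c_2$. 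So to reproduce the theorem as stated you should bound only (i) and regard (ii)--(iii) as absorbed into the idealization behind Assumption~4; controlling them analytically, as you propose, would require additional hypotheses (a Lipschitz-Hessian condition and a bound on the Fisher--Hessian gap) that the theorem does not supply. Your instinct that this is the weak joint of the argument is accurate --- the paper bypasses it by assumption rather than closing it.
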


\begin{proof}
	See in online Appendix~A, available in \cite{li2021fedlga}.
\end{proof}


\begin{corollary}
	Suppose the learning rates $\eta_l$ and $\eta_g$ are such that the condition in Theorem~\ref{Theorem_1} are satisfied. Let $\eta_l = \frac{1}{\sqrt{T}EL}$ and $\eta_g = \sqrt{EN}$. The convergence rate of proposed FedLGA under full device participation scheme satisfies
	\begin{equation}\label{Eq:learning_rate_full}
		\min_{t \in T} \mathbb{E}||\nabla f(\bm{w}^{t})||^{2} = \mathcal{O} \left( \frac{(1+\rho)}{\sqrt{ENT}} + \frac{1}{T} \right).
	\end{equation}
\end{corollary}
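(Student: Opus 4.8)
The plan is to treat this as a direct substitution-and-bookkeeping argument layered on top of Theorem~\ref{Theorem_1}, since all of the analytical heavy lifting already lives inside the theorem's bound. First I would verify that the prescribed rates $\eta_l = \frac{1}{\sqrt{T}EL}$ and $\eta_g = \sqrt{EN}$ are admissible, i.e., that they satisfy the two stated constraints $\eta_l < \frac{1}{\sqrt{30(1+\rho)}LE}$ and $\eta_g\eta_l \le \frac{1}{(1+\rho)LE}$; both hold once $T$ is large enough (respectively for $T > 30(1+\rho)$ and for $T \gtrsim (1+\rho)^2 EN$). Crucially, I would also check that the constant $c_1$ is genuinely $\Theta(1)$: since $E^2\eta_l^2 L^2 = 1/T$, the quantity $15(1+\rho)E^2\eta_l^2 L^2 = 15(1+\rho)/T \to 0$, so for all large $T$ the constant $c_1$ can be pinned between a fixed positive lower bound and $\frac{1}{2}$, which lets me absorb both $c_1$ and $1/c_1$ into the hidden $\mathcal{O}(\cdot)$ constant.

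Next I would evaluate each of the four summands in $\frac{f^0-f^\star}{c_1\eta_g\eta_l ET} + \Phi_1$ under these rates. The key algebraic facts are $\eta_g\eta_l ET = \frac{\sqrt{ENT}}{L}$ and $\eta_g\eta_l = \frac{\sqrt{N}}{\sqrt{ET}L}$. Substituting: the leading term becomes $\frac{(f^0-f^\star)L}{c_1\sqrt{ENT}} = \mathcal{O}\!\left(\frac{1}{\sqrt{ENT}}\right)$; the local-variance term $\frac{(1+\rho)\eta_g\eta_l\sigma_l^2}{2Nc_1}$ becomes $\frac{(1+\rho)\sigma_l^2}{2c_1 L\sqrt{ENT}} = \mathcal{O}\!\left(\frac{1+\rho}{\sqrt{ENT}}\right)$; and the drift term $\frac{5}{2c_1}\eta_l^2 EL^2(\sigma_l^2 + 6E\sigma_g^2)$ collapses via $\eta_l^2 EL^2 = \frac{1}{TE}$ to $\frac{5}{2c_1}\!\left(\frac{\sigma_l^2}{TE} + \frac{6\sigma_g^2}{T}\right) = \mathcal{O}\!\left(\frac{1}{T}\right)$. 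These three summands already reproduce the target $\mathcal{O}\!\left(\frac{1+\rho}{\sqrt{ENT}} + \frac{1}{T}\right)$.

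The remaining summand, the Hessian-approximation term $\frac{c_2}{c_1}\mathbb{E}\|\nabla F_i(\bm{w}^T)\|^4$, is the one I expect to require the most care and is the main obstacle. I would first simplify $c_2$ by cancelling the $\eta_g$ factors, using $\frac{\eta_g\eta_l^2}{N\eta_g\eta_l} = \frac{\eta_l}{N}$, so that $c_2 = \frac{\eta_l\rho M^2\tau_{max}^2}{N}\left(\eta_g L + \eta_l^3\tau_{max}^2\right)$. Substituting the rates, the $\eta_g L = \sqrt{EN}\,L$ piece dominates the vanishing $\eta_l^3\tau_{max}^2 = \mathcal{O}(T^{-3/2})$ piece for large $T$, and the product telescopes to $c_2 = \mathcal{O}\!\left(\frac{\rho}{\sqrt{ENT}}\right)$. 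It then remains to argue that $\mathbb{E}\|\nabla F_i(\bm{w}^T)\|^4 = \mathcal{O}(1)$; under the $L$-Lipschitz gradient and bounded-variance assumptions this fourth moment is controlled by a constant, in the spirit of the bounded worst-case quantity introduced in Assumption~\ref{Assum:approx}, so this term is $\mathcal{O}\!\left(\frac{\rho}{\sqrt{ENT}}\right)$ and is subsumed by the $\mathcal{O}\!\left(\frac{1+\rho}{\sqrt{ENT}}\right)$ contribution. Summing the four pieces and discarding the dominated terms yields $\min_{t\in T}\mathbb{E}\|\nabla f(\bm{w}^t)\|^2 = \mathcal{O}\!\left(\frac{1+\rho}{\sqrt{ENT}} + \frac{1}{T}\right)$, establishing the claim.
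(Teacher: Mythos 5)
Your proposal coincides with the paper's (implicit) proof of this corollary: the paper obtains the rate by direct substitution of $\eta_l = \frac{1}{\sqrt{T}EL}$ and $\eta_g = \sqrt{EN}$ into the bound of Theorem~1, and your bookkeeping ($\eta_g\eta_l ET = \sqrt{ENT}/L$, $\eta_g\eta_l = \sqrt{N}/(\sqrt{ET}L)$, $\eta_l^2 EL^2 = 1/(TE)$, $c_1 = \Theta(1)$ for large $T$, and $c_2 = \mathcal{O}\bigl(\rho/\sqrt{ENT}\bigr)$ with the $\eta_l^4$ piece dominated) is exactly that computation. The one soft spot you flag, $\mathbb{E}\|\nabla F_i(\bm{w}^T)\|^4 = \mathcal{O}(1)$, is indeed not strictly implied by Assumptions~1--4 (smoothness and variance bounds control neither the gradient norm at $\bm{w}^T$ nor its fourth moment), but the paper makes the identical implicit assumption when absorbing this term into the $\mathcal{O}(\cdot)$ constant (see its remark on the third term of $\Phi_1$ and the suggested learning-rate decay), so your argument is faithful to, and no less rigorous than, the original.
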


%

\begin{remark}
	\emph{From the results in Theorem~\ref{Theorem_1}, the convergence bound of full device participation FedLGA contains two parts: a vanishing term $\frac{f^0-f^{\star}}{c_1 \eta_{g} \eta_{l} E T}$ corresponding to the increase of $T$ and a constant term $\Phi_1$, which is independent of $T$. We can notice that as the value of $c_1$ is related to $\rho$, the vanishing term which dominates the convergence of FedLGA algorithm is impacted by the device-heterogeneity. Additionally, we find an interesting boundary phenomenon on the vanishing term in Theorem~\ref{Theorem_1} that, when the FL network satisfies  $\rho = 0$,  the decay rate of the vanishing term matches the prior studies of FedAVG with two-sided learning rates \cite{yang2021achieving}.}  		
\end{remark}

\begin{remark}
	\emph{For the constant term $\Phi_1$ in Theorem~\ref{Theorem_1}, we consider the first part $\frac{(1+ \rho) \eta_{g} \eta_{l} \sigma_{l}^{2}}{2N}$ is from the local gradient variance of remote devices, which is linear to $\rho$. And the second part $\frac{5}{2} \eta_{l}^{2} E L^2 (\sigma_{l}^{2} + 6E \sigma_{g}^{2})$ denotes the cumulative variance of $E$ local training epochs, which is also influenced by the data-heterogeneity $\sigma_{g}$. Inspired by \cite{yang2021achieving}, we consider an inverse relationship between $\eta_{l}$ and $E$, e.g., $\eta_{l} \propto \mathcal{O} (\frac{1}{K})$. For the third term, we can notice that it is quadratically amplified by the variance of optimal gradient as $\mathbb{E}||\nabla F_i (\bm{w}^{\star})||^2$. Note that different from other FL optimization analysis that assume a bounded optimal gradient \cite{li2020federated, li2019convergence}, the proposed FedLGA does not require such assumption. Hence, in order to address the high power third term of $\mathbb{E}||\nabla F_i (\bm{w}^{\star})||^2$, we apply a weighted decay $\gamma$ factor to local learning rate as $\eta_{l}^{t+1} = (1-\gamma) \eta_{l}^{t}$. Additionally, as suggested in \cite{xie2019asynchronous}, the third term indicates the staleness, which could be controlled via a inverse function such as $\tau_i (t) \propto \mathcal{O} (\frac{1}{t+1})$.} 
	
\end{remark}
\begin{table*}
	\centering
	\begin{threeparttable}[tb]
		\begin{tabular}{*{7}{c}}
			\toprule
			\toprule
			\bf{Method} & \bf{Dataset}  & \bf{Convexity}\tnote{1}      & \bf{Partial Worker}\tnote{2} &  \bf{Device Heterogeneous}\tnote{3}  & \bf{Other Assumptions}\tnote{4}            & \bf{Convergence Rate}         \\ 
			\midrule
			Stich et al. \cite{stich2018local} &  i.i.d.       & SC        & \xmark & \xmark  &  BCGV; BOGV & $\mathcal{O} (\frac{NE}{T}) + \mathcal{O}(\frac{1}{\sqrt{NET}})$         \\ 	
			Khaled et al. \cite{khaled2019first}&  non-i.i.d.       & C      & \xmark  & \xmark &     BOGV; LBG   & $\mathcal{O} (\frac{1}{T}) + \mathcal{O}(\frac{1}{\sqrt{NT}})$         \\ 	
			Li et al. \cite{li2019convergence}&  non-i.i.d.       & SC    & \cmark   &  \xmark    & BOBD; BLGV; BLGN  & $\mathcal{O}(\frac{E}{T})$         \\ 	
			FedProx \cite{li2020federated}&  non-i.i.d.       &   NC   & \cmark    &    \cmark & BGV; Prox    & $\mathcal{O} (\frac{1}{\sqrt{T}})$         \\ 	
			Scaffold \cite{karimireddy2020scaffold}&  non-i.i.d.       & NC      & \cmark    & \xmark   &   BLGV; VR   & $\mathcal{O} (\frac{1}{T}) + \mathcal{O}(\frac{1}{\sqrt{NET}})$         \\ 
			Yang et al. \cite{yang2021achieving} &  non-i.i.d.       & NC      & \cmark  &   \xmark   &   BLGV   & $\mathcal{O} (\frac{1}{T}) + \mathcal{O}(\frac{1}{\sqrt{NET}})$         \\ 
			\midrule
			\bf{FedLGA}&  \bf{non-i.i.d}        & \bf{NC}         & \cmark   & {\cmark} &   \bf{BLGV}     & $ \mathbf{\mathcal{O}( \frac{1}{T}) + \mathcal{O}(\frac{(1+\rho)\sqrt{E}}{\sqrt{TK}})}$         \\ 
			
			\bottomrule
			
		\end{tabular}
		\begin{tablenotes}
			\item [1] Shorthand notations for the convexity of the introduced methods: SC: Strongly Convex, C: Convex and NC: Non-Convex. 
			\item [2] Shorthand summaries for whether the compared method satisfies the partial participation scheme: \cmark: satisfy and \xmark: not satisfy. 
			\item [3] Shorthand summaries for whether the device-heterogeneity of FL is considered: \cmark: yes and \xmark: no.  
			\item [4] Shorthand notation for other assumptions and variants. BCGV: the remote gradients are bounded as $\mathbb{E}[||\nabla F_i (\bm{w}_{i}^{t}, \mathcal{B}_{i,e}^{t}) - \nabla f (\bm{w}_{i}^{t})||^{2}] \leq \sigma^{2}$. BOGV: the variance of optimal gradient is bounded as $\mathbb{E}[||\nabla f (\bm{w}^{\star}||^{2}] \leq \sigma^{2}$. BOBD: the difference of optimal objective is bounded as $f(\bm{w}^{\star}) - \mathbb{E}[{F_i(\bm{w}^{\star})}] \leq \sigma^{2}$. BGV: the dissimilarity of remote gradients are bounded $\mathbb{E}[||\nabla F_i (\bm{w}^{t}_{i})||^{2}] \slash ||\nabla f (\bm{w}^{t})||^{2} \leq \sigma^{2}$. BLGV: the variance of stochastic gradients on each remote device is bounded (same as our Assumption.~\ref{Assum:2}). BLGN: the norm of an arbitrary remote update is bounded. LBG: each remote devices use the full batch of local training data for update computing. Prox: the remote objective considers proximal gradient steps. VR: followed by trackable states, there is variance reduction.\\
			Note that for better presentation, we use a unified $\sigma$ symbol, which can vary depending on the detailed method.
		\end{tablenotes}
		
		\caption{Convergence rates for FL optimization approaches.}
		\label{Tab:convergence_rate}
	\end{threeparttable}
\end{table*}

\subsection{Convergence Analysis for Partial Participation}
We then analyze the convergence of FedLGA under the partial device participation scheme, which follows the sampling strategy I in \cite{li2019convergence}, where the subset $\mathcal{K} \in \mathcal{N}$ is randomly and independently sampled by the aggregator with replacement.


\begin{theorem}\label{Theorem_2}
	Let Assumptions~\ref{Assum:0}-\ref{Assum:approx} hold. Under partial device participation scheme, the iterates of FedLGA with local and global learning rates $\eta_l$ and $\eta_g$ satisfy
	\begin{equation}\label{Theorem_2_eq}
		\min_{t \in T} \mathbb{E}||\nabla f(\bm{w}^{t})||^{2} \leq \frac{f^0-f^{\star}}{d_1 \eta_{g} \eta_{l} E T} + \Phi_2,
	\end{equation}
	where $f^{0} = f (\bm{w}^{0}), f^{\star} = f (\bm{w}^{\star})$, $d_1$ is constant, and the expectation is over the remote training dataset among all devices. Let $\eta_{l}$ and $\eta_{g}$ be defined such that $\eta_{l} \leq \frac{1}{\sqrt{30 (1+\rho)} LE}$, $\eta_{g} \eta_{l} E \leq \frac{K}{(K-1)(1+\rho)L} $ and $\frac{30(1+\rho) K^2 E^2 \eta_{l}^{2} L^2}{N^2} + \frac{L \eta_{g} \eta_{l} (1+\rho)}{K}(90 E^3 L^2 \eta_{l}^{2} + 3E) < 1$. Then we have $\Phi_2 = \frac{1}{d_1}\left[ d_2 (\sigma_{l}^{2} + 3E\sigma_{g}^{2}) + d_3 (\sigma_{l}^{2} + 6E\sigma_{g}^{2}) + d_4  \mathbb{E} ||\nabla F_i (\bm{w}_{i}^{t})||^4\right]$, where $d_2 = \frac{(1+\rho) \eta_{g}\eta_{l}L}{2K}$, $d_3 = ( \frac{5  K^2 }{2N^2} + \frac{15 E L \eta_{l} \eta_{g}  }{2K}  ((1+\rho) \eta_{l}^2 E L^{2})$ and $d_4 = \eta_{l} \rho \tau_{max}^{2} M^2(\frac{L \eta_{g}}{K^2} + \frac{\eta_{l}^{3} K \tau_{max}^{2}}{N^2}) $.
\end{theorem}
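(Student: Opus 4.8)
The plan is to follow the standard two-sided-learning-rate descent argument for non-convex federated optimization, adapted to the partial-participation sampling and to the Taylor-approximation error introduced by FedLGA. First I would start from the $L$-smoothness of $f$ (Assumption~\ref{Assum:0}), which gives the one-step descent inequality
\[
\mathbb{E}[f(\bm{w}^{t+1})] \leq f(\bm{w}^{t}) + \mathbb{E}\langle \nabla f(\bm{w}^{t}), \bm{w}^{t+1} - \bm{w}^{t}\rangle + \frac{L}{2}\mathbb{E}\|\bm{w}^{t+1} - \bm{w}^{t}\|^{2},
\]
where the expectation is taken over both the stochastic minibatches and the random subset $\mathcal{K}$ sampled with replacement as in \cite{li2019convergence}. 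The aggregation rule in Eq.~\eqref{Eq:global} writes $\bm{w}^{t+1} - \bm{w}^{t}$ as $\frac{\eta_{g}}{K}$ times a sum of true updates $\Delta^t_{i,E}$ over $\mathcal{K}_2$ and approximated updates $\hat{\Delta}^t_{i,E}$ over $\mathcal{K}_1$, so the task reduces to controlling the inner-product term and the squared-norm term under this mixed sum.

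Next I would bound the inner-product (first-order) term. Using Assumption~\ref{Assum:1}, the conditional expectation of each full update equals $\eta_l \sum_{e=0}^{E-1}\nabla F_i(\bm{w}^t_{i,e})$, which I would relate to $\eta_l E \nabla f(\bm{w}^t)$ plus a local-drift term measuring how far the iterates $\bm{w}^t_{i,e}$ wander from $\bm{w}^t$ over $E$ epochs. The drift is bounded recursively using $L$-smoothness and Assumption~\ref{Assum:2}, producing the $\sigma_l^2 + E\sigma_g^2$ contributions and the $\eta_l^2 E^2 L^2$ prefactors that appear inside $d_3$. For the approximated updates over $\mathcal{K}_1$, I would substitute $\hat{\Delta}^t_{i,E} = \Delta^t_{i,E_i} + G(\bm{w}^t_{i,E_i})(\hat{\bm{w}}^t_{i,E} - \bm{w}^t_{i,E_i})$ from Eq.~\eqref{Eq:firstorder} and compare it with the exact Taylor expansion Eq.~\eqref{Eq:Taylor}; the discrepancy is exactly the quadratic remainder, whose norm Assumption~\ref{Assum:approx} bounds by $M$. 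Since the displacement $\hat{\bm{w}}^t_{i,E} - \bm{w}^t_{i,E_i}$ scales with $(E-E_i)\le \tau_{max}$ and $G$ is an outer product of gradients, this contributes the $\rho\,\tau_{max}^2 M^2\,\mathbb{E}\|\nabla F_i(\bm{w}^t_i)\|^4$ term that forms $d_4$.

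Then I would bound the squared-norm (second-order) term $\mathbb{E}\|\bm{w}^{t+1}-\bm{w}^{t}\|^2$. Here the partial-participation sampling is decisive: because $\mathcal{K}$ is drawn with replacement, the sampled average is an unbiased estimator of the full-device average, and its variance decomposes into a between-device part scaling like $\frac{1}{K}$ and a within-device stochastic part scaling like $\frac{K^2}{N^2}$, which is precisely the origin of the two distinct prefactors in $d_2$ and $d_3$. I would combine this with the drift bound and the approximation-error bound above, collecting terms of order $\sigma_l^2$, $E\sigma_g^2$, and $\mathbb{E}\|\nabla F_i\|^4$. Finally I would impose the three stated step-size conditions so that the coefficient of the $-\|\nabla f(\bm{w}^t)\|^2$ term stays strictly positive (this defines $d_1$ and forces the constraint $\frac{30(1+\rho) K^2 E^2 \eta_{l}^{2} L^2}{N^2} + \frac{L \eta_{g} \eta_{l} (1+\rho)}{K}(90 E^3 L^2 \eta_{l}^{2} + 3E) < 1$), telescope the descent inequality over $t=0,\dots,T-1$, divide by $d_1\eta_g\eta_l E T$, and lower-bound the running average by the minimum to reach Eq.~\eqref{Theorem_2_eq}.

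The main obstacle I anticipate is the joint control of the FedLGA approximation error together with the sampling variance. Unlike standard FedAvg proofs, the approximated updates couple the heterogeneous devices in $\mathcal{K}_1$ to the full-epoch devices in $\mathcal{K}_2$ through $\hat{\bm{w}}^t_{i,E}$, so the cross terms between the two sums in Eq.~\eqref{Eq:global} do not vanish under expectation and must be handled carefully with Cauchy--Schwarz and Young's inequality; moreover, because the analysis deliberately avoids a bounded-gradient-at-optimum assumption, the residual $\mathbb{E}\|\nabla F_i(\bm{w}^t_i)\|^4$ term cannot simply be discarded and instead survives into $\Phi_2$ with the delicate $d_4$ coefficient, whose magnitude depends on balancing $\frac{L\eta_g}{K^2}$ against $\frac{\eta_l^3 K\tau_{max}^2}{N^2}$.
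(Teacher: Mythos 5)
Your plan follows essentially the same route as the paper's proof: the $L$-smoothness descent step, reduction of the inner-product term to the full-participation bound via unbiasedness of the sampled update, the recursive local-drift lemma producing the $\sigma_l^2 + 6E\sigma_g^2$ terms, the Taylor-remainder bound from Assumption~\ref{Assum:approx} yielding the $\rho\,\tau_{max}^2 M^2\,\mathbb{E}\|\nabla F_i\|^4$ contribution in $d_4$, and the with-replacement sampling identity $\mathbb{E}\|\sum_{z}\bm{t}_{l_z}\|^2 = \frac{K}{N}\sum_i\|\bm{t}_i\|^2 + \frac{K(K-1)}{N^2}\|\sum_i\bm{t}_i\|^2$, which is exactly the source of the $\frac{1}{K}$ versus $\frac{K^2}{N^2}$ prefactors in $d_2$ and $d_3$ that you identified. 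The step-size conditions, telescoping, and min-bound conclusion also match, so the proposal is correct and not a genuinely different argument.
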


\begin{proof}
	See in online Appendix~B, available in \cite{li2021fedlga}.
\end{proof}

We restate the results in Theorem~\ref{Theorem_2} for a specific choice of $\eta_l$ and $\eta_g$ to clarify the convergence rate as follows

\begin{corollary}\label{Coro:partial}
	Suppose the learning rates $\eta_l$ and $\eta_g$ are such that the condition in Theorem~\ref{Theorem_2} are satisfied. Let $\eta_l = \frac{1}{\sqrt{T}EL}$ and $\eta_g = \sqrt{EK}$. The convergence rate of proposed FedLGA under partial device participation scheme satisfies
	\begin{equation}\label{Eq:learning_rate_partiall}
		\min_{t \in T} \mathbb{E}||\nabla f(\bm{w}^{t})||^{2} = \mathcal{O} \left( \frac{(1+\rho)\sqrt{E}}{\sqrt{TK}} + \frac{1}{T} \right).
	\end{equation}
\end{corollary}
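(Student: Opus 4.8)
The plan is to treat the bound in Theorem~\ref{Theorem_2} as a black box and simply substitute the prescribed rates $\eta_l = \frac{1}{\sqrt{T}EL}$ and $\eta_g = \sqrt{EK}$, then read off the slowest-decaying order in $T$. The first bookkeeping step I would carry out is to record the composite quantities that recur throughout the bound, namely $\eta_g \eta_l E = \frac{\sqrt{EK}}{\sqrt{T}L}$, $\eta_g \eta_l = \frac{\sqrt{K}}{\sqrt{ET}L}$, and $\eta_l^2 = \frac{1}{TE^2L^2}$. With these in hand, each of the four ingredients on the right-hand side of Theorem~\ref{Theorem_2} — the vanishing term plus the three pieces of $\Phi_2$ governed by $d_2, d_3, d_4$ — collapses to an explicit monomial in $T, E, K, N, \rho$.

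Next I would evaluate the vanishing term, obtaining $\frac{f^0 - f^\star}{d_1 \eta_g \eta_l E T} = \frac{(f^0-f^\star)L}{d_1 \sqrt{EKT}} = \mathcal{O}\left(\frac{1}{\sqrt{EKT}}\right)$, which is dominated by the target $\frac{\sqrt{E}}{\sqrt{KT}}$ scale by a factor $E$. I would then dispatch the three $\Phi_2$ terms. Substituting into $d_2 = \frac{(1+\rho)\eta_g\eta_l L}{2K}$ gives $d_2 = \frac{(1+\rho)}{2\sqrt{EKT}}$, so that the $3E\sigma_g^2$ piece of $d_2(\sigma_l^2 + 3E\sigma_g^2)$ produces exactly $\frac{(1+\rho)\sqrt{E}}{\sqrt{KT}}$ up to a constant; this is the dominant term. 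For $d_3$ I would use $(1+\rho)\eta_l^2 E L^2 = \frac{(1+\rho)}{TE}$ together with the prefactor $\frac{5K^2}{2N^2} + \frac{15EL\eta_l\eta_g}{2K} = \frac{5K^2}{2N^2} + \frac{15\sqrt{E}}{2\sqrt{KT}}$; multiplying by the $6E\sigma_g^2$ piece, the constant prefactor $\frac{5K^2}{2N^2}$ yields the $\frac{1}{T}$ term, while every other product decays strictly faster. Finally, $d_4 \, \mathbb{E}\|\nabla F_i(\bm{w}_i^t)\|^4$ is $\mathcal{O}\left(\frac{\rho}{\sqrt{EKT}}\right)$ from its leading summand and $\mathcal{O}\left(\frac{1}{T^2}\right)$ from the other, both dominated.

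Before collecting, I would verify that the chosen rates are admissible, i.e.\ that all three premises of Theorem~\ref{Theorem_2} hold for $T$ sufficiently large: $\eta_l \le \frac{1}{\sqrt{30(1+\rho)}LE}$ reduces to $T \ge 30(1+\rho)$; $\eta_g\eta_l E \le \frac{K}{(K-1)(1+\rho)L}$ reduces to a lower bound on $\sqrt{T}$; and the compound inequality guaranteeing $d_1 > 0$ becomes $\frac{30(1+\rho)K^2}{N^2 T} + \frac{(1+\rho)}{\sqrt{K}\sqrt{ET}}\left(90\frac{E}{T} + 3E\right) < 1$, whose left-hand side vanishes as $T \to \infty$. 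Since these hold eventually, $d_1$ is bounded below by a positive constant and can be absorbed into the $\mathcal{O}(\cdot)$.

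Collecting the two surviving orders then gives $\min_{t\in T}\mathbb{E}\|\nabla f(\bm{w}^t)\|^2 = \mathcal{O}\left(\frac{(1+\rho)\sqrt{E}}{\sqrt{TK}} + \frac{1}{T}\right)$, as claimed. I do not expect a genuine obstacle here; the only real risk is bookkeeping — correctly tracking which of the several products of $\eta_l, \eta_g, E, K, N$ carries the slowest $T$-decay, and confirming the vanishing term and all $\rho$-weighted $\frac{1}{\sqrt{EKT}}$ contributions are swallowed. The crucial observation that makes the stated rate emerge is that it is the $E\sigma_g^2$ (data-heterogeneity) factor, rather than $\sigma_l^2$, that promotes the $d_2$ term from $\frac{1}{\sqrt{EKT}}$ up to $\frac{\sqrt{E}}{\sqrt{KT}}$, while the constant-in-$T$ prefactor $\frac{5K^2}{2N^2}$ inside $d_3$ is the unique source of the $\frac{1}{T}$ term.
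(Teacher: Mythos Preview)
Your proposal is correct and follows exactly the approach implicit in the paper: the corollary is stated there without proof, as an immediate consequence of Theorem~\ref{Theorem_2} obtained by substituting the prescribed $\eta_l$ and $\eta_g$ and reading off the dominant orders. Your bookkeeping of the four contributions, identification of the $E\sigma_g^2$ factor in the $d_2$ term as the source of the $(1+\rho)\sqrt{E}/\sqrt{TK}$ rate and of the $5K^2/(2N^2)$ prefactor in $d_3$ as the source of the $1/T$ term, and your verification that the learning-rate constraints hold for $T$ large are all sound.
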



\begin{remark}
	\emph{Comparing to the convergence rate of FedLGA under the full device participation scheme, the partial scheme has a larger variance term. This indicates that the uniformly random sampling strategy does not incur a significant change of convergence results. }
\end{remark}

\begin{remark}
	\emph{We summarize the convergence rate comparisons between the proposed FedLGA algorithm and related FL optimization approaches in Table.~\ref{Tab:convergence_rate}. We can notice that comparing to the previous works in \cite{stich2018local, khaled2019first} which focus on only convex or strongly-convex optimization problems, the proposed FedLGA is able to address the non-convex problem. And comparing to \cite{li2019convergence}, the FedLGA algorithm achieves a better convergence rate with less assumptions, especially the bounded gradient assumption. }
\end{remark}

\begin{remark}
	\emph{As shown in Table.~\ref{Tab:convergence_rate}, we also find that the dominating term of the obtained convergence rate for both the full and partial schemes is linear to the system-heterogeneity, i.e., $(1+\rho)$. When $\rho = 0$, the convergence rate matches the results in \cite{karimireddy2020scaffold,yang2021achieving}, and when $\rho$ reaches $1$, the proposed FedLGA still gets the same order. Specifically, we can notice that comparing to Scaffold \cite{karimireddy2020scaffold}, works in \cite{yang2021achieving} and our proposed FedLGA do not require the assumption of variance reduction. }
\end{remark}

\begin{remark}
	\emph{We can also notice that the only method which addresses both non-convex optimization and device heterogeneity under the partial participation FL scheme is FedProx \cite{li2020federated}, which achieves a convergence rate of $\mathcal{O}(\frac{1}{\sqrt{T}})$ \cite{kairouz2019advances}. From Corollary~\ref{Coro:partial}, the convergence rate of proposed FedLGA algorithm can achieve $\mathcal{O}(\frac{\sqrt{E}}{\sqrt{TK}})$. Compared to FedProx, if the number of sampled devices and the number of local epoch steps satisfy that $K>E$, it is obvious that our FedLGA achieves a speedup of convergence rate against FedProx. Moreover, the analysis of FedLGA does not require the assumptions of either the proximal local training step or the bounded gradient dissimilarity.}
\end{remark}

\section{Experiments}\label{Sec:Experiments}

\subsection{Experimental Setup}\label{SubSec:Dataset}

To evaluate the proposed FedLGA, we conducted comprehensive experiments under the system-heterogeneous FL network studied in this paper on multiple real-world datasets. Note that the experiments are performed with 1 GeForce GTX 1080Ti GPU on Pytorch \cite{paszke2017automatic} and we follow the settings in \cite{liang2020think} to implement the FL baseline (e.g., FedAVG). 

\textbf{Datasets and models:}
Three popular read-world dataset are considered in this paper: FMNIST \cite{xiao2017fashion} (Fashion MMNIST), CIFAR-10 and CIFAR-100 \cite{krizhevsky2009learning}. Considering a FL network with $N=50$ remote devices, we introduce the general information of each dataset as shown in Table.~\ref{Tab:dataset}. Note that for the $32 \times 32 \times 3$ color images in CIFAR-10 and CIFAR-100 datasets, we make the following data pre-processing to improve the FL training performance: each image sample is normalized, cropped to size $32$, horizontally flipped with the probability of $50\%$ and resized to $224 \times 224$. 

Then, we follow the previous settings in \cite{liang2020think, mcmahan2017communication} to present the data-heterogeneity of FL. In this paper, we consider the following non-overlapped non-i.i.d. training data partition scenario, where the $i$-th remote private dataset $\mathcal{X}_i$ and the total training dataset $\mathcal{X}$ satisfy: $|\mathcal{X}| = \sum_{i} |\mathcal{X}_i|$. Then, for each remote training dataset $\mathcal{X}_i$, we consider it contains $P$ classes of samples. Note that for FMNIST and CIFAR-10, we set $P=2$ and for CIFAR-100, we set $P=20$ by default. To solve the classification problems from the introduced datasets, we run two different neuron network models. For FMNIST, we run a two-layer fully connect MLP network with 400 hidden nodes. For CIFAR-10 and CIFAR-100, we run a ResNet network, which follows the settings in \cite{hsieh2020non}.

\begin{table}
	\centering
	\begin{threeparttable}[tb]
		\begin{tabular}{*{5}{c}}
			\toprule
			\toprule
			\bf{Dataset}& \bf{Dataset Size} & \bf{Classes} & $P$\tnote{1} & \bf{Image Feature} \\
			\midrule 
			\bf{FMNIST \cite{xiao2017fashion}}& $60,000$ & $10$ &2  &  $28 \times 28$ \\
			\bf{CIFAR-10 \cite{krizhevsky2009learning}}& $60,000$ & $10$ & 2  &  $32 \times 32 \times 3$  \\
			\bf{CIFAR-100 \cite{krizhevsky2009learning} }& $60,000$ & $100$ & 20 &  $32 \times 32 \times 3$ \\
			\bottomrule
		\end{tabular} 
		\begin{tablenotes}
			\item [1] Shorthand notation for the number of classes in one remote device. 
		\end{tablenotes}
		\caption{Dataset information overview.}
		\label{Tab:dataset}
	\end{threeparttable}
\end{table}

\textbf{Implementation:} In this work, we simulated a FL network with the formulated system-heterogeneous problem. Note that we would like to emphasize that the initialized hyper-parameter settings are directly from the default setups of previous FL works \cite{li2021federated, liang2020think}, which are not manually tuned to make the proposed FedLGA algorithm perform better. The system-heterogeneous FL network in our simulation is with the following settings by default
\begin{itemize}
	\item The total number of remote devices $N=50$.
	\item For each global communication round, the number of devices being chosen by the aggregator is $K = 10$. 
	\item For the local training process, we set $E = 5$ and $|\mathcal{B}| = 10$.
	\item To illustrate the device-heterogeneity, we set $\rho = 0.5$ and $\tau_{max} = E - 1$, where $\tau_i$ for the $i$-th device is uniformly distributed within $[1, \tau_{max}]$.
\end{itemize}

\textbf{Compared Methods:} We compared the performance of FedLGA with the following five representative FL methods 
\begin{itemize}
	\item \textbf{FedAvg:} \cite{mcmahan2017communication} is considered as onE of the groundbreaking works in the FL research field. We set up the FedAvg approach based on the settings in \cite{li2019convergence}, which firstly provides a convergence guarantee against data-heterogeneous FL. Note that in our simulation, we follow the scheme I in \cite{li2019convergence} for the partial participation. 
	\item \textbf{FedProx:} \cite{li2020federated} is one popular variant of FedAvg which adds a quadratic proximal term to limit the impact from local updates in the device-heterogeneous FL. In this paper, we follow the instructions provided in \cite{li2020federated} that set the $\mu= 1$, which controls the local objective dissimilarity.
	\item \textbf{FedNova:} \cite{wang2020tackling} improves FedAvg from the aggregator side. It assumes a diverse local update scenario where each remote device may perform the different number of local epochs. To achieve this, FedNova normalizes and scales the local updates, which is also considered as a modification to FedAvg.
	\item \textbf{Scaffold:} \cite{karimireddy2020scaffold} model the data-heterogeneous FL problem as the global variance among each remote device in the network. Scaffold address this problem by controlling the variates between the aggregator and the devices to estimate the joint model update direction, which is achieved via applying the variance reduction technique \cite{johnson2013accelerating,schmidt2017minimizing}. 
	\item \textbf{FedDyn:} \cite{acar2021federated} adds a regularization term on FedAvg on the remote device side at each local training epoch, which is developed based on the joint model and the local training model at the previous global round.  
\end{itemize}

\textbf{Evaluation Metrics:} To evaluate the experimental results accurately, we introduce the following two categories of evaluation metrics, each of which is investigated in multiple ways. Note that in our analysis, we define a target testing accuracy for each dataset as: FMNIST $65\%$, CIFAR-10 $55\%$ and CIFAR-100 $40\%$.
\begin{itemize}
	\item \textbf{Model performance:} To evaluate the learned joint model under the formulated system-heterogeneous FL network, we investigate the training loss, the testing accuracy and the best-achieved accuracy for each FL approach.
	\item \textbf{Communication in FL network:} As the FL network is simulated on one desktop with the Python threading library and all the computations are performed on a single GPU card, we represent the communication in FL by calculating the number of iterations and the program running time for each compared method to achieve the targeted testing accuracy.
\end{itemize}

\begin{figure*}[tb]
	\centering
	\begin{subfigure}{0.32\columnwidth}
		\includegraphics[width = 1\columnwidth]{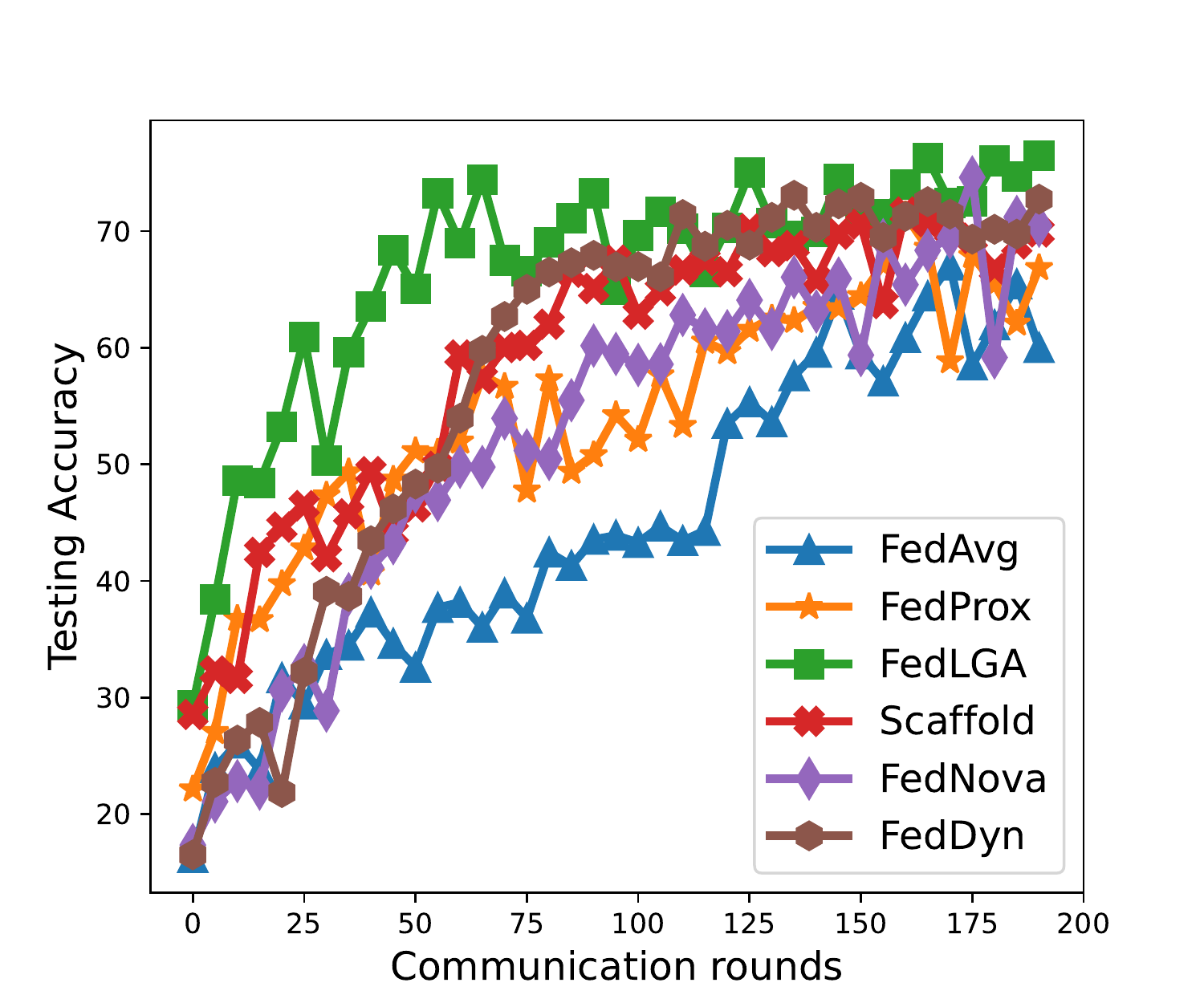}
		\caption{FMNIST}
		\label{fig:fmnist}
	\end{subfigure}
	\begin{subfigure}{0.32\columnwidth}
		\includegraphics[width = 1\columnwidth]{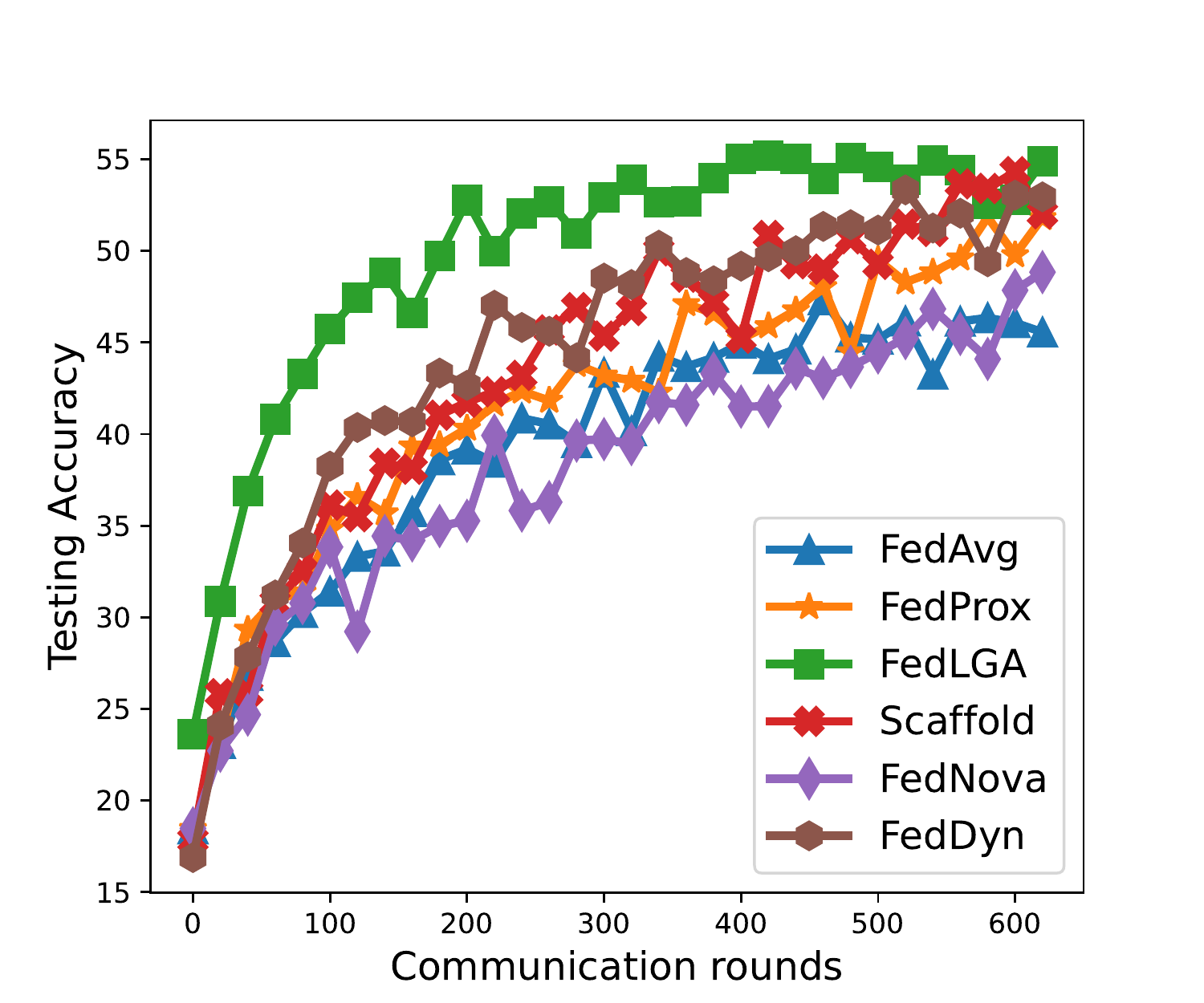}
		\caption{CIFAR-10}
		\label{fig:cifar10}
	\end{subfigure}
	\begin{subfigure}{0.32\columnwidth}
		\includegraphics[width = 1\columnwidth]{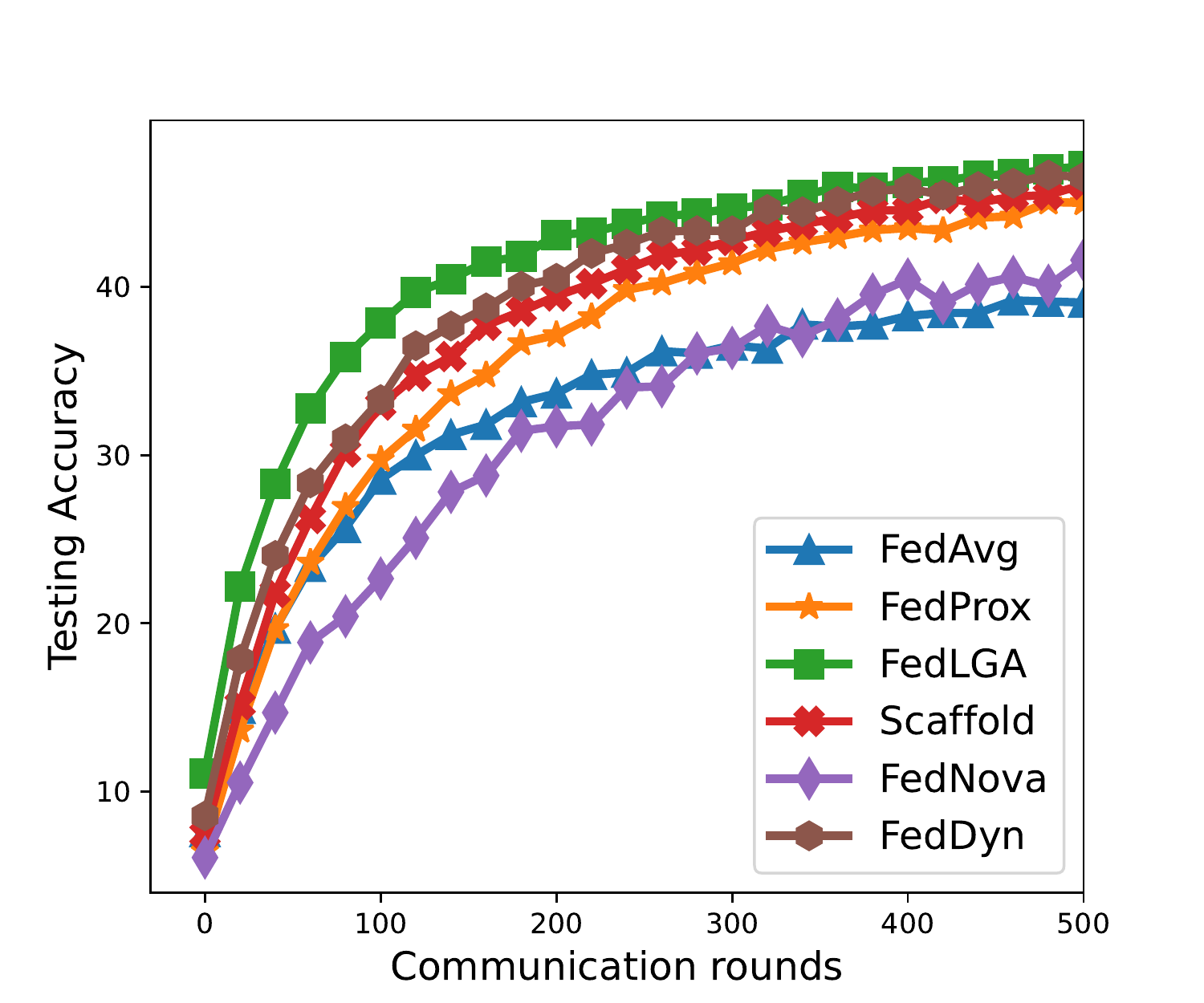}
		\caption{CIFAR-100}
		\label{fig:cifar100}
	\end{subfigure}
	\caption{Learning performance of testing accuracy under the system-heterogeneous FL with $\rho = 0.5, \tau_{max} = E-1$ and $E=5$.}
	\label{fig:noniid}
\end{figure*}

\begin{figure*}[tb]
	\centering
	\begin{subfigure}{0.32\columnwidth}
		\includegraphics[width = 1\columnwidth]{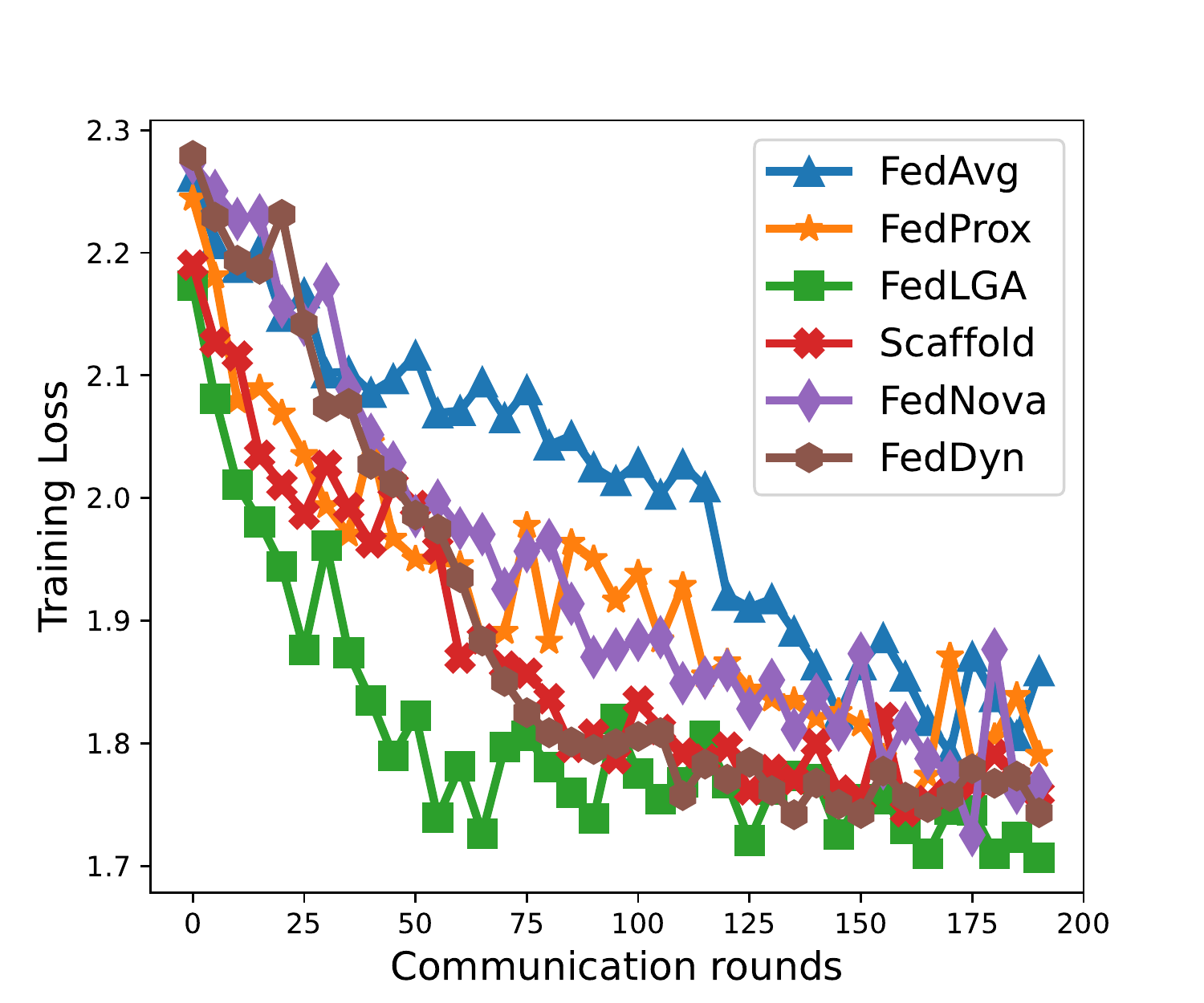}
		\caption{FMNIST}
		\label{fig:fmnist_loss}
	\end{subfigure}
	\begin{subfigure}{0.32\columnwidth}
		\includegraphics[width = 1\columnwidth]{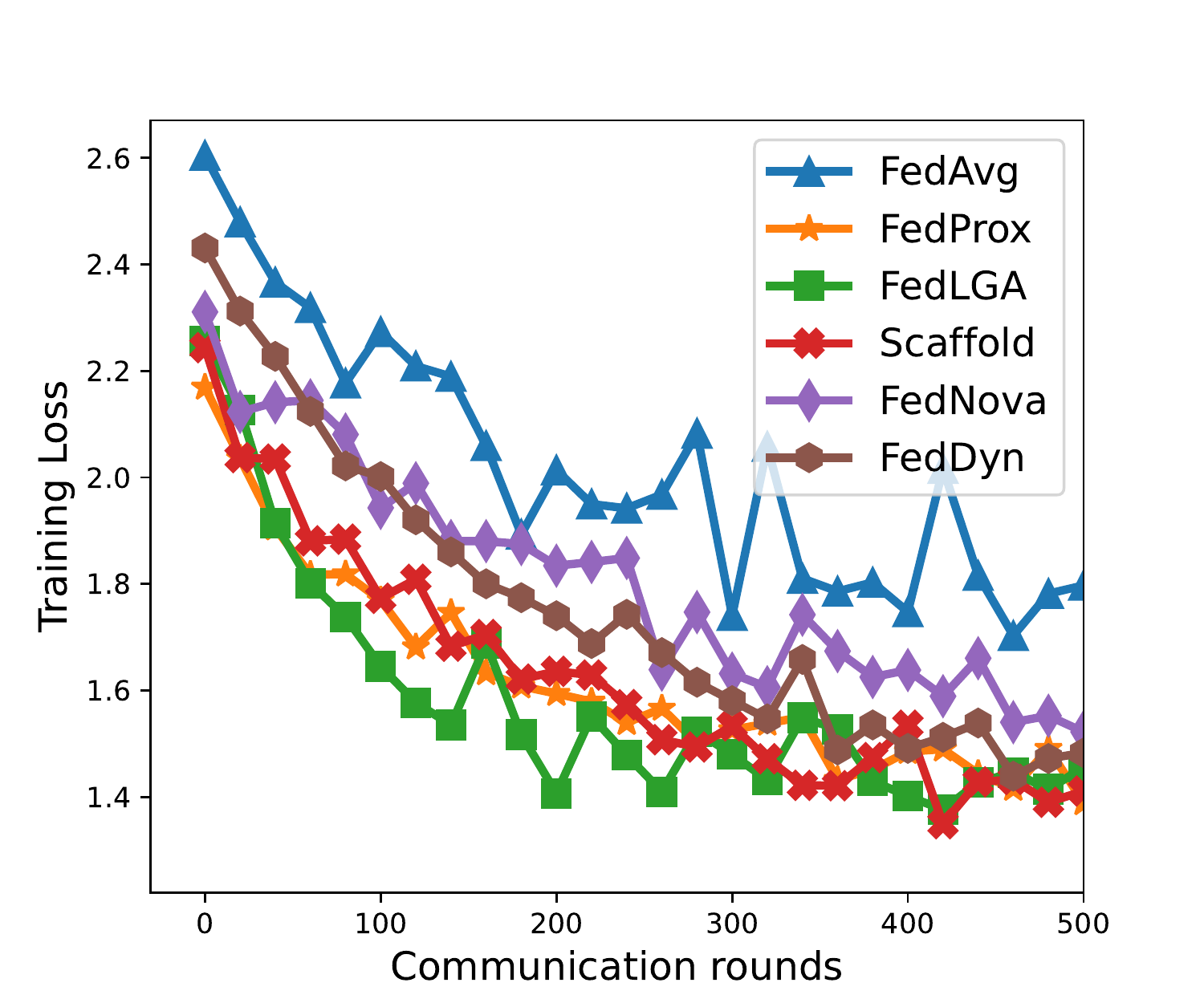}
		\caption{CIFAR-10}
		\label{fig:cifar10_loss}
	\end{subfigure}
	\begin{subfigure}{0.32\columnwidth}
		\includegraphics[width = 1\columnwidth]{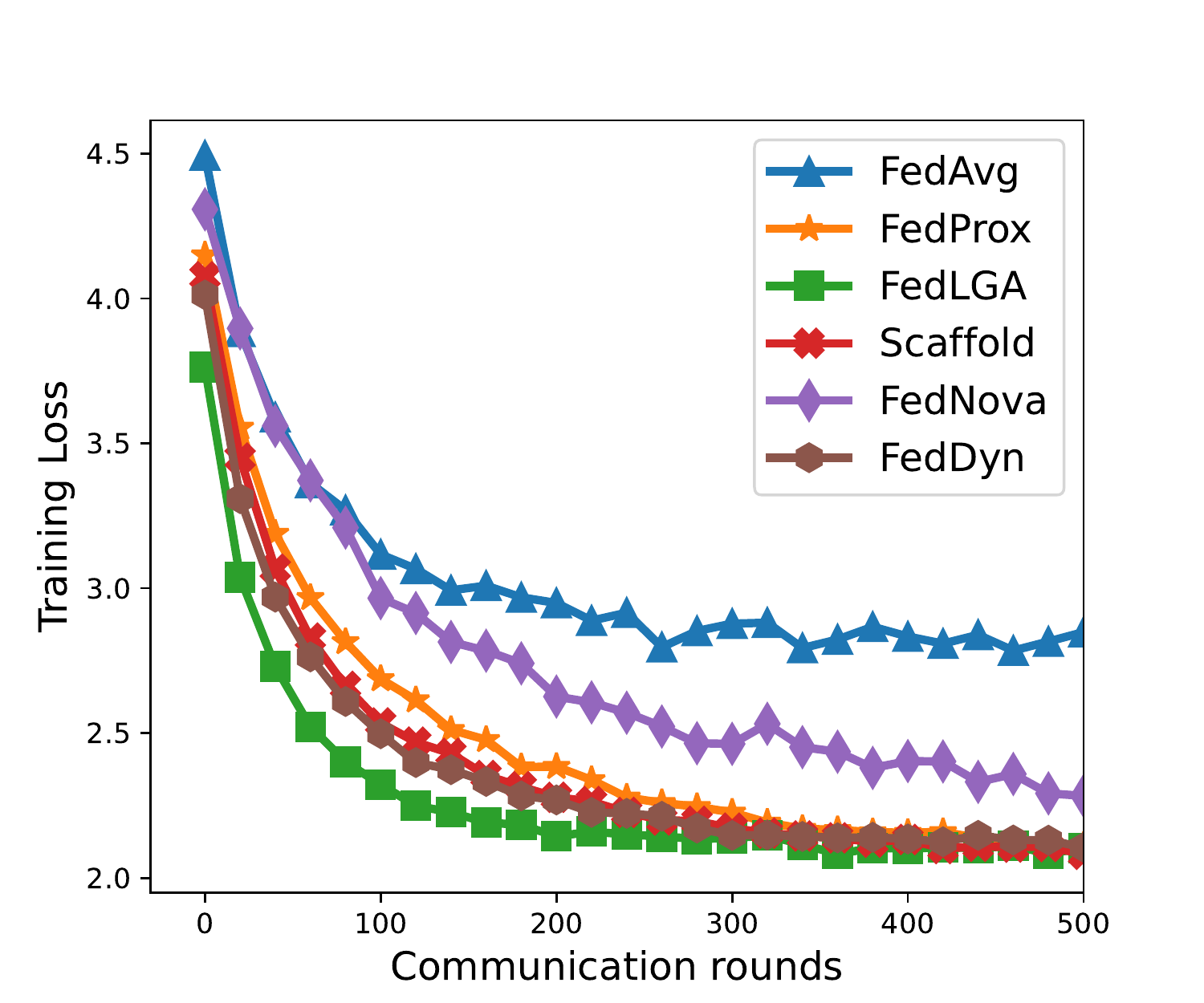}
		\caption{CIFAR-100}
		\label{fig:cifar100_loss}
	\end{subfigure}
	\caption{Learning performance of training loss under the system-heterogeneous FL with $\rho = 0.5, \tau_{max} = E-1$ and $E=5$.}
	\label{fig:loss}
\end{figure*}

\begin{figure*}[tb]
	\centering
	\begin{subfigure}{0.32\columnwidth}
		\includegraphics[width = 1\columnwidth]{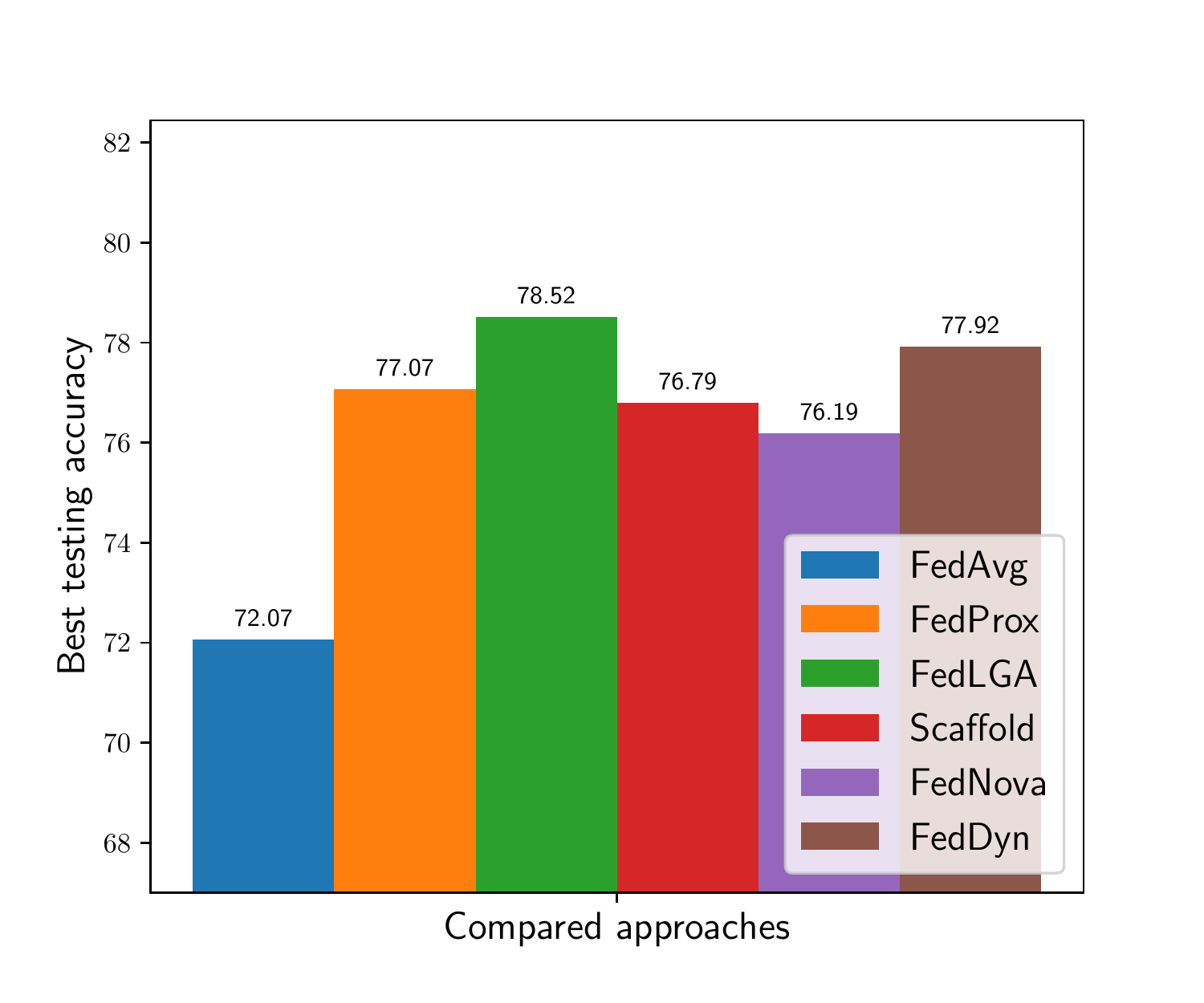}
		\caption{FMNIST}
		\label{fig:fmnist_bar}
	\end{subfigure}
	\begin{subfigure}{0.32\columnwidth}
		\includegraphics[width = 1\columnwidth]{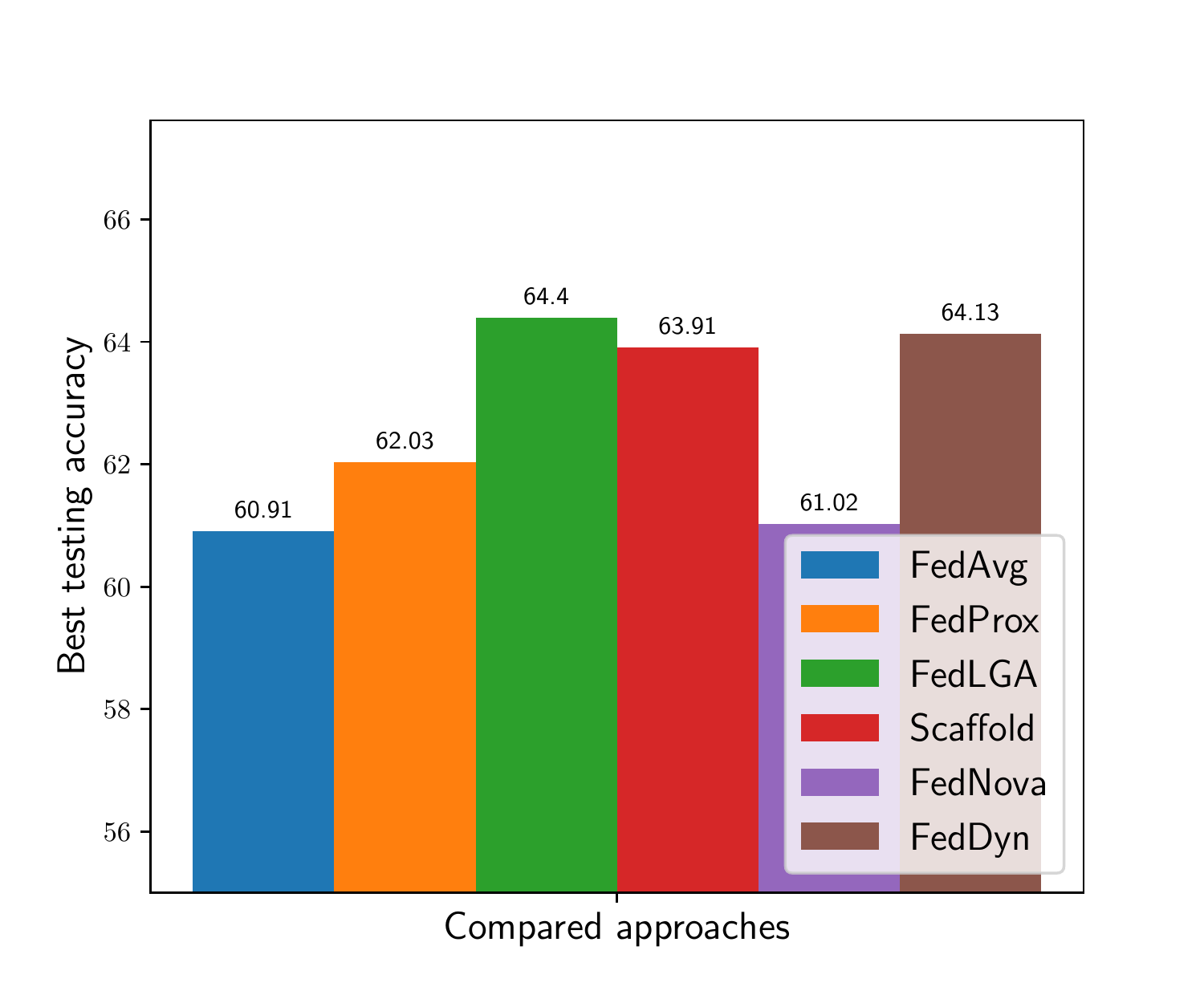}
		\caption{CIFAR-10}
		\label{fig:cifar10_bar}
	\end{subfigure}
	\begin{subfigure}{0.32\columnwidth}
		\includegraphics[width = 1\columnwidth]{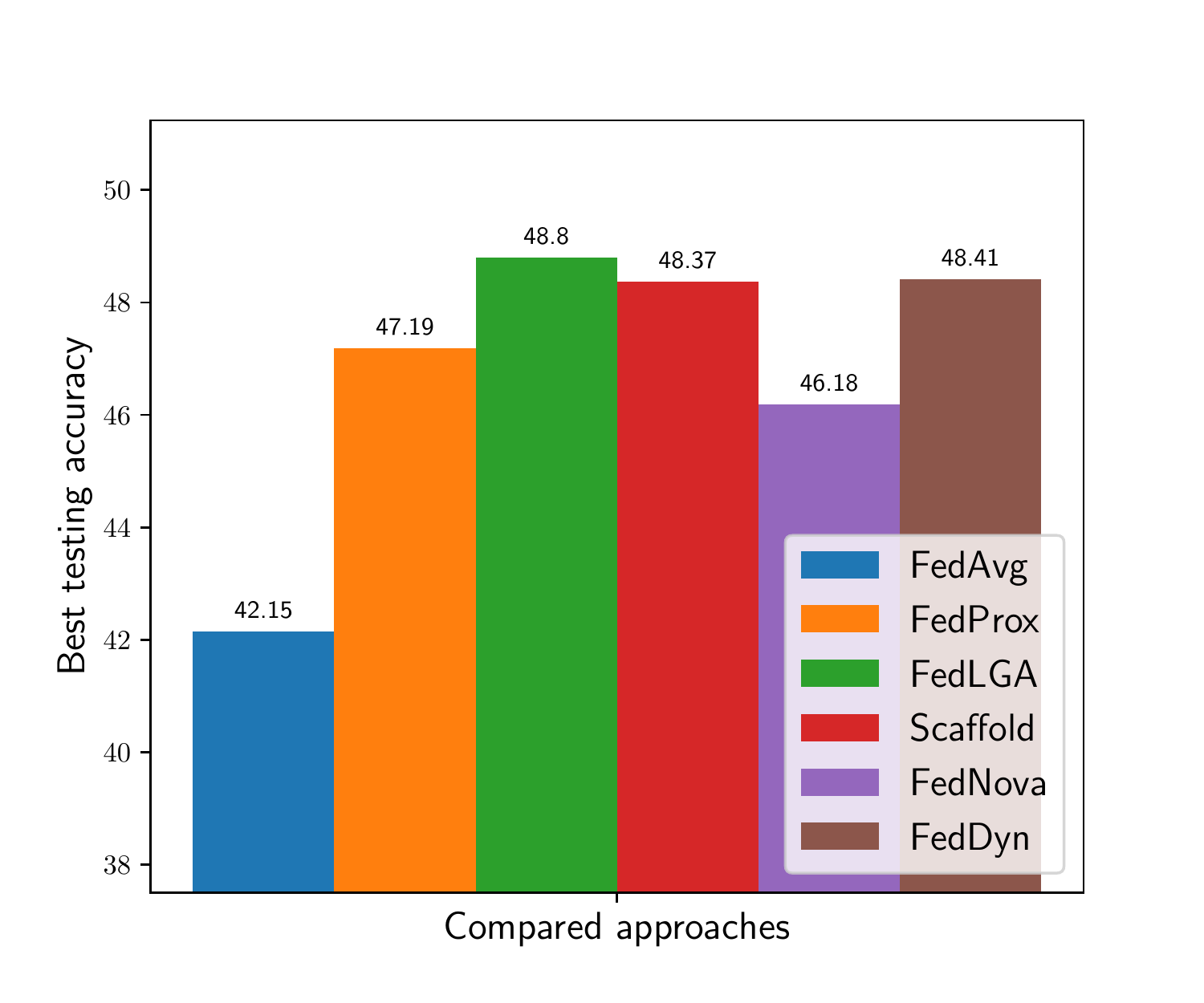}
		\caption{CIFAR-100}
		\label{fig:cifar100_bar}
	\end{subfigure}
	\caption{Learning performance of best accuracy under the system-heterogeneous FL with $\rho = 0.5, \tau_{max} = E-1$ and $E=5$.}
	\label{fig:bar}
\end{figure*}

\subsection{Analysis of Joint Model Performance}

\textbf{Overall Performance Comparison:} Fig.~\ref{fig:noniid} and \ref{fig:loss} show the learning curves of the testing accuracy and the training loss for the compared FL approaches over three datasets respectively. We can notice that compared to existing FL methods, the proposed FedLGA algorithm achieves the best overall performance on the lowest training loss, highest testing accuracy and the fastest convergence speed. For example, as shown in Fig.~\ref{fig:cifar100}, the proposed FedLGA reaches the targeted $40\%$ testing accuracy with only $145$ iterations, which is $1.9 \times$, $1.5\times$, $1.8 \times$, $1.3\times$ and $1.1 \times$ faster than FedAvg, FedProx, FedNova, Scaffold and FedDyn respectively. Specifically, as shown in Fig.~\ref{fig:cifar10_loss}, though the proposed FedLGA only reaches the second-lowest training loss on CIFAR-10 dataset, it outperforms other methods with an obvious faster convergence speed. We can also notice that compared to other benchmarks, FedDyn achieves the second-best performance on average. 

We then analyze the performance of the best approached testing accuracy for the compared methods, where the results are shown in Fig.~\ref{fig:bar}. It can be noticed that the proposed FedLGA algorithm outperforms other compared methods and achieves the best testing accuracy on each dataset. For example, as shown in Fig.~\ref{fig:cifar10_bar}, FedLGA improves the best obtained testing accuracy on CIFAR-10 (i.e., $64.44\%$) by $5.7\%$, $3.8\%$, $5.5\%$, $0.7\%$ and $0.4\%$, comparing to FedAvg, FedProx, FedNova, Scaffold and FedDyn respectively. 


\begin{figure*}[tb]
	\centering
	\begin{subfigure}{0.32\columnwidth}
		\includegraphics[width = 1\columnwidth]{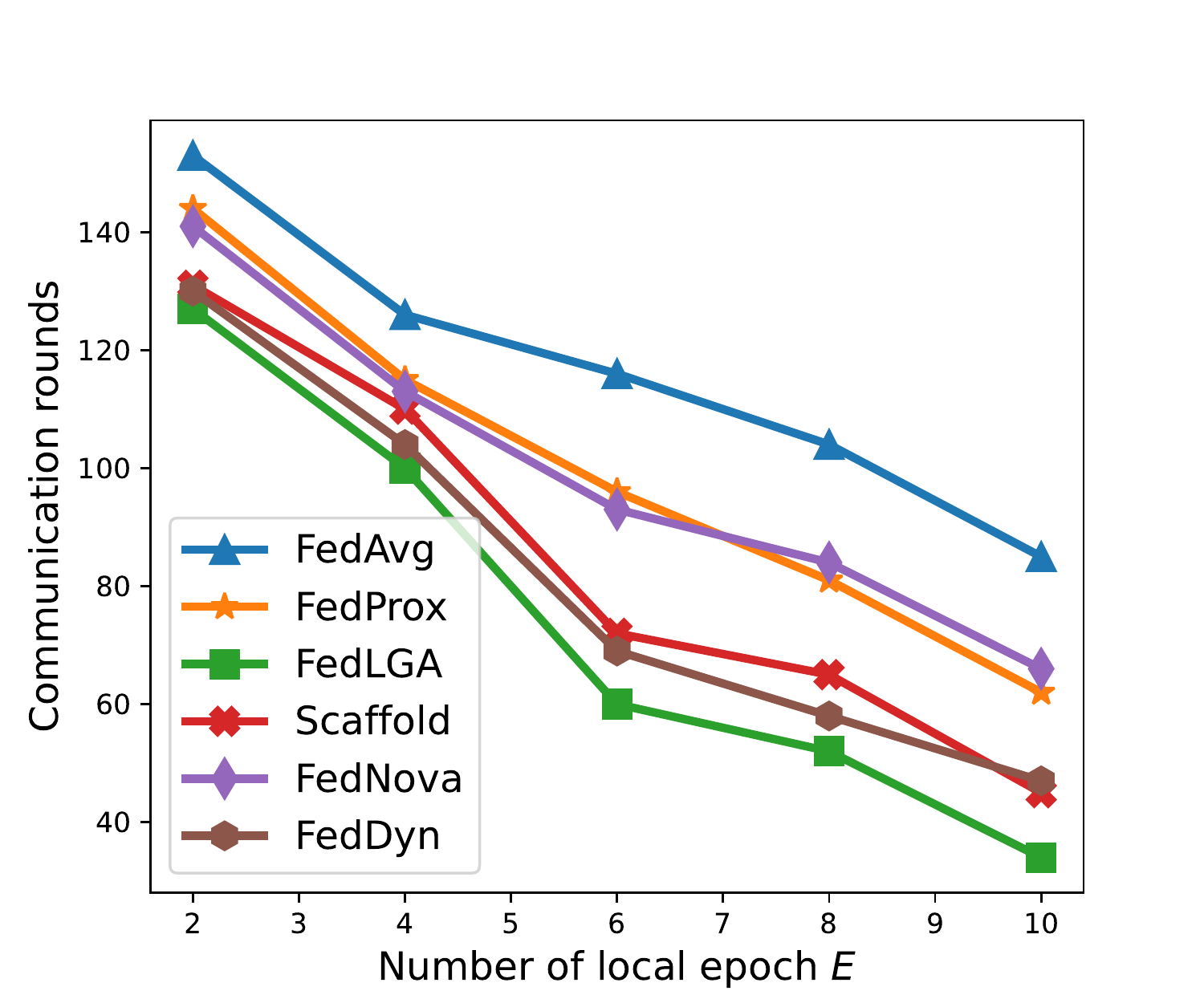}
		\caption{}
		\label{fig:e_fmnist}
	\end{subfigure}
	\begin{subfigure}{0.32\columnwidth}
		\includegraphics[width = 1\columnwidth]{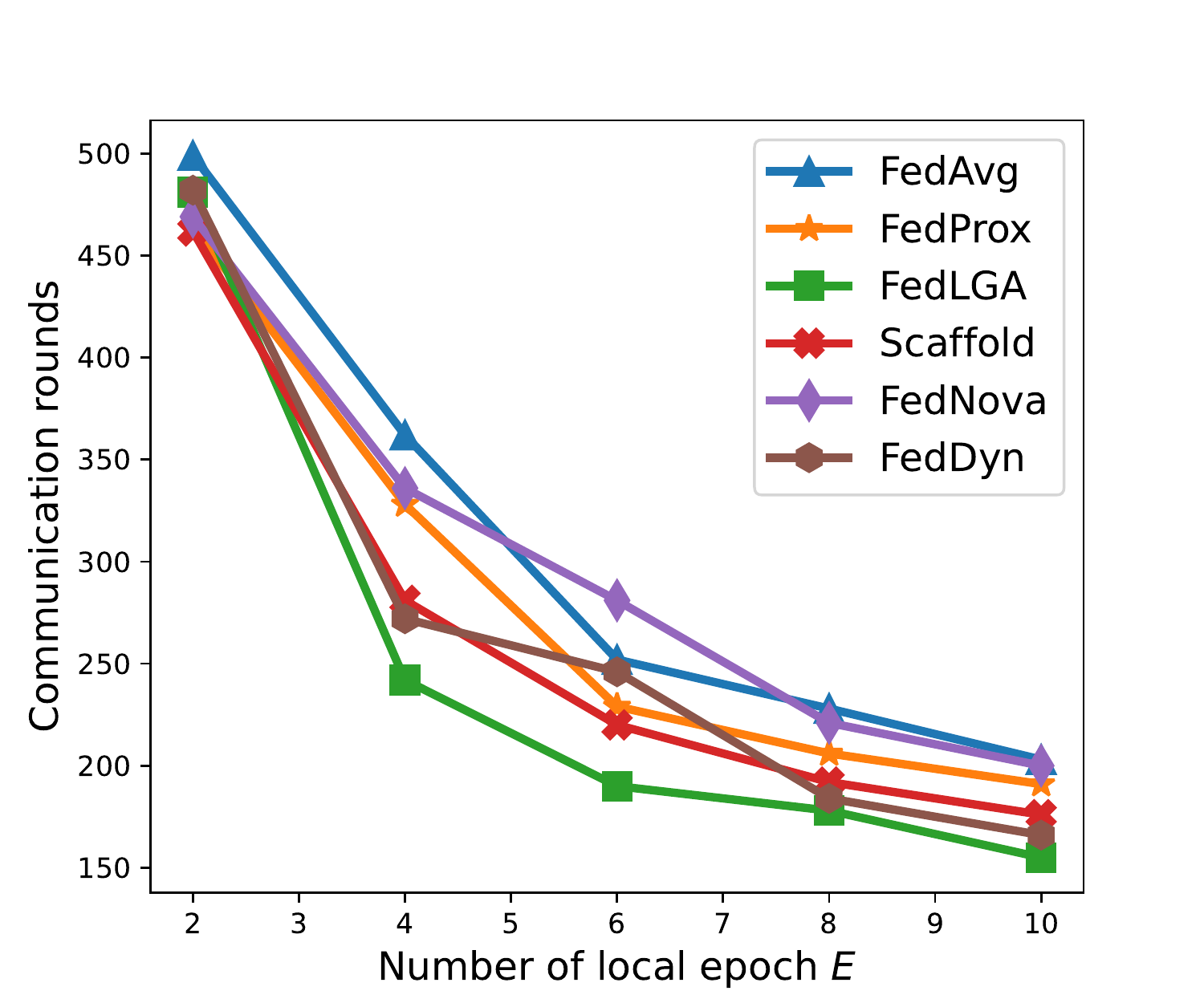}
		\caption{}
		\label{fig:e_cifar10}
	\end{subfigure}
	\begin{subfigure}{0.32\columnwidth}
		\includegraphics[width = 1\columnwidth]{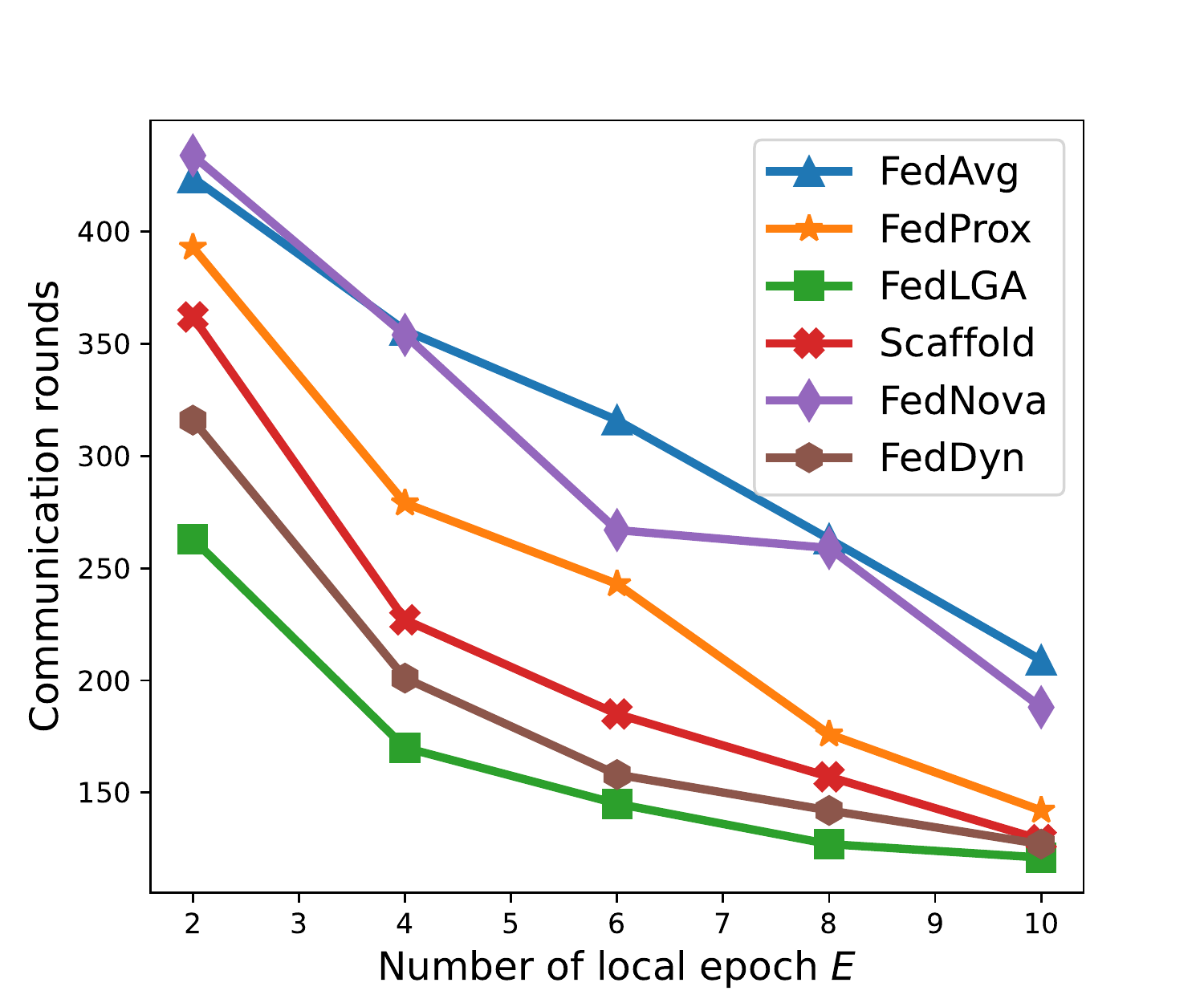}
		\caption{}
		\label{fig:e_cifar100}
	\end{subfigure}
	\caption{Performance of the compared FL methods under different FL network settings with system-heterogeneity.}
	\label{fig:Impact_E}
\end{figure*}

\begin{figure*}[tb]
	\centering
	\begin{subfigure}{0.32\columnwidth}
		\includegraphics[width = 1\columnwidth]{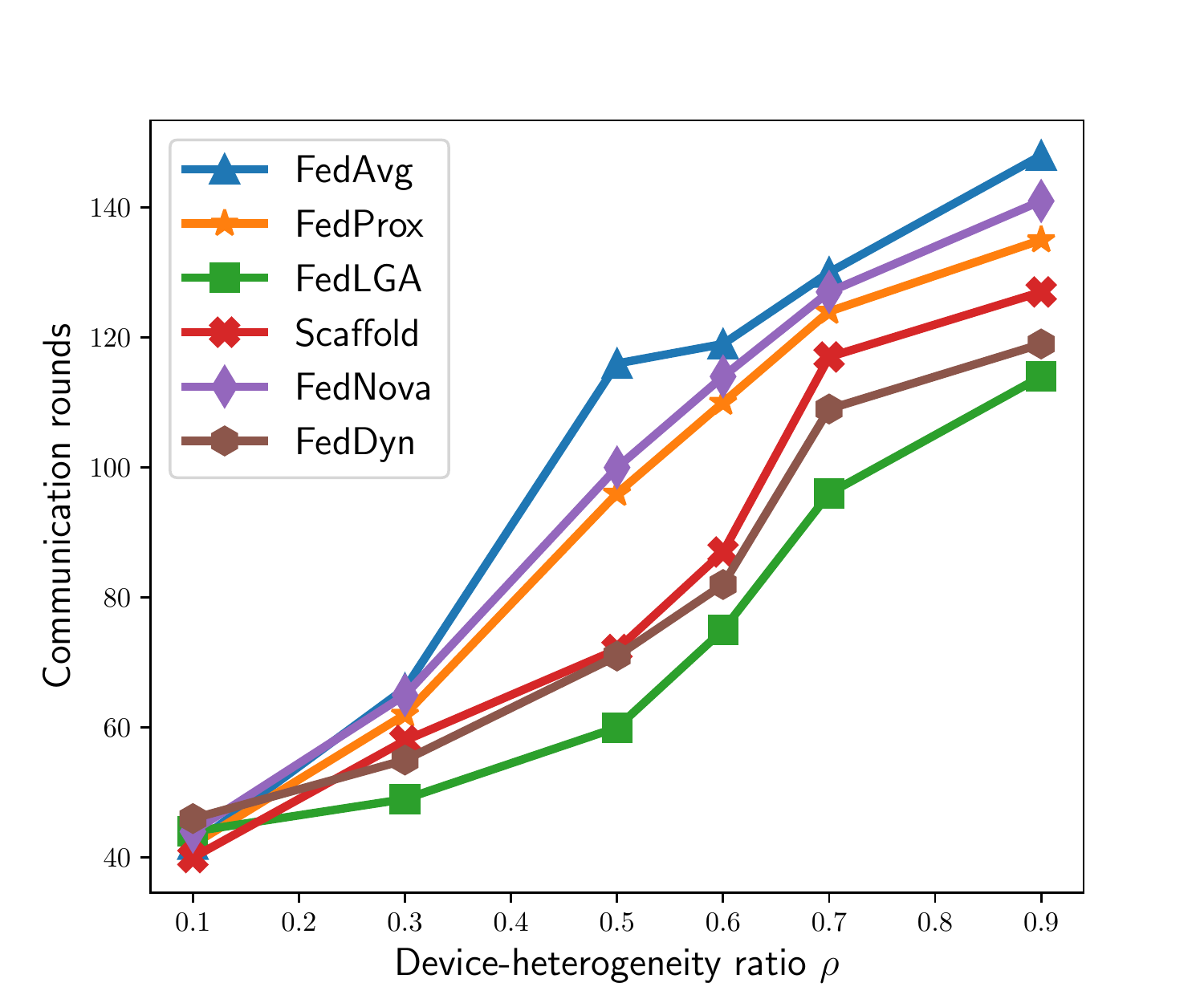}
		\caption{}
		\label{fig:rho_fmnist}
	\end{subfigure}
	\begin{subfigure}{0.32\columnwidth}
		\includegraphics[width = 1\columnwidth]{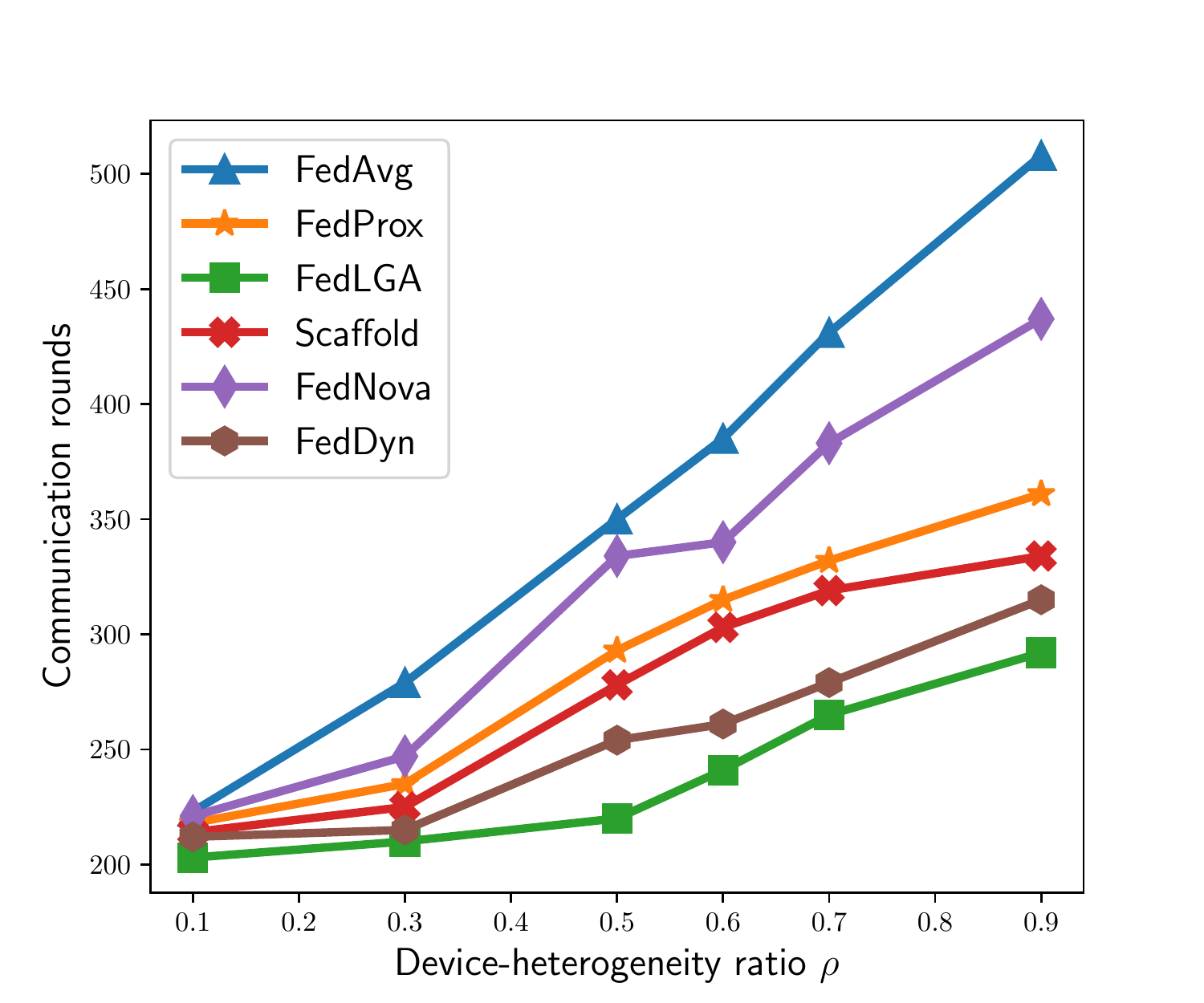}
		\caption{}
		\label{fig:rho_cifar10}
	\end{subfigure}
	\begin{subfigure}{0.32\columnwidth}
		\includegraphics[width = 1\columnwidth]{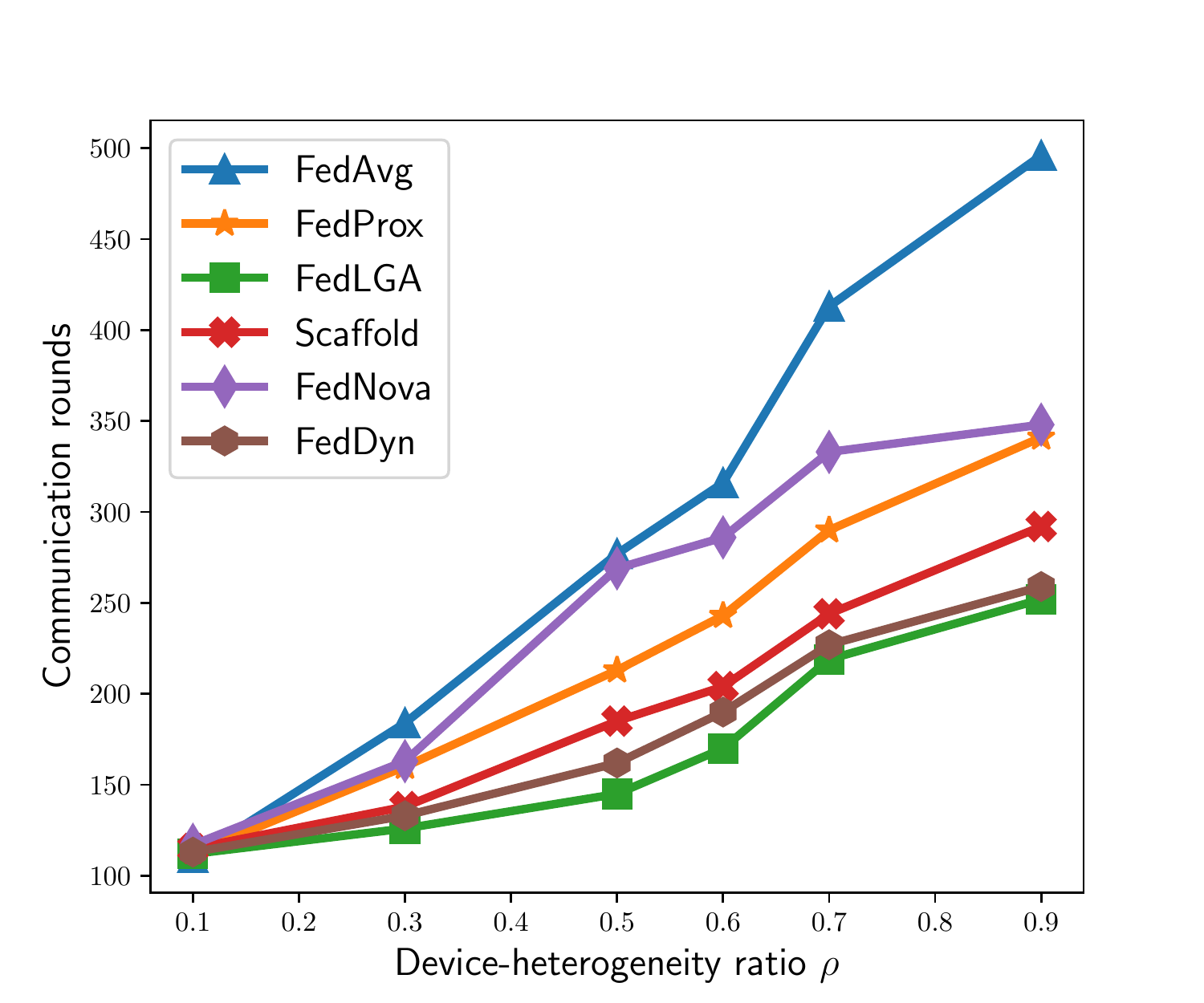}
		\caption{}
		\label{fig:rho_cifar100}
	\end{subfigure}
	\caption{Performance of the compared FL methods under different FL network settings with system-heterogeneity.}
	\label{fig:Impact_rho}
\end{figure*}

\subsection{Analysis of Communication}
\textbf{Analysis of System-heterogeneous FL:} To further investigate the learned joint model performance of the compared methods, we construct different system-heterogeneous FL network scenarios. Firstly, we study the impact of different local training epoch $E$, where the results are shown in Fig.~\ref{fig:Impact_E}. Note that for better comparison, we denote the performance via the number of global communication iterations to the targeted testing accuracy. It can be easily noticed from the results that as the value of $E$ becomes larger, the number of global communication round to the target accuracy are less for each compared method. In this condition, the proposed FedLGA algorithm still outperforms other methods with the lowest number of iterations on each value of $E$. 

Then, we study the performance of the compared approaches in a FL network with different device-heterogeneity ratios, which is shown in Fig.~\ref{fig:Impact_rho}. We can notice from the results that as the $\rho$ becomes larger, the number of communication rounds to achieve the target testing accuracy for all compared methods also increases. Especially, for FMNIST and CIFAR-10 datasets, when $\rho = 0.1$, all the compared FL methods in this paper have similar performance. We consider this might due to the reason that only $10\%$ of local gradients are heterogeneous with $E_i$ local epochs. And for CIFAR-100 dateset, we can notice that the proposed FedLGA algorithm has a significant advantage over other methods when $\rho = 0.1$. Additionally, for different values of $\rho$, the proposed FedLGA algorithm outperforms other compared methods. For example, when $E = 8$ against FMNIST dataset, the proposed FedLGA reaches the target accuracy with only $52$ rounds, where FedAvg requires 2 times more rounds for $104$.

\textbf{Running Time:} Table~\ref{Tab:runningtime} shows the experimental result of the running time (seconds) for each compared method to achieve the target testing accuracy. Note that to describe the performance accurately, we take both the ``Single" and ``Total" cost time into consideration. The ``Single" represents the averaged time for running one global communication round during the training process, and the ``Total" is the total required running time for a compared method to reach the targeted testing accuracy. We can notice that FedLGA reaches the best ``total" running time for all of the three introduced dataset, while only the third-best on the ``single" running time. We consider this might be because of the following reasons. Compared to FedAvg and FedNova which reach better ``single" running time, the proposed FedLGA algorithm requires a lower number of global communication round to the target accuracy. And comparing to FedProx, Scaffold and FedDyn, the results support our theoretical claim that as the extra computation complexity of the proposed FedLGA is on the aggregator, it outperforms other FL methods which perform extra computation costs on the remote devices. 



\begin{table}[tb]
	\centering
	\begin{adjustbox}{width=\columnwidth,center}
		\begin{tabular}{*{7}{c}}
			\toprule
			& \multicolumn{2}{c}{FMNIST}    & \multicolumn{2}{c}{CIFAR-10}  & \multicolumn{2}{c}{CIFAR-100}    \\ 
			
			\midrule
			& Single        & Total         & Single         & Total     & Single         & Total              \\ 
			\midrule
			FedLGA  & 9.4         & \bf{565.8}    & 12.1           & \bf{2668.6}     & 11.8           & \bf{1711.0}       \\
			\midrule
			FedAvg  & \bf{8.9}     & 1032.4         & \bf{10.7}      & 3741.5 & \bf{11.3}      & 3130.1 \\    
			FedProx & 12.2          & 1171.7        & 13.4           & 		3932.1 & 12.9          & 2747.7 \\        
			FedNova  & 9.1        & 910.0    & 10.9       & 3640.6      & 11.6         & 3120.4      \\ 
			Scaffold  & 11.2         & 806.7    & 13.1       & 3636.2      & 12.4         & 2287.8       \\ 
			
			FedDyn  & 12.2         & 869.3    & 12.8       & 3251.2      & 12.7         & 2057.4       \\ 
			
			\bottomrule
		\end{tabular}
	\end{adjustbox}
	\caption{Running time (seconds) to target testing accuracy.}
	\label{Tab:runningtime}
\end{table}

\begin{table}[tb]
	\centering
	\begin{adjustbox}{width=\columnwidth, center}
		\begin{tabular}{*{5}{c}}
			\toprule
			\multicolumn{1}{c}{} & \multicolumn{1}{c}{$\tau_{max}$} &\multicolumn{1}{c}{FMNIST } & \multicolumn{1}{c}{CIFAR-10 } & \multicolumn{1}{c}{CIFAR-100 } \\
			
			\midrule
			\multirow{3}{*}{FedLGA}
			& $0.8E$      & 60            & 220           & 145 \\
			& $0.6E$      & 54                & 186       & 140 \\
			& $0.4E$      & 41               & 157       & 126 \\
			& $0.2E$      & 29               & 109       & 122 \\

			\midrule
			{FedAvg}
			& -         & 116          & 350           & 277 \\
			{FedProx}
			& -         & 96          & 293           & 213 \\
			{FedNova}
			& -         & 100          & 334           & 269 \\
			{Scaffold}
			& -         & 72          & 278           & 185 \\
			{FedDyn}
			& -         & 71          & 254           & 162 \\	
			\bottomrule
		\end{tabular}
	\end{adjustbox}
	\caption{Impact of $\tau_{max}$. }
	\label{Table:threshold}
\end{table}

\subsection{Analysis of Hyper-parameter Settings}

\textbf{Impact of $\tau_{max}$:} We then evaluate the performance of the proposed FedLGA algorithm under further settings of the introduced hyper-parameters in this paper. The required communication rounds of FedLGA to achieve the target testing accuracy on the introduced dataset with different $\tau_{max}$ values are shown in Table~\ref{Table:threshold}. Note that for better presentation, the performance of the compared FL methods is also introduced in the table. We can notice from the results that on each considered value of $\tau_{max}$, FedLGA outperforms the compared FL methods. In addition, as $\tau_{max}$ becomes larger, the performance of FedLGA degrades. We consider that this is due to the reason that when $\tau_{max}$ is smaller, the variance of the obtained local model update approximation in FedLGA becomes larger. This may also indicate that the performance of FedLGA is also related to $E-E_i$. Specifically,  when $E-E_i$ becomes larger (i.e., the FL network is with higher device-heterogeneity), the performance of FedLGA is more limited. 



\begin{figure}[tb]
	\centering
	\begin{subfigure}{0.49\columnwidth}
		\includegraphics[width = 1\columnwidth]{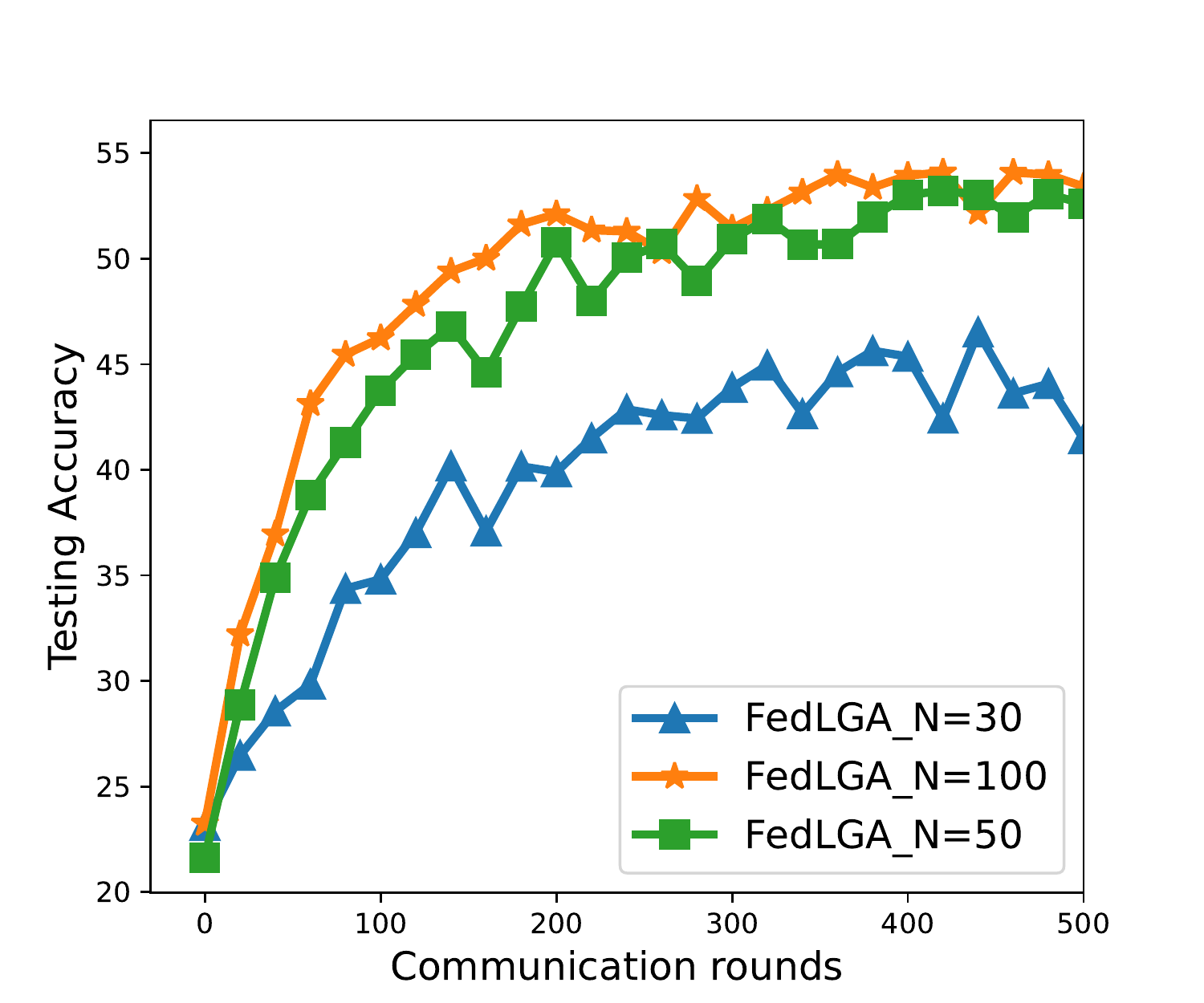}
		\caption{}
		\label{fig:N_cifar10}
	\end{subfigure}
	\begin{subfigure}{0.49\columnwidth}
		\includegraphics[width = 1\columnwidth]{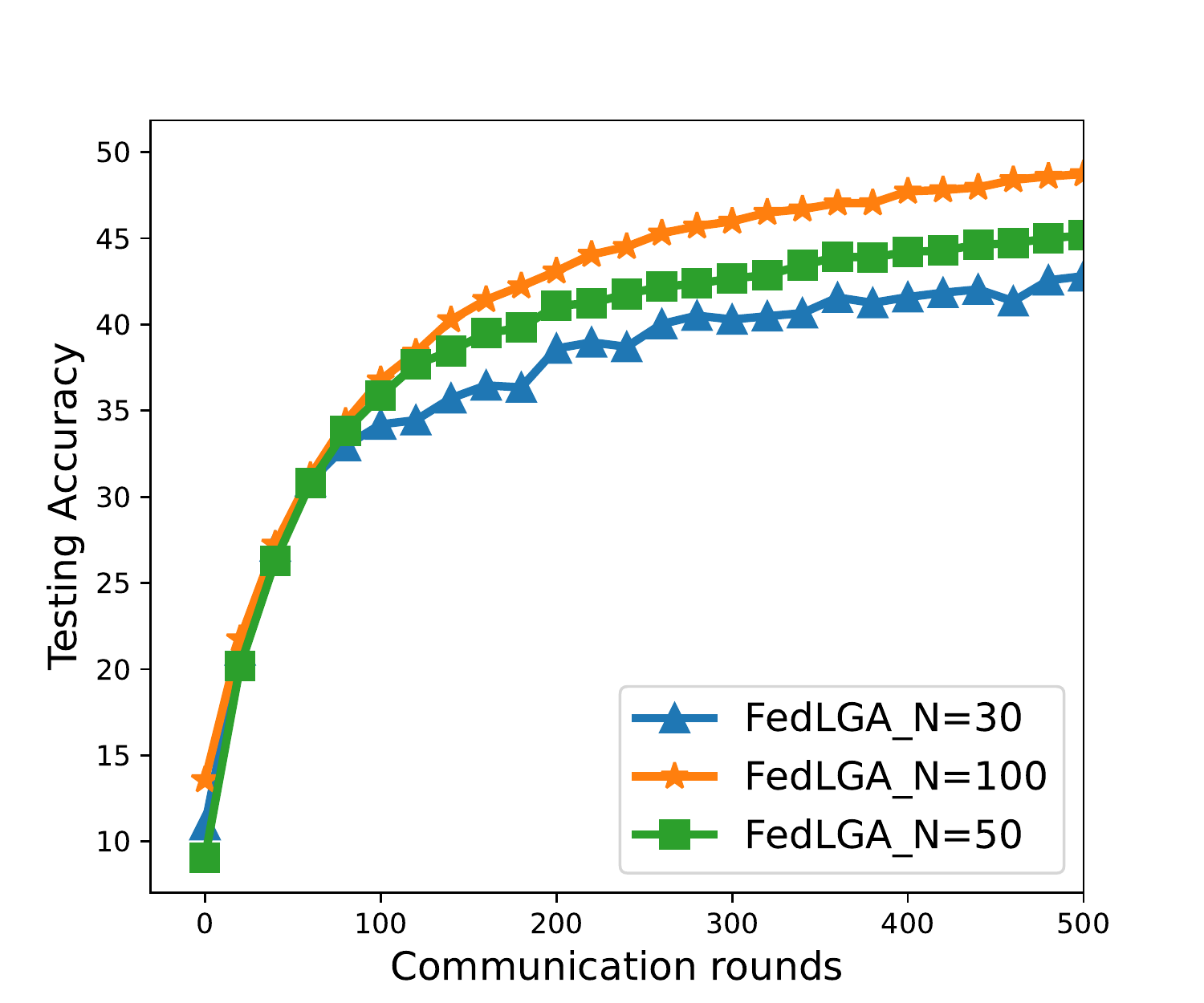}
		\caption{}
		\label{fig:N_cifar100}
	\end{subfigure}
	\caption{Performance of the compared FL methods under different FL network settings with system-heterogeneity.}
	\label{fig:Impact_N}
\end{figure}

\textbf{Impact of $N$:}
We then study the impact of the total remote device number $N$ on the performance of the proposed FedLGA algorithm, which is illustrated in Fig.~\ref{fig:Impact_N}. Note that we pick different $N \in \{ 30, 50, 100\}$ against CIFAR-10 and CIFAR-100 datasets, where other hyper-parameters are set as $K = 10, \rho = 0.5$ and $E=5$. From the results, we can notice that as the number of $N$ grows, the proposed FedLGA algorithm presumes a significantly better learning performance on both the testing accuracy and the convergence speed. We can also notice an interesting phenomenon that for CIFAR-10 dataset, when $N=30$, the performance of FedLGA has a clear gap to the settings of $N=50$ and $N=100$. We consider this might be because when $N$ is too small, the variance inner each device can be too big that leads to the performance degrade.

\begin{figure}[tb]
	\centering
	\begin{subfigure}{0.49\columnwidth}
		\includegraphics[width = 1\columnwidth]{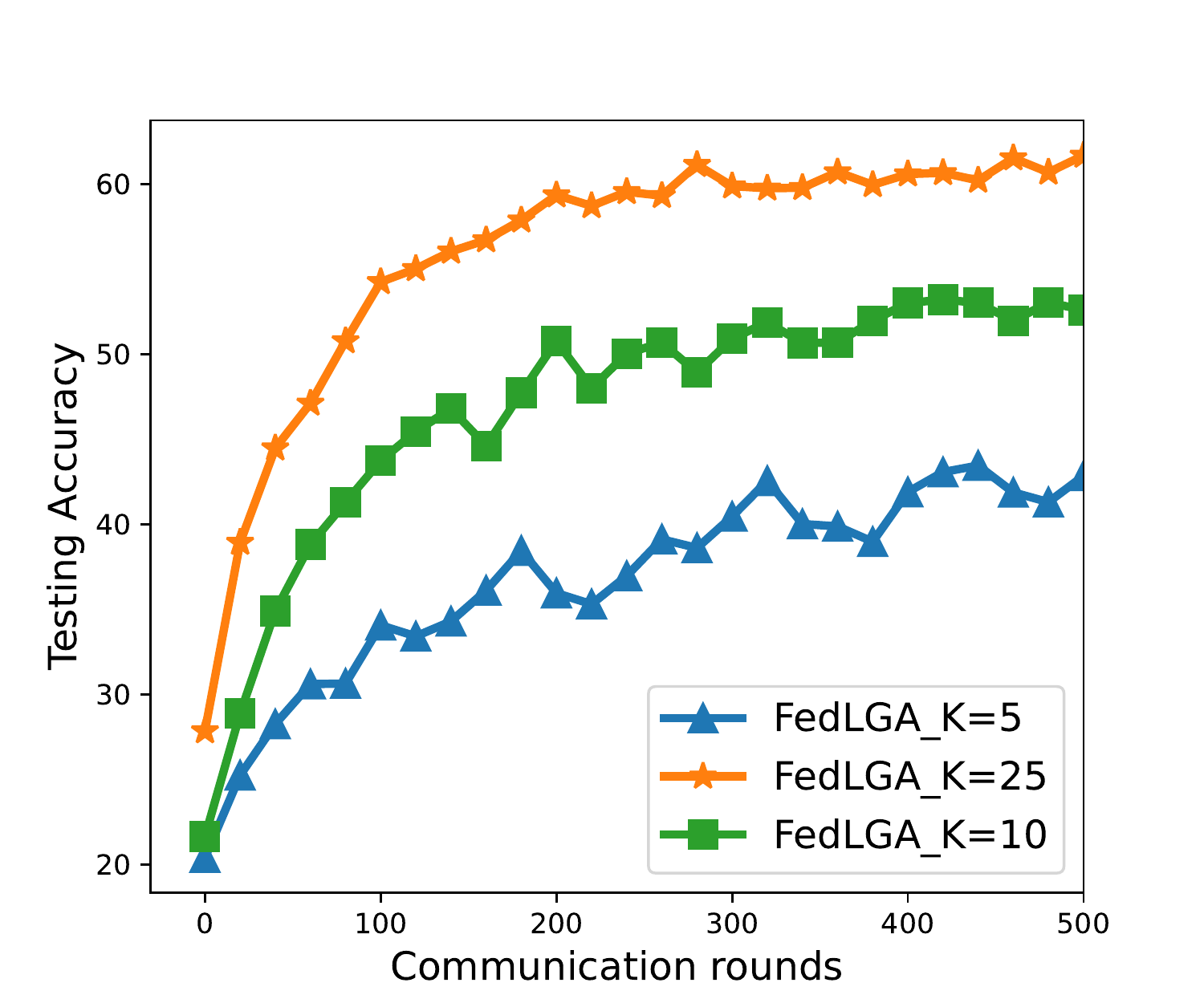}
		\caption{}
		\label{fig:K_cifar10}
	\end{subfigure}
	\begin{subfigure}{0.49\columnwidth}
		\includegraphics[width = 1\columnwidth]{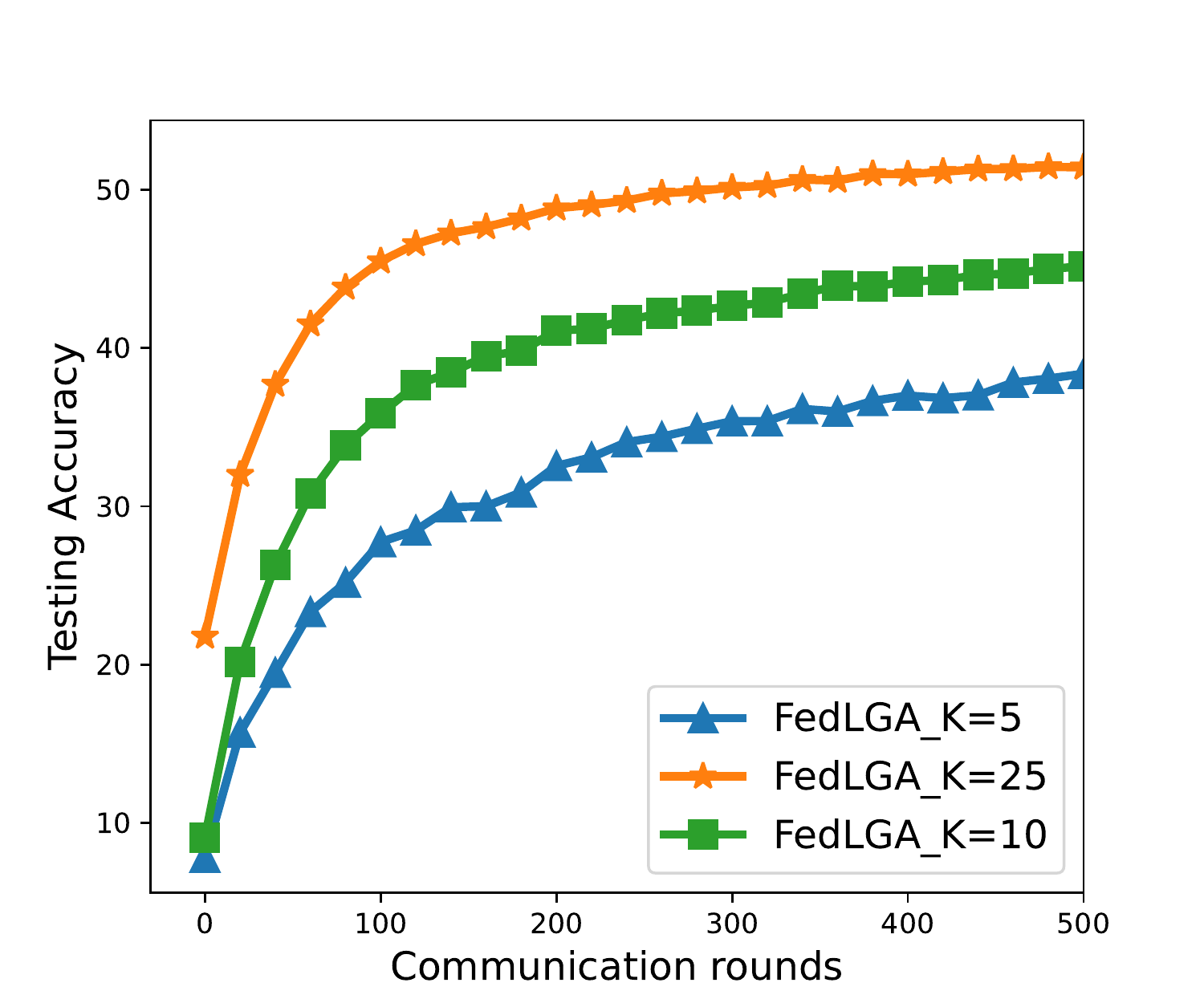}
		\caption{}
		\label{fig:K_cifar100}
	\end{subfigure}
	\caption{Performance of the compared FL methods under different FL network settings with system-heterogeneity.}
	\label{fig:Impact_K}
\end{figure}

\textbf{Impact of $K$:}
Lastly, we investigate the impact of the number $K$ of partial participated remote devices in each communication round to the proposed FedLGA algorithm. Note that we consider the different values of $K$ as $K \in \{5, 10, 25\}$, where $N = 50, \rho = 0.5$ and $E=5$. The results shown in Fig.~\ref{fig:Impact_K} show that the performance of FedLGA has a significant improvement as the number of $K$ grows. For example, against CIFAR-100 dataset, the proposed FedLGA algorithm reaches the target testing accuracy with only $78$ rounds when $K = 25$, which is $46.2\%$ faster than the performance with $K=10$.

\section{Related Works}\label{Sec:Related}


Federated Learning (FL) \cite{konecny2016federated, mcmahan2017communication} has been considered as a recently fast evolving ML topic, where a joint model is learned on a centralized aggregator with the private training data being distributed on the remote devices. Typically, the joint model is learned to address distributed optimization problems, e.g., word prediction \cite{hard2018federated}, image classification, and predictive models \cite{vaid2021federated,dayan2021federated}. As illustrated from the existing comprehensive surveys \cite{kairouz2019advances,li2020federated}, the general FL frameworks usually contain two types of updates: the aggregator and the remote devices. Note that both of these two updates can be denoted as an optimization objective, which focuses on minimizing the corresponding local loss functions.

The challenges in current FL research can be summarized into multiple classical ML problems such as privacy \cite{coulter2019data,li2019event,  li2020adaptive, zhang2021privacy,liu2022distributed, wang2022consensus}, large-scale machine learning and distributed optimization \cite{li2020federated, wei2021incremental, hu2021delay, le2021federated}. For example, there have been a large number of approaches to tackle the communication constrain in the FL community. However, existing methods still face problems due to the scale of distributed networks, which causes the heterogeneity of statistical training data distribution. 

The challenges arise when training the joint model in FL from the non-i.i.d. distributed training dataset, which firstly causes the problem of modeling the heterogeneity. In literature, there exists a large body of methods that models the statistical heterogeneity, (e.g., meta-learning \cite{finn2017model}, asynchronous learning \cite{li2021stragglers} and multi-task learning \cite{caruana1997multitask}) which has been extended into the FL field, such as \cite{chen2018federated, khodak2019adaptive, smith2017federated, zhao2018federated, qu2021context, zhang2021multitask,van2022distributed}. Additionally, the statistical heterogeneity of FL also causes problems on both the empirical performance and the convergence guarantee, even when learning a single joint model. Indeed, as shown in \cite{mcmahan2017communication,li2020federated}, the learned joint model from the first proposed FL method is extremely sensitive to the non-identically distributed training data across remote devices in the network. While parallel SGD and its related variants that are close to FedAvg are also analyzed in the i.i.d. setting \cite{stich2018local}.

In this paper, we introduce several relevant works against different FL scenarios (e.g., non-i.i.d. distributed training data and massive distribution), and \cite{kairouz2019advances,li2020federated} are recommended for an in-depth survey in this area. Works in \cite{stich2018local} proposes local SGD, where each participating remote device in the network performs a single local SGD epoch, and the aggregator averages the received local updates for the joint model. Then, FedAvg in \cite{mcmahan2017communication} makes modifications to the previous local SGD, which designs the local training process with a large number of epochs. Additionally, \cite{mcmahan2017communication, li2019convergence} have proven that by carefully tuning the number of epochs and learning rate, a good accuracy-communication trade-off in the FL network can be achieved. 

Then, there have been several modifications of FedAvg to address the non-i.i.d. distributed training data in FL. For example, work in \cite{li2019convergence} uses a decreasing learning rate and provides a convergence guarantee against non-i.i.d. FL. \cite{reddi2020adaptive} modifies the aggregation rule on the server side. FedProx \cite{li2020federated} adds a proximal term on the local loss function to limit the impact from non-i.i.d. data. Additionally, Scaffold \cite{karimireddy2020scaffold} and FedDyn \cite{acar2021federated} augment local updates with extra transmitted variables. Though they suffer from extra communication cost and local computation, the tighter convergence bound can be guaranteed by adding those device-dependent regularizes.

\section{Conclusions}\label{Sec:Conclusion}
In this paper, we investigate the optimization problems of FL under a system-heterogeneous network, which comes from data- and device-heterogeneity. In addition to the non-i.i.d. training data,  which is known as data-heterogeneity, we also consider the heterogeneous local gradient updates due to the diverse computational capacities across all remote devices. To address the system-heterogeneous, we propose a novel algorithm FedLGA, which provides a local gradient approximation for the devices with limited computational resources. Particularly, FedLGA achieves the approximation on the aggregator, which requires no extra computation on the remote device. Meanwhile, we demonstrate that the extra computation complexity of the proposed FedLGA is only linear using a Hessian approximation method. Theoretically, we show that FedLGA provides a convergence guarantee on non-convex optimization problems under system-heterogeneous FL networks. The comprehensive experiments on multiple real-world datasets show that FedLGA outperforms existing FL benchmarks in terms of different evaluation metrics, such as testing accuracy, number of communication rounds between the aggregator and remote devices, and total running time.

\section*{Acknowledgement}

This research was partially funded by US National Science Foundation (NSF), Award IIS-2047570 and Award CNS-2044516.


\bibliographystyle{IEEEtran}  
\bibliography{references}

\onecolumn
\appendix
\setcounter{secnumdepth}{2}
\setcounter{section}{0}
\setcounter{subsection}{0}
\setcounter{assumption}{0}
\setcounter{theorem}{0}
\setcounter{lemma}{0}
\renewcommand*{\theassumption}{\Alph{assumption}}
\renewcommand*{\thetheorem}{\Alph{theorem}}
\renewcommand*{\thelemma}{\Alph{lemma}}

\section{Proofs}\label{Proofs}
In this section, we provide the detailed proofs for full and partial participation convergence analysis of the proposed FedLGA in Section.~\ref{Proof:theorem_1} and ~\ref{Proof:theorem_2} respectively. The proofs of key lemmas in the analysis are also introduced.

\subsection{Proof of Theorem 1}\label{Proof:theorem_1}
\begin{theorem}\label{Appen:theorem_1}
	Let Assumptions~{1}-{4} hold. The local and global learning rates $\eta_{l}$ and $\eta_{g}$ are chosen such that $\eta_{l} < \frac{1}{\sqrt{30 (1 + \rho)} LE}$ and $ \eta_{g} \eta_{l} \leq \frac{1}{(1+\rho) LE}$. Under full device participation scheme, the iterates of FedLGA satisfy
\begin{equation}\label{Proof:theorem_1_eq}
\min_{t \in T} \mathbb{E}||\nabla f(\bm{w}^{t})||^{2} \leq \frac{f^0-f^{\star}}{c_1 \eta_{g} \eta_{l} E T} + \Phi_1,\nonumber
\end{equation}
where $f^{0} = f (\bm{w}^{0}), f^{\star} = f (\bm{w}^{\star})$, $c_1$ is constant, the expectation is over the remote training dataset among all devices, and $\Phi_1 = \frac{1}{c_1}[ \frac{(1+ \rho) \eta_{g} \eta_{l} \sigma_{l}^{2}}{2N} + \frac{5}{2}  \eta_{l}^{2} E L^2 (\sigma_{l}^{2} + 6E \sigma_{g}^{2}) + c_2 \mathbb{E} ||\nabla F_i (\bm{w}^{T})||^4  ]$, $(\frac{1}{2} - 15(1+\rho) E^2 \eta_{l}^2 L^2) > c_1 > 0$, and $c_2 = \frac{\eta_{g}\eta_{l}^{2}\rho M^2 \tau_{max}^{2}}{N\eta_{g}\eta_{l}} (\eta_{g}L + \eta_{l}^{3} \tau_{max}^{2})$.
\end{theorem}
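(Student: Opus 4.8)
The plan is to run the standard non-convex descent-lemma argument, augmented to absorb the gradient-approximation error that is specific to FedLGA. First I would apply the $L$-smoothness of $f$ (Assumption~\ref{Assum:0}) to obtain the one-round descent inequality
\[
\mathbb{E}[f(\bm{w}^{t+1})] \le f(\bm{w}^t) + \mathbb{E}\langle \nabla f(\bm{w}^t),\, \bm{w}^{t+1}-\bm{w}^t\rangle + \frac{L}{2}\,\mathbb{E}\|\bm{w}^{t+1}-\bm{w}^t\|^2,
\]
and substitute the aggregation rule of Eq.~\eqref{Eq:global} with $K=N$. I would split the increment into an \emph{idealized} part in which every device contributes a full $E$-epoch update $\Delta^t_{i,E}$ and an \emph{approximation-error} part $\tfrac{\eta_g}{N}\sum_{i\in\mathcal{K}_1}(\hat{\Delta}^t_{i,E}-\Delta^t_{i,E})$ supported only on the $K_1=\rho N$ heterogeneous devices. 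Using the unbiasedness of Assumption~\ref{Assum:1}, the expectation of the idealized increment is aligned with $-\eta_g\eta_l E\,\nabla f(\bm{w}^t)$ up to a local-drift correction, so after completing the square the inner-product term yields the dominant $-\tfrac{1}{2}\|\nabla f(\bm{w}^t)\|^2$ descent.

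Second, I would establish two supporting lemmas. The \textbf{client-drift lemma} bounds $\mathbb{E}\|\bm{w}^t_{i,e}-\bm{w}^t\|^2$ over the $E$ local SGD steps by $\mathcal{O}(\eta_l^2 E(\sigma_l^2+E\sigma_g^2)+\eta_l^2 E^2\|\nabla f(\bm{w}^t)\|^2)$, using Assumptions~\ref{Assum:1} and~\ref{Assum:2}; this is the source of the $\tfrac{5}{2}\eta_l^2 E L^2(\sigma_l^2+6E\sigma_g^2)$ contribution to $\Phi_1$. The \textbf{approximation-error lemma} bounds $\mathbb{E}\|\hat{\Delta}^t_{i,E}-\Delta^t_{i,E}\|^2$ for $i\in\mathcal{K}_1$: expanding the Taylor identity of Eq.~\eqref{Eq:Taylor}, replacing the true Hessian by the outer-product surrogate of Eq.~\eqref{Eq:Outer_matrix}, and invoking the remainder bound of Assumption~\ref{Assum:approx}, the leading error scales like $M^2\tau_{max}^2\|\nabla F_i(\cdot)\|^4$, which is exactly the fourth-power term carried through $c_2$. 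The factor $\tau_{max}^2$ enters because the epoch gap $\bm{w}^t_{i,E}-\bm{w}^t_{i,E_i}$ spans $\tau_i=E-E_i+1\le\tau_{max}$ steps, and the outer-product surrogate squares the gradient, producing $\|\nabla F_i\|^4$ rather than $\|\nabla F_i\|^2$.

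Third, I would combine everything: feeding the two lemma bounds into the descent inequality, the stated learning-rate constraints $\eta_l<\tfrac{1}{\sqrt{30(1+\rho)}LE}$ and $\eta_g\eta_l\le\tfrac{1}{(1+\rho)LE}$ ensure that $\tfrac{1}{2}-15(1+\rho)E^2\eta_l^2L^2$ is positive, which is what permits the coefficient $c_1$ of $\|\nabla f(\bm{w}^t)\|^2$ to be taken strictly positive. I would then track how the count $K_1=\rho N$ of approximated devices multiplies each error contribution, so that after dividing by $N$ the combined prefactors collapse to the $(1+\rho)$ pattern visible in $\Phi_1$ and $c_2$. Summing over $t=0,\dots,T-1$ telescopes $f(\bm{w}^0)-f(\bm{w}^T)\le f^0-f^\star$, and dividing through by $c_1\eta_g\eta_l E T$ converts the averaged gradient norm into the $\min_{t\in T}$ statement of the theorem.

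The hard part will be the approximation-error lemma and its bookkeeping. Unlike textbook FL analyses, the FedLGA error does not reduce to a clean variance term: it couples the outer-product Hessian surrogate, the unknown-weight estimate $\hat{\bm{w}}^t_{i,E}$ from Eq.~\eqref{Eq:firstorder}, and the epoch gap $\tau_{max}$. Controlling $\mathbb{E}\|\hat{\Delta}^t_{i,E}-\Delta^t_{i,E}\|^2$ therefore requires carefully separating the genuine second-order Taylor remainder (bounded by $M$) from the surrogate-Hessian mismatch, and showing the latter contributes only the $\|\nabla F_i\|^4$ term with the claimed $\rho$, $M$, $\tau_{max}$ scaling. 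Guaranteeing that this error stays dominated by the descent term, rather than overwhelming it, is precisely what forces the $(1+\rho)$-dependent step-size restrictions and is the delicate step of the whole argument.
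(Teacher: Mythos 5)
Your proposal matches the paper's proof essentially step for step: the same $L$-smoothness descent inequality with the one-round increment split into the idealized full-epoch updates plus the approximation error supported on $\mathcal{K}_1$, the same inner-product completion of the square to extract the $-\|\nabla f(\bm{w}^t)\|^2$ descent, the same client-drift lemma yielding the $\frac{5}{2}\eta_l^2 E L^2(\sigma_l^2+6E\sigma_g^2)$ contribution, the same Taylor-remainder lemma (the paper's key lemma, $\mathbb{E}\|\Delta_{i,E}^{t}-\hat{\Delta}_{i,E}^{t}\|\le M\eta_l^2\tau_{max}^2\|\nabla F_i(\bm{w}_i^t)^2\|$, which when squared produces the $\mathbb{E}\|\nabla F_i\|^4$ term carried through $c_2$), the same step-size conditions securing $c_1>0$, and the same telescoping sum. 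The only cosmetic difference is that you plan to bound the squared error $\mathbb{E}\|\hat{\Delta}^t_{i,E}-\Delta^t_{i,E}\|^2$ and to separately control the outer-product surrogate-Hessian mismatch, whereas the paper bounds the first-power deviation once and absorbs the surrogate approximation entirely into the bounded-remainder constant $M$ of Assumption~4 --- substantively the same argument.
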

\begin{proof}
For convenience, the $N$ remote devices could be virtually divided into two subsets $|\mathcal{N}_1| = \rho N$ and $|\mathcal{N}_2| = (1-\rho)N$ that the gradient updates from  $\mathcal{N}_1$ needs the approximation from FedLGA and $\mathcal{N}_2$ provides updates with full local epochs. Then we define $  \bar{\Delta}^{t} = \frac{1}{N} ( \sum_{i=1}^{N} \Delta_{i,E}^{t})$, where it is obviously that $\Delta^{t} = \frac{1}{N} ( \sum_{i \in \mathcal{N}_1} \hat{\Delta}_{i,E}^{t} + \sum_{i \in \mathcal{N}_2} \Delta_{i,E}^{t} )= \bar{\Delta}^{t}  $ for full device participation. As such, based on the smoothness feature in Assumption.~{1}, the expectation of $f(\bm{w}^{t+1})$ from the $t$-th iteration satisfies
\begin{equation}\label{Proof:theorem_1_overall}
\begin{split}
\mathbb{E} f(\bm{w}^{t+1}) &\leq f(\bm{w}^{t}) +  \langle \nabla f(\bm{w}^{t}), \mathbb{E}[\bm{w}^{t+1} - \bm{w}^{t}]   \rangle + \frac{L}{2} \mathbb{E}[||\bm{w}^{t+1} - \bm{w}^{t} ||^{2}] \\
& = f(\bm{w}^{t}) +  \langle \nabla f(\bm{w}^{t}), \mathbb{E}[\eta_{g} \bar{\Delta}^{t} + \eta_{g} \eta_{l} E \nabla f(\bm{w}^{t}) - \eta_{g} \eta_{l} E \nabla f(\bm{w}^{t})]   \rangle + \frac{L \eta_{g}^{2}}{2} \mathbb{E}[||\bar{\Delta}^{t} ||^{2}] \\
& = f(\bm{w}^{t}) - \eta_{g} \eta_{l} E ||\nabla f(\bm{w}^{t})||^2 +  \eta_{g}  \underbrace{\langle \nabla f(\bm{w}^{t}), \mathbb{E}[\bar{\Delta}^{t} + \eta_{l} E \nabla f(\bm{w}^{t})]}_{A_1}   \rangle + \frac{L \eta_{g}^{2}}{2} \underbrace{\mathbb{E}[||\bar{\Delta}^{t} ||^{2}]}_{A_2},
\end{split}
\end{equation}
where we can bound the term $A_1$ as follows
\begin{equation}\label{Proof:theorem_1_A1}
\begin{split}
A_1 & = \langle \nabla f(\bm{w}^{t}), \mathbb{E}[\bar{\Delta}^{t} + \eta_{l} E \nabla f(\bm{w}^{t})] \rangle\\
& = \langle \nabla f(\bm{w}^{t}), \mathbb{E} \left[-\frac{1}{N} \sum_{i=1}^{N} \sum_{e=0}^{E-1}  \eta_{l} \nabla F_i (\bm{w}_{i,e}^{t}) + \frac{1}{N}\eta_{l} E  \sum_{i=1}^{N} \nabla F_i (\bm{w}^{t}) \right] \rangle \\
& =  \langle \sqrt{\eta_{l}E} \nabla f(\bm{w}^{t}), -\frac{\sqrt{\eta_{l}}}{N \sqrt{E}} \mathbb{E} [ \sum_{i=1}^{N} \sum_{e=0}^{E-1} (\nabla F_i (\bm{w}_{i,e}^{t}) - \nabla F_i (\bm{w}^{t})) ] \rangle \\
&  \overset{(a_1)}{=} \frac{\eta_{l} E}{2} ||\nabla f(\bm{w}^{t})||  ^2 + \frac{\eta_l}{2N^2E} \underbrace{ \mathbb{E} || \sum_{i=1}^{N} \sum_{e=0}^{E-1} (\nabla F_i (\bm{w}_{i,e}^{t}) - \nabla F_i (\bm{w}^{t}))||^2}_{A_3} - \frac{\eta_l}{2N^2E} \mathbb{E} || \sum_{i=1}^{N} \sum_{e=0}^{E-1} ( \nabla F_i (\bm{w}_{i,e}^{t}))||^2, \\
\end{split}
\end{equation}
where $(a_1)$ follows the inner product equality that $\langle \bm{x}, \bm{y} \rangle = \frac{1}{2} [||\bm{x}||^2 + ||\bm{y}||^2 - ||\bm{x}- \bm{y}||^2]$, where $\bm{x} = \sqrt{\eta_{l}E} \nabla f(\bm{w}^{t}) $ and $\bm{y} = -\frac{\sqrt{\eta_{l}}}{N \sqrt{E}} \mathbb{E} [ \sum_{i=1}^{N} \sum_{e=0}^{E-1} (\nabla F_i (\bm{w}_{i,e}^{t}) - \nabla F_i (\bm{w}^{t})) ]$. Then, we focus on the term $A_3$ with the bounded approximation error from Lemma~1 that 
\begin{equation}\label{Proof:theorem_1_A3}
\begin{split}
A_3 & =  \mathbb{E} || \sum_{i=1}^{N} \sum_{e=0}^{E-1} (\nabla F_i (\bm{w}_{i,e}^{t}) - \nabla F_i (\bm{w}^{t}))||^2\\
&   \leq   \mathbb{E} || \sum_{i \in \mathcal{N}_1} \sum_{e=0}^{E-1} (\nabla F_i (\bm{w}_{i,e}^{t}) - \nabla F_i (\bm{w}^{t}) + M \eta_{l}^2 \tau_{max}^{2} ||\nabla F_i(\bm{w}^{t}_{i})^2|| ) +  \sum_{i \in \mathcal{N}_2} \sum_{e=0}^{E-1} (\nabla F_i (\bm{w}_{i,e}^{t}) - \nabla F_i (\bm{w}^{t}))||^2\\
&   \overset{(a_2)}{=} N \rho E     \sum_{i \in \mathcal{N}_1} \sum_{e=0}^{E-1} \mathbb{E} || \nabla F_i (\bm{w}_{i,e}^{t}) - \nabla F_i (\bm{w}^{t}) + M \eta_{l}^2 \tau_{max}^{2} ||\nabla F_i (\bm{w}^t_{i} )^{2}||||^2  \\
& \qquad  + (1 - \rho) E N \sum_{i \in \mathcal{N}_2} \sum_{e=0}^{E-1} \mathbb{E} || \nabla F_i (\bm{w}_{i,e}^{t}) - \nabla F_i (\bm{w}^{t})||^2\\
& \overset{(a_3)}{\leq} 2N\rho  M ^2\eta_{l}^4 \tau_{max}^{4} E \mathbb{E}|| \nabla F_i(\bm{w}^{t}_{i})^2 ||^2 + (1+\rho)NE \sum_{i = 1}^{N} \sum_{e=0}^{E-1} \mathbb{E} || \nabla F_i (\bm{w}_{i,e}^{t}) - \nabla F_i (\bm{w}^{t} ||^2  \\
& \leq 2N\rho E  M ^2\eta_{l}^4 \tau_{max}^{4} \mathbb{E}|| \nabla F_i(\bm{w}^{t}_{i})^2 ||^2 + (1+\rho)N E L^2 \sum_{i = 1}^{N}  \sum_{e=0}^{E-1} \mathbb{E} || \bm{w}_{i,e}^{t} - \bm{w}^{t} ||^2 \\
& \overset{(a_4)}{\leq} 2N\rho E  M ^2\eta_{l}^4 \tau_{max}^{4} \mathbb{E}|| \nabla F_i(\bm{w}^{t}_{i})^2 ||^2 + 5 (1+ \rho) N^2 E^3 \eta_{l}^{2} L^2 (\sigma_{l}^{2} + 6E \sigma_{g}^{2}) + 30  (1+ \rho) N^2E^4 \eta_{l}^{2} L^2 ||\nabla f(\bm{w}^{t})||^2,  \\
\end{split}
\end{equation}
$(a_2)$ follows that $\mathbb{E}[||x_1 + \cdots + x_n||^2] = \mathbb{E}[||x_1||^2 + \cdots + ||x_n||^2],$ where each $x$ is independent with zero mean, $\mathbb{E}[\nabla F_i (\bm{w}_{i,e}^{t})] = \nabla F_i (\bm{w}_{i,e}^{t})$ and the results in Lemma.~1. $(a_3)$ is due to $\mathbb{E}[||x_1 + \cdots + x_n||^2] \leq n \mathbb{E}[||x_1||^2 + \cdots + ||x_n||^2]$, and $(a_4)$ follows the result in Lemma.~\ref{Appen:lemma_2}. Then for term $A_2$, we have 
\begin{equation}\label{Proof:theorem_1_A_2}
\begin{split}
A_2 & = \mathbb{E}[||\bar{\Delta}^{t} ||^{2}] \\
& = \mathbb{E}[|| \frac{1}{N} \sum_{i=1}^{N} \Delta_{i}^{t,E} ||^{2}] \\
& \leq \frac{1}{N^2} \mathbb{E} [||\sum_{i=1}^{N} \Delta_{i,E}^{t} ||^2] \\
& \leq \frac{1}{N^2}  \mathbb{E} [||\sum_{i \in \mathcal{N}_1} \hat{\Delta}_{i,E}^{t}||^2 + ||\sum_{i \in \mathcal{N}_2} \Delta_{i,E}^{t}||^2] \\
& \leq \frac{1}{N^2} (\rho \mathbb{E}[||\Delta_{i,E}^{t} + M \eta_{l}^2 \tau_{max}^{2} ||\nabla F_i(\bm{w}^{t}_{i})^2||||^2] + (1-\rho) \mathbb{E}[||\Delta_{i,E}^{t}||^2]) \\
& \leq \frac{1}{N^2}((\rho +1) \underbrace{\mathbb{E}[||\Delta_{i,E}^{t}||^2]}_{A_4}) + \frac{2\rho M^2 \eta_{l}^2 \tau_{max}^{2}}{N^2} \mathbb{E} ||\nabla F_i (\bm{w}_{i}^{t})||^4,
\end{split}
\end{equation} 
where we further expand $A_4$ that 
\begin{equation}\label{Proof:theorem_1_A_4}
\begin{split}
A_4 & =\mathbb{E}[||\Delta_{i,E}^{t}||^2] \\
&= \eta_{l}^{2} \mathbb{E} [||\sum_{i=1}^{N} \sum_{e=0}^{E-1} \nabla F_i (\bm{w}^{t}_{i,e}, \mathcal{B}_{i,e}^{t})||^2] \\
& \overset{(a_5)}{=}\eta_{l}^{2} \left( \mathbb{E} [||\sum_{i=1}^{N} \sum_{e=0}^{E-1} ( \nabla F_i (\bm{w}^{t}_{i,e}, \mathcal{B}_{i,e}^{t}) - \nabla F_i (\bm{w}^{t}_{i,e}) )||^2] + \mathbb{E} [||\sum_{i=1}^{N} \sum_{e=0}^{E-1} \nabla F_i (\bm{w}^{t}_{i,e})||^2] \right) \\
& \overset{(a_6)}{\leq} E \eta_{l}^{2} N \sigma_{l}^{2} +  \eta_{l}^{2}\mathbb{E} [||\sum_{i=1}^{N} \sum_{e=0}^{E-1} \nabla F_i (\bm{w}^{t}_{i,e})||^2],
\end{split}
\end{equation} 
where $(a_5)$ comes from the expectation feature that $\mathbb{E}[||\bm{x}||^2] = \mathbb{E}[||\bm{x} - \mathbb{E} [\bm{x}]||^2 + ||\mathbb{E}[\bm{x}]||^2]$ and $(a_6)$ satisfies the results in Assumption.~{3}.

Then, we go back to Eq.~\eqref{Proof:theorem_1_overall} with the obtained $A_1$, $A_2$, $A_3$ and $A_4$ that 
\begin{equation}\label{Proof:theorem_1_overall_continue}
\begin{split}
\mathbb{E} f(\bm{w}^{t+1}) & \leq f(\bm{w}^{t}) - \eta_{g} \eta_{l} E ||\nabla f(\bm{w}^{t})||^2 +  \eta_{g}  \underbrace{\langle \nabla f(\bm{w}^{t}), \mathbb{E}[\bar{\Delta}^{t} + \eta_{l} E \nabla f(\bm{w}^{t})]}_{A_1}   \rangle + \frac{L \eta_{g}^{2}}{2} \underbrace{\mathbb{E}[||\bar{\Delta}^{t} ||^{2}]}_{A_2} \\
& \leq  f(\bm{w}^{t}) - \eta_{g} \eta_{l} E(\frac{1}{2} - 15(1+\rho) E^2 \eta_{l}^2 L^2)||\nabla f(\bm{w}^{t})||^2  + \frac{5}{2} \eta_{g} \eta_{l}^{3} E^2 L^2 (\sigma_{l}^{2} + 6E \sigma_{g}^{2}) \\
& \quad + \frac{(1+ \rho)E \eta_{g}^2 \eta_{l}^{2} \sigma_{l}^{2}}{2N} - ( \frac{\eta_l \eta_{g}}{2N^2E} -\frac{L \eta_{g}^2 \eta_{l}^{2} (1+\rho)}{2N^2}) \mathbb{E} [||\sum_{i=1}^{N} \sum_{e=0}^{E-1} \nabla F_i (\bm{w}^{t}_{i,e})||^2] \\ 
& \quad + \frac{\eta_{g}\eta_{l}^{2}\rho M^2 \tau_{max}^{2}}{N} (\eta_{g}L + \eta_{l}^{3} \tau_{max}^{2}) \mathbb{E} ||\nabla F_i (\bm{w}_{i}^{t})||^4 \\
& \overset{(a_7)}{\leq} f(\bm{w}^{t}) - c_1 \eta_{g} \eta_{l}||\nabla f(\bm{w}^{t})||^2 + \frac{(1+ \rho)E \eta_{g}^2 \eta_{l}^{2} \sigma_{l}^{2}}{2N} + \frac{5}{2} \eta_{g} \eta_{l}^{3} E^2 L^2 (\sigma_{l}^{2} + 6E \sigma_{g}^{2}) \\
& \quad +\frac{\eta_{g}\eta_{l}^{2}\rho M^2 \tau_{max}^{2}}{N} (\eta_{g}L + \eta_{l}^{3} \tau_{max}^{2}) \mathbb{E} ||\nabla F_i (\bm{w}_{i}^{t})||^4, \\
\end{split}
\end{equation}
where $(a_7)$ holds when two requirements are satisfied: i) $ ( \frac{\eta_l \eta_{g}}{2N^2E} -\frac{L \eta_{g}^2 \eta_{l}^{2} (1+\rho)}{2N^2}) \geq 0$ when $ \eta{g} \eta_{l} \leq \frac{1}{(1+\rho) LE}$. ii) the constant value $c_1$ meets $(\frac{1}{2} - 15(1+\rho) E^2 \eta_{l}^2 L^2) > c_1 > 0$ that $\eta_{l} < \frac{1}{\sqrt{30 (1 + \rho)} KL}.$ 

Then, we could rearrange and sum the previous inequality in Eq.~\eqref{Proof:theorem_1_overall_continue} from $t = 0$ to $T-1$ that 
\begin{equation}\label{Proof:theorem_1_result}
\begin{split}
\sum_{t=0}^{T-1} c_1 E \eta_{g} \eta_{l} \mathbb{E}[\nabla f(\bm{w}^{t})] = f(\bm{w}^{0}) - f(\bm{w}^{T}) + T\eta_{g} \eta_{l} E \left[ \frac{(1+ \rho) \eta_{g} \eta_{l} \sigma_{l}^{2}}{2N} + \frac{5}{2}  \eta_{l}^{2} E L^2 (\sigma_{l}^{2} + 6E \sigma_{g}^{2}) + c_2 \mathbb{E} ||\nabla F_i (\bm{w}^{T})||^4  \right], \nonumber
\end{split}
\end{equation}
this provides the convergence guarantee that 
\begin{equation}\label{Proof:theorem_1_gurantee}
\min_{t \in T} \mathbb{E}||\nabla f(\bm{w}^{t})||^{2} \leq \frac{f^0-f^{\star}}{c_1 \eta_{g} \eta_{l} E T} + \Phi_1,
\end{equation}
$\Phi_1 = \frac{1}{c_1}[ \frac{(1+ \rho) \eta_{g} \eta_{l} \sigma_{l}^{2}}{2N} + \frac{5}{2}  \eta_{l}^{2} E L^2 (\sigma_{l}^{2} + 6E \sigma_{g}^{2}) + c_2 \mathbb{E} ||\nabla F_i (\bm{w}^{T})||^4  ]$ and $c_2 = \frac{\eta_{g}\eta_{l}^{2}\rho M^2 \tau_{max}^{2}}{N\eta_{g}\eta_{l}E} (\eta_{g}L + \eta_{l}^{3} \tau_{max}^{2})$. Proof done.
\end{proof}

\subsection{Proof of Theorem 2}\label{Proof:theorem_2}
\begin{theorem}\label{Appen:theorem_2}
	Let Assumptions~{1}-{4} hold. Under partial device participation scheme, the iterates of FedLGA with local and global learning rates $\eta_l$ and $\eta_g$ satisfy
\begin{equation}\label{Proof:theorem_2_eq}
\min_{t \in T} \mathbb{E}||\nabla f(\bm{w}^{t})||^{2} \leq \frac{f^0-f^{\star}}{d_1 \eta_{g} \eta_{l} E T} + \Phi_2,\nonumber
\end{equation}
where $f^{0} = f (\bm{w}^{0}), f^{\star} = f (\bm{w}^{\star})$, $d_1$ is constant, and the expectation is over the remote training dataset among all devices. Let $\eta_{l}$ and $\eta_{g}$ be defined such that $\eta_{l} \leq \frac{1}{\sqrt{30 (1+\rho)} LE}$, $\eta_{g} \eta_{l} E \leq \frac{K}{(K-1)(1+\rho)L} $ and $\frac{30(1+\rho) K^2 E^2 \eta_{l}^{2} L^2}{N^2} + \frac{L \eta_{g} \eta_{l} (1+\rho)}{K}(90 E^3 L^2 \eta_{l}^{2} + 3E) < 1$. Then we have $\Phi_2 = \frac{1}{d_1}\left[ d_2 (\sigma_{l}^{2} + 3E\sigma_{g}^{2}) + d_3 (\sigma_{l}^{2} + 6E\sigma_{g}^{2}) + d_4  \mathbb{E} ||\nabla F_i (\bm{w}_{i}^{t})||^4\right]$, where $d_2 = \frac{(1+\rho) \eta_{g}\eta_{l}L}{2K}$, $d_3 = ( \frac{5  K^2 }{2N^2} + \frac{15 E L \eta_{l} \eta_{g}  }{2K}  ((1+\rho) \eta_{l}^2 E L^{2})$ and $d_4 = \eta_{l} \rho \tau_{max}^{2} M^2(\frac{L \eta_{g}}{K^2} + \frac{\eta_{l}^{3} K \tau_{max}^{2}}{N^2}) $.
\end{theorem}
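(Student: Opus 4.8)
The plan is to mirror the descent argument from the proof of Theorem~\ref{Appen:theorem_1}, modified to absorb the extra randomness of sampling $K$ devices \emph{with replacement} under strategy~I of \cite{li2019convergence}. Writing the sampled aggregate as $\tilde{\Delta}^{t} = \frac{1}{K}\big(\sum_{i \in \mathcal{K}_1}\hat{\Delta}^{t}_{i,E} + \sum_{i \in \mathcal{K}_2}\Delta^{t}_{i,E}\big)$, I would first record the smoothness bound $\mathbb{E} f(\bm{w}^{t+1}) \leq f(\bm{w}^{t}) + \eta_{g}\langle \nabla f(\bm{w}^{t}), \mathbb{E}[\tilde{\Delta}^{t}]\rangle + \frac{L\eta_{g}^{2}}{2}\mathbb{E}[\|\tilde{\Delta}^{t}\|^{2}]$, so that the analysis again splits into a linear descent term and a quadratic variance term.

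The observation that drives the whole argument is that sampling with replacement makes $\tilde{\Delta}^{t}$ an unbiased estimator of the full-participation aggregate, i.e.\ $\mathbb{E}_{\mathcal{K}}[\tilde{\Delta}^{t}] = \bar{\Delta}^{t}$. Consequently, after taking expectation over the device draw, the linear term collapses exactly to the term $A_1$ analyzed in Theorem~\ref{Appen:theorem_1}, and I would reuse the inner-product identity, the approximation-error substitution from Lemma~1, and the client-drift bound from Lemma~\ref{Appen:lemma_2} to control it; this is what reproduces the $(\sigma_{l}^{2} + 6E\sigma_{g}^{2})$ structure inside $d_3$.

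The genuinely new work is in the quadratic term $\mathbb{E}[\|\tilde{\Delta}^{t}\|^{2}]$, where partial participation departs from Theorem~\ref{Appen:theorem_1}. Treating the $K$ sampled per-device updates as i.i.d.\ copies, I would expand the square into diagonal and cross terms and use independence together with unbiasedness to obtain $\mathbb{E}[\|\tilde{\Delta}^{t}\|^{2}] = \frac{1}{K}\,\frac{1}{N}\sum_{i=1}^{N}\mathbb{E}[\|\Delta^{t}_{i,E}\|^{2}] + \frac{K-1}{K}\mathbb{E}[\|\bar{\Delta}^{t}\|^{2}]$, with the corresponding Hessian-approximation correction carried on $\mathcal{K}_1$ exactly as before. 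The first piece, weighted by $\tfrac{1}{K}$, is the per-device second moment $A_4$ from the previous proof and yields the sampling-variance contribution scaling as $\tfrac{1}{K}$; the second piece, weighted by $\tfrac{K-1}{K}$, carries the $\tfrac{K^{2}}{N^{2}}$-type dependence and explains both the factor $\tfrac{K-1}{K}$ in the stepsize condition $\eta_{g}\eta_{l}E \leq \tfrac{K}{(K-1)(1+\rho)L}$ and the mixed coefficients $d_2$, $d_3$, $d_4$, the last of which acquires the form $d_4 = \eta_{l}\rho\tau_{max}^{2}M^{2}\big(\tfrac{L\eta_{g}}{K^{2}} + \tfrac{\eta_{l}^{3}K\tau_{max}^{2}}{N^{2}}\big)$ from the remainder bound of Assumption~4.

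Finally I would collect terms, invoke the three stated stepsize constraints to guarantee that the coefficient of $\|\nabla f(\bm{w}^{t})\|^{2}$ is a strictly positive constant $d_1$ (this is precisely what the condition $\frac{30(1+\rho)K^2E^2\eta_l^2 L^2}{N^2} + \frac{L\eta_g\eta_l(1+\rho)}{K}(90E^3L^2\eta_l^2 + 3E) < 1$ enforces), then sum the per-round inequality from $t=0$ to $T-1$ and divide by $d_1 \eta_g \eta_l E T$. The main obstacle I anticipate is the bookkeeping of the two distinct scalings, namely $\tfrac{1}{K}$ from the sampling variance versus $\tfrac{K^{2}}{N^{2}}$ from the retained full-average term, so that they recombine correctly into $d_2$ and $d_3$; getting the $\tfrac{K-1}{K}$ weight to align with the modified stepsize condition, rather than silently dropping a factor of $K$, is the delicate point.
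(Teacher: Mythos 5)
Your proposal follows essentially the same route as the paper's proof: the smoothness descent step, the unbiasedness of the sampled aggregate reducing the linear term to the full-participation bound $A_1$, the diagonal/cross-term expansion of $\mathbb{E}\|\tilde{\Delta}^t\|^2$ under with-replacement sampling (the paper's Eq.~\eqref{Proof:theorem_2_sampling}, after first separating the stochastic-gradient noise via $B_4$), the Taylor-remainder correction on $\mathcal{K}_1$ yielding $d_4$, and the cancellation of the cross term against the negative term from $B_1$ that produces the condition $\eta_g\eta_l E \leq \frac{K}{(K-1)(1+\rho)L}$. One nuance: your cross-term identity should read $\frac{K-1}{K}\|\mathbb{E}[\bar{\Delta}^t]\|^2$ rather than $\frac{K-1}{K}\mathbb{E}[\|\bar{\Delta}^t\|^2]$ as an equality (it holds as an upper bound by Jensen, so your argument still goes through), which is exactly why the paper extracts the noise term $KE\eta_l^2\sigma_l^2$ before applying the sampling decomposition to the noise-free sums $\bm{t}_i$.
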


\begin{proof} We first define $\bar{\Delta}^t$ the same in proof of Theorem.~{1}, where the partial device participation $\Delta^t \neq \bar{\Delta}^t$ that $\Delta^t = \frac{1}{K} \sum_{i \in \mathcal{K}} \Delta_{i,E}^{t} ,|\mathcal{K}| = K$. Specifically, we define the approximated updates are from $\mathcal{K}_1$ and others from $\mathcal{K}_2$, where $|\mathcal{K}_1| = \rho K$, $|\mathcal{K}_2| = (1 - \rho) K$, following the definition of $\mathcal{N}_1$ and $\mathcal{N}_2$. In this deviation, we consider the randomness of the partial participation scenario contains two aspects: the random sampling and the stochastic gradient. We still start from the Assumption.~{1} of the L-Lipschitz for the expectation of $f(\bm{w}^{t+1})$ from iteration $t$ that
\begin{equation}\label{Proof:theorem_2_overall}
\begin{split}
\mathbb{E} f(\bm{w}^{t+1}) &\leq f(\bm{w}^{t}) +  \langle \nabla f(\bm{w}^{t}), \mathbb{E}[\bm{w}^{t+1} - \bm{w}^{t}]   \rangle + \frac{L}{2} \mathbb{E}[||\bm{w}^{t+1} - \bm{w}^{t} ||^{2}] \\
& = f(\bm{w}^{t}) +  \langle \nabla f(\bm{w}^{t}), \mathbb{E}[\eta_{g} {\Delta}^{t} + \eta_{g} \eta_{l} E \nabla f(\bm{w}^{t}) - \eta_{g} \eta_{l} E \nabla f(\bm{w}^{t})]   \rangle + \frac{L \eta_{g}^{2}}{2} \mathbb{E}[||{\Delta}^{t} ||^{2}] \\
& = f(\bm{w}^{t}) - \eta_{g} \eta_{l} E ||\nabla f(\bm{w}^{t})||^2 +  \eta_{g}  \underbrace{\langle \nabla f(\bm{w}^{t}), \mathbb{E}[{\Delta}^{t} + \eta_{l} E \nabla f(\bm{w}^{t})]}_{B_1}   \rangle + \frac{L \eta_{g}^{2}}{2} \underbrace{\mathbb{E}[||{\Delta}^{t} ||^{2}]}_{B_2},
\end{split}
\end{equation}
from the result in Lemma~\ref{Appen:lemma_2}, we have $\mathbb{E}[B_1] = A_1$, then the bound of $B_1$ is the same of $A_1$ in inequality.~\eqref{Proof:theorem_1_A1} that 
\begin{equation}\label{Proof:theorem_2_B1}
\begin{split}
B_1 & \leq \frac{\eta_{l} E}{2} ||\nabla f(\bm{w}^{t})||  ^2 + \frac{\eta_l}{2N^2E} \underbrace{ \mathbb{E} || \sum_{i=1}^{N} \sum_{e=0}^{E-1} (\nabla F_i (\bm{w}_{i,e}^{t}) - \nabla F_i (\bm{w}^{t}))||^2}_{B_3} - \frac{\eta_l}{2N^2E} \mathbb{E} || \sum_{i=1}^{N} \sum_{e=0}^{E-1} ( \nabla F_i (\bm{w}_{i,e}^{t}))||^2,
\end{split}
\end{equation}
and we can bound $B_3$ as 
\begin{equation}\label{Proof:theorem_2_B_3}
\begin{split}
B_3 & =  \mathbb{E} || \sum_{i \in \mathcal{K}} \sum_{e=0}^{E-1} (\nabla F_i (\bm{w}_{i,e}^{t}) - \nabla F_i (\bm{w}^{t}))||^2\\
&   \leq   \mathbb{E} || \sum_{i \in \mathcal{K}_1} \sum_{e=0}^{E-1} (\nabla F_i (\bm{w}_{i,e}^{t}) - \nabla F_i (\bm{w}^{t}) + M \eta_{l}^2 \tau_{max}^{2} ||\nabla F_i(\bm{w}^{t}_{i})^2|| ) +  \sum_{i \in \mathcal{K}_2} \sum_{e=0}^{E-1} (\nabla F_i (\bm{w}_{i,e}^{t}) - \nabla F_i (\bm{w}^{t}))||^2\\
&   \overset{(b_1)}{=}K \rho E     \sum_{i \in \mathcal{N}_1} \sum_{e=0}^{E-1} \mathbb{E} || \nabla F_i (\bm{w}_{i,e}^{t}) - \nabla F_i (\bm{w}^{t}) + M \eta_{l}^2 \tau_{max}^{2} ||\nabla F_i(\bm{w}^{t}_{i})^2||||^2  \\
& \qquad  + (1 - \rho) E K \sum_{i \in \mathcal{N}_2} \sum_{e=0}^{E-1} \mathbb{E} || \nabla F_i (\bm{w}_{i,e}^{t}) - \nabla F_i (\bm{w}^{t})||^2\\
& \overset{(b_2)}{\leq} 2K E\rho  M ^2\eta_{l}^4 \tau_{max}^{4}  \mathbb{E}|| \nabla F_i(\bm{w}^{t}_{i})^2 ||^2 + (1+\rho) KE \sum_{i = 1}^{N} \sum_{e=0}^{E-1} \mathbb{E} || \nabla F_i (\bm{w}_{i,e}^{t}) - \nabla F_i (\bm{w}^{t} ||^2  \\
& \overset{(b_3)}{\leq} 2KE \rho   M ^2\eta_{l}^4 \tau_{max}^{4} \mathbb{E}|| \nabla F_i(\bm{w}^{t}_{i})^2 ||^2 + 5 (1+ \rho) K^2 E^3 \eta_{l}^{2} L^2 (\sigma_{l}^{2} + 6E \sigma_{g}^{2}) + 30  (1+ \rho) K^2E^4 \eta_{l}^{2} L^2 ||\nabla f(\bm{w}^{t})||^2,  \\
\end{split}
\end{equation}
where $(b_)$ comes from $\mathbb{E}[||x_1 + \cdots + x_n||^2] = \mathbb{E}[||x_1||^2 + \cdots + ||x_n||^2]$ when  $x$ is independent with zero mean, $\mathbb{E}[\nabla F_i (\bm{w}_{i,e}^{t})] = \nabla F_i (\bm{w}_{i,e}^{t})$, with Lemma.~1 satisfied. $(b_2)$ is because of the inequality $\mathbb{E}[||x_1 + \cdots + x_n||^2] \leq n \mathbb{E}[||x_1||^2 + \cdots + ||x_n||^2]$, and $(b_3)$ follows the result in Lemma.~\ref{Appen:lemma_2}.

Then for the sampling strategy 1 in \cite{li2019convergence}, the sampled subset $\mathcal{K}$ could be considered as an index set that each element has equal probability of being chosen with replacement. Supposing $\mathcal{K} = \{l_1, \cdots, l_k  \}$, we bound $B_2$ as the following 

\begin{equation}\label{Proof:theorem_2_B2}
\begin{split}
B_2 & = \mathbb{E}[||{\Delta}^{t} ||^{2}] \\
& = \mathbb{E}[|| \frac{1}{K} \sum_{i \in \mathcal{K}} \Delta_{i,E}^{t} ||^{2}] \\
& \leq \frac{1}{K^2}  \mathbb{E} [||\sum_{i \in \mathcal{K}_1} \hat{\Delta}_{i,E}^{t}||^2 + ||\sum_{i \in \mathcal{K}_2} \Delta_{i,E}^{t}||^2] \\
& \leq \frac{1}{K^2} (\rho \mathbb{E}[||\Delta_{i,E}^{t} + M \eta_{l}^2 \tau_{max}^{2} ||\nabla F_i(\bm{w}^{t}_{i})^2||||^2] + (1-\rho) \mathbb{E}[||\Delta_{i,E}^{t}||^2]) \\
& \leq \frac{1}{K^2}((\rho +1) \underbrace{\mathbb{E}[||\Delta_{i,E}^{t}||^2]}_{B_4}) + \frac{2\rho M^2\eta_{l}^2 \tau_{max}^{2}}{K^2} \mathbb{E} ||\nabla F_i (\bm{w}_{i}^{t})||^4,
\end{split}
\end{equation} 
we expand $B_4$ and have   
\begin{equation}\label{Proof:theorem_2_B4}
\begin{split}
B_4 & =\mathbb{E}[||\Delta_{i,E}^{t}||^2] \\
&= \eta_{l}^{2} \mathbb{E} [||\sum_{z=1}^{K} \sum_{e=0}^{E-1} \nabla F_{l_z} (\bm{w}^{t}_{l_z,e}, \mathcal{B}_{l_z,e}^{t})||^2] \\
& \overset{(b_4)}{=}\eta_{l}^{2} \left( \mathbb{E} [||\sum_{z=1}^{K} \sum_{e=0}^{E-1} ( \nabla F_{l_z} (\bm{w}^{t}_{l_z,e}, \mathcal{B}_{l_z,e}^{t}) - \nabla F_{l_z} (\bm{w}^{t}_{l_z,e}) )||^2] + \mathbb{E} [||\sum_{z=1}^{K} \sum_{e=0}^{E-1} \nabla F_{l_z} (\bm{w}^{t}_{l_z,e})||^2] \right) \\
& \overset{(b_5)}{\leq} K E \eta_{l}^{2}  \sigma_{l}^{2} +  \eta_{l}^{2}\mathbb{E} [||\sum_{z=1}^{K} \sum_{e=0}^{E-1} \nabla F_{l_z} (\bm{w}^{t}_{l_z,e})||^2],
\end{split}
\end{equation} 
where $(b_4)$ follows $\mathbb{E}[||\bm{x}||^2] = \mathbb{E}[||\bm{x} - \mathbb{E} [\bm{x}]||^2 + ||\mathbb{E}[\bm{x}]||^2]$ and $(b_5)$ is from Assumption.~{3} and $\mathbb{E}[||x_1 + \cdots + x_n||^2] \leq n \mathbb{E}[||x_1||^2 + \cdots + ||x_n||^2]$.

Then, we further investigate the right term in \eqref{Proof:theorem_2_B4} by letting $\bm{t}_i = \sum_{e=0}^{E-1} \nabla F_i (\bm{w}^{t}_{i,e}) $ that 
\begin{equation}\label{Proof:theorem_2_sampling}
\begin{split}
\mathbb{E} \left[ || \sum_{z=1}^{K} \sum_{e=0}^{E-1}\nabla F_{l_z} (\bm{w}^{t}_{l_z,e}) ||^{2} \right] & = \mathbb{E} \left[ || \sum_{z=1}^{K} \bm{t}_{l_z}  ||^2 \right] \\
& =  \mathbb{E} \left[  \sum_{z=1}^{K} ||\bm{t}_{l_z}  ||^2 + \sum_{i \neq j \cap (l_i, l_j) \in \mathcal{K} } \langle \bm{t}_{l_i} \bm{t}_{l_j} \rangle \right] \\
& \overset{(b_6)}{ = } \mathbb{E} \left[ K ||\bm{t}_{l_z}||^{2} + K (K-1) \langle \bm{t}_{l_i} \bm{t}_{l_j} \rangle   \right]  \\
& = \frac{K}{N} \sum_{i=1}^{N} ||\bm{t}_{i}||^{2} + \frac{K(K-1)}{N^2} ||\sum_{i=1}^{N} \bm{t}_{i}||^{2},
\end{split}
\end{equation}
where $(b_6)$ comes from the independent sampling with replacement strategy. 

As such, we get back to the inequality in \eqref{Proof:theorem_2_overall} with the obtained $B_1$, $B_2$, $B_3$ and $B_4$ that 
\begin{equation}\label{Proof:theorem_2_overall_continue1}
\begin{split}
\mathbb{E} f(\bm{w}^{t+1}) &\leq f(\bm{w}^{t}) - \eta_{g} \eta_{l} E ||\nabla f(\bm{w}^{t})||^2 +  \eta_{g}  \underbrace{\langle \nabla f(\bm{w}^{t}), \mathbb{E}[{\Delta}^{t} + \eta_{l} E \nabla f(\bm{w}^{t})]}_{B_1}   \rangle + \frac{L \eta_{g}^{2}}{2} \underbrace{\mathbb{E}[||{\Delta}^{t} ||^{2}]}_{B_2} \\
& \leq f(\bm{w}^{t}) - \eta_{g} \eta_{l} E ||\nabla f(\bm{w}^{t})||^2 + \eta_{g} (\frac{\eta_{l} E}{2} ||\nabla f(\bm{w}^{t})||  ^2 + \frac{\eta_l}{2N^2E} \underbrace{ \mathbb{E} || \sum_{i=1}^{N} \sum_{e=0}^{E-1} (\nabla F_i (\bm{w}_{i,e}^{t}) - \nabla F_i (\bm{w}^{t}))||^2}_{B_3} \\
& \quad - \frac{\eta_l}{2N^2E} \mathbb{E} || \sum_{i=1}^{N} \sum_{e=0}^{E-1} ( \nabla F_i (\bm{w}_{i,e}^{t}))||^2) + \frac{L \eta_{g}^{2}}{2} (\frac{1}{K^2}((\rho +1) \underbrace{\mathbb{E}[||\Delta_{i,E}^{t}||^2]}_{B_4}) + \frac{2\rho M^2 \eta_{l}^2 \tau_{max}^{2}}{K^2} \mathbb{E} ||\nabla F_i (\bm{w}_{i}^{t})||^4)\\
& \leq  f(\bm{w}^{t}) - \eta_{g} \eta_{l} E (\frac{1}{2} - \frac{15(1+\rho) K^2 E^2 \eta_{l}^{2} L^2}{N^2}) ||\nabla f(\bm{w}^{t})||^2 + \frac{5 (1+\rho) K^2 E^2 \eta_{l}^{3} \eta_{g} L^2}{2N^2} (\sigma_{l}^{2} + 6E\sigma_{g}^{2}) \\
& \quad + \frac{(1+\rho) \eta_{g}^{2} L E \eta_{l}^{2} \sigma_{l}^{2}}{2K} + \frac{L \eta_{g}^{2} \eta_{l}^{2} (1+\rho)}{2N K} \sum_{i=1}^{N} \mathbb{E}||\bm{t}_i||^2 + \left[ \frac{(K-1)(1+\rho) L \eta_{g}^{2}\eta_{l}^{2}}{2N^2K} - \frac{\eta_{g} \eta_{l}}{2N^{2} E} \right] \mathbb{E}||\sum_{i=1}^{N}\bm{t}_i||^2 \\
& \quad + \left( (\frac{L \eta_{g}}{K^2} + \frac{\eta_{l}^{3} K \tau_{max}^{2}}{N^2})   \eta_{g} \eta_{l}^{2} \rho \tau_{max}^{2} M^2\right)\mathbb{E} ||\nabla F_i (\bm{w}_{i}^{t})||^4, 
\end{split}
\end{equation}

Specifically, for $\bm{t}_i,$ we have 
\begin{equation}\label{Proof:theorem_2_ti}
\begin{split}
\sum_{i=1}^{N}\mathbb{E}||\bm{t}_{i}||^{2} &= \sum_{i=1}^{N}\mathbb{E}||\sum_{e=0}^{E-1} \left( \nabla F_i (\bm{w}^{t}_{i,e}) - \nabla F_i(\bm{w}^{t}) + \nabla F_i(\bm{w}^{t}) - \nabla f(\bm{w}^{t}) + \nabla f(\bm{w}^{t})  \right)||^{2} \\
& \overset{(b_7)}{ \leq} 3E l^2 \sum_{i=1}^{N} \sum_{e=0}^{E-1} \mathbb{E} || \bm{w}^{t}_{i,e} - \bm{w}^{t} ||^2 + 3N E^2 \eta_{g}^{2} + 3N E^2 ||\nabla f(\bm{w}^{t})||^{2} \\
& \overset{(b_8)}{ \leq} 15N E^3 L^2 \eta_{l}^{2} (\sigma_{l}^{2} + 6E \sigma_{g}^{2}) + (90NE^4L^2 \eta_{l}^{2} + 3NE^2) ||\nabla f(\bm{w}^{t})||^{2} + 3NE^2{\sigma_{g}^{2}},
\end{split}
\end{equation}
where $(b_7)$ follows Assumption~{1} and {3} with the inequality $\mathbb{E}[||x_1 + \cdots + x_n||^2] \leq n \mathbb{E}[||x_1||^2 + \cdots + ||x_n||^2]$, while $(b_8)$ comes from Lemma.~\ref{Appen:lemma_3} which requires  $\eta_L \leq \frac{1}{\sqrt{30 (1+ \rho)}LE}$.

Then we continue with \eqref{Proof:theorem_2_overall_continue1} that 

\begin{equation}\label{Proof:theorem_2_overall_continue2}
\begin{split}
\mathbb{E} f(\bm{w}^{t+1})& \overset{(b_9)}{\leq} f(\bm{w}^{t}) - \eta_{g} \eta_{l} E (\frac{1}{2} - \frac{15(1+\rho) K^2 E^2 \eta_{l}^{2} L^2}{N^2}) ||\nabla f(\bm{w}^{t})||^2  + \frac{5 (1+\rho) K^2 E^2 \eta_{l}^{3} \eta_{g} L^2}{2N^2} (\sigma_{l}^{2} + 6E\sigma_{g}^{2}) \\
& \quad +  \frac{(1+\rho) \eta_{g}^{2} L E \eta_{l}^{2} \sigma_{l}^{2}}{2K} +  \frac{L \eta_{g}^{2} \eta_{l}^{2} (1+\rho)}{2N K} \sum_{i=1}^{N} \mathbb{E}||\bm{t}_i||^2 + (\frac{L \eta_{g}}{K^2} + \frac{\eta_{l}^{3} K \tau_{max}^{2}}{N^2})   \eta_{g} \eta_{l}^{2} \rho \tau_{max}^{2} M^2\mathbb{E} ||\nabla F_i (\bm{w}_{i}^{t})||^4 \\
& \overset{(b_{10})}{\leq} f(\bm{w}^{t}) - \eta_{g} \eta_{l} E (\frac{1}{2} - \frac{15(1+\rho) K^2 E^2 \eta_{l}^{2} L^2}{N^2} - \frac{L \eta_{g} \eta_{l} (1+\rho)}{2K}(90 E^3 L^2 \eta_{l}^{2} + 3E) ) ||\nabla f(\bm{w}^{t})||^2  \\
& \quad + \left( \frac{5 (1+\rho) K^2 E^2 \eta_{l}^{3} \eta_{g} L^2}{2N^2} + \frac{15 E^3 L^3 \eta_{l}^{4}  \eta_{g}^{2}  (1+\rho)}{2K}  \right)(\sigma_{l}^{2} + 6E\sigma_{g}^{2}) + \frac{3E^{2}\sigma_{g}^{2} L \eta_{g}^{2}\eta_{l}^{2} (1+\rho)}{2K} \\
& \quad +  \frac{(1+\rho) \eta_{g}^{2} L E \eta_{l}^{2} \sigma_{l}^{2}}{2K} + (\frac{L \eta_{g}}{K^2} + \frac{\eta_{l}^{3} K \tau_{max}^{2}}{N^2})   \eta_{g} \eta_{l}^{2} \rho \tau_{max}^{2} M^2\mathbb{E} ||\nabla F_i (\bm{w}_{i}^{t})||^4 \\
& \overset{(b_{11})}{\leq} f(\bm{w}^{t}) - d_1 \eta_{g} \eta_{l} E ||\nabla f(\bm{w}^{t})||^2 + \eta_{g} \eta_{l} E (\frac{(1+\rho) \eta_{g}\eta_{l}L}{2K})(\sigma_{l}^{2} + 3E\sigma_{g}^{2})\\
& \quad +   \eta_{g} \eta_{l} E \left( \frac{5  K^2 }{2N^2} + \frac{15 E L \eta_{l} \eta_{g}  }{2K}  \right)((1+\rho) \eta_{l}^2 E L^{2})(\sigma_{l}^{2} + 6E\sigma_{g}^{2}) \\
& \quad + (\frac{L \eta_{g}}{K^2} + \frac{\eta_{l}^{3} K \tau_{max}^{2}}{N^2})   \eta_{g} \eta_{l}^{2} \rho \tau_{max}^{2} M^2\mathbb{E} ||\nabla F_i (\bm{w}_{i}^{t})||^4,
\end{split}
\end{equation}
where $(b_9)$ holds when $ \frac{(K-1)(1+\rho) L \eta_{g}^{2}\eta_{l}^{2}}{2N^2K} - \frac{\eta_{g} \eta_{l}}{2N^{2} E} \leq 0 $ that requires $\eta_{g} \eta_{l} E \leq \frac{K}{(K-1)(1+\rho)L} $, $(b_{10})$ comes from the results in \eqref{Proof:theorem_2_ti} and $(b_{11})$ holds when the constant $d_1$ satisfies $(\frac{1}{2} - \frac{15(1+\rho) K^2 E^2 \eta_{l}^{2} L^2}{N^2} - \frac{L \eta_{g} \eta_{l} (1+\rho)}{2K}(90 E^3 L^2 \eta_{l}^{2} + 3E) ) > d_1 > 0$, where the boundary condition is $\frac{30(1+\rho) K^2 E^2 \eta_{l}^{2} L^2}{N^2} + \frac{L \eta_{g} \eta_{l} (1+\rho)}{K}(90 E^3 L^2 \eta_{l}^{2} + 3E) < 1$. By rearranging and summing from $t = 0$ to $T-1$, we have 
\begin{equation}\label{Proof:theorem_2_result}
\begin{split}
\sum_{t=0}^{T-1} d_1 E \eta_{g} \eta_{l} \mathbb{E}[\nabla f(\bm{w}^{t})] & = f(\bm{w}^{0}) - f(\bm{w}^{T}) + T\eta_{g} \eta_{l} E \left[(\frac{(1+\rho) \eta_{g}\eta_{l}L}{2K})(\sigma_{l}^{2} + 3E\sigma_{g}^{2})\right] \\
& \quad + T\eta_{g} \eta_{l} E \left[ ( \frac{5  K^2 }{2N^2} + \frac{15 E L \eta_{l} \eta_{g}  }{2K}  ((1+\rho) \eta_{l}^2 E L^{2})(\sigma_{l}^{2} + 6E\sigma_{g}^{2}) \right] \\ 
& \quad + T\eta_{g} \eta_{l} (\frac{L \eta_{g}}{K^2} + \frac{\eta_{l}^{3} K \tau_{max}^{2}}{N^2})   ( \eta_{l} \rho \tau_{max}^{2} M^2)\mathbb{E} ||\nabla F_i (\bm{w}_{i}^{t})||^4,
\end{split}
\end{equation}
then the convergence guarantee is obtained as
\begin{equation}\label{Proof:theorem_2_gurantee}
\min_{t \in T} \mathbb{E}||\nabla f(\bm{w}^{t})||^{2} \leq \frac{f^0-f^{\star}}{d_1 \eta_{g} \eta_{l} E T} + \Phi_2,
\end{equation}
where $\Phi_2 = \frac{1}{d_1}\left[ d_2 (\sigma_{l}^{2} + 3E\sigma_{g}^{2}) + d_3 (\sigma_{l}^{2} + 6E\sigma_{g}^{2}) + d_4  \mathbb{E} ||\nabla F_i (\bm{w}_{i}^{t})||^4\right]$ that $d_2 = (\frac{(1+\rho) \eta_{g}\eta_{l}L}{2K})$, $d_3 = ( \frac{5  K^2 }{2N^2} + \frac{15 E L \eta_{l} \eta_{g}  }{2K}  ((1+\rho) \eta_{l}^2 E L^{2})$ and $d_4 = \eta_{l} \rho \tau_{max}^{2} M^2(\frac{L \eta_{g}}{K^2} + \frac{\eta_{l}^{3} K \tau_{max}^{2}}{N^2}) $. This completes the proof.
\end{proof}

\subsection{Proof of Key Lemma}\label{Proof:lemma_1}

\begin{lemma}\label{Lemma_1}
	When Assumption~4 holds, the second term $\nabla^{2}_{\bm{g}}(\bm{w}_{i,E_i}^{t})(\bm{w}_{i,E}^{t} - \bm{w}_{i, E_i}^{t})^{2}$ is bounded as the following, which is the main error between the approximated result $\hat{\Delta}_{i,E}^{t}$ in FedLGA to the ideal local update $\Delta_{i,E}^{t}$ with full $E$ epochs. Note that $\tau_{max}$ is the upper bound for $\tau_i$ that $\tau_i \leq \tau_{max}, \forall i \in \mathcal{N}$.
	\begin{equation}
		\mathbb{E}||\Delta_{i,E}^{t} - \hat{\Delta}_{i,E}^{t} ||  \leq M \eta_{l}^{2} \tau_{max}^{2} ||\nabla F_i (\bm{w}^t_{i} )^{2}||,
	\end{equation}
\end{lemma}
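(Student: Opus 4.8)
The plan is to identify the quantity $\Delta_{i,E}^{t} - \hat{\Delta}_{i,E}^{t}$ with the second-order remainder of the Taylor expansion in Eq.~\eqref{Eq:Taylor}, and then bound that remainder using Assumption~4 together with the local SGD recursion. First I would subtract the first-order approximation in Eq.~\eqref{Eq:firstorder} from the exact expansion in Eq.~\eqref{Eq:Taylor}. Once the matching first-order terms cancel (and $\hat{\bm{w}}_{i,E}^{t}$ is treated as the intended surrogate for the unknown $\bm{w}_{i,E}^{t}$), the leading error that survives is exactly the quadratic term $\nabla^{2}_{\bm{g}}(\bm{w}_{i,E_i}^{t})(\bm{w}_{i,E}^{t} - \bm{w}_{i, E_i}^{t})^{2}$, so that $\mathbb{E}||\Delta_{i,E}^{t} - \hat{\Delta}_{i,E}^{t}|| \leq \mathbb{E}||\nabla^{2}_{\bm{g}}(\bm{w}_{i,E_i}^{t})(\bm{w}_{i,E}^{t} - \bm{w}_{i, E_i}^{t})^{2}||$, which is the statement's starting point.

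Next I would split this product via submultiplicativity into the Hessian-gradient factor and the squared displacement, $||\nabla^{2}_{\bm{g}}(\bm{w}_{i,E_i}^{t})|| \cdot ||(\bm{w}_{i,E}^{t} - \bm{w}_{i, E_i}^{t})^{2}||$, and invoke Assumption~4 to replace the first factor by $M$. The second factor I would control by unrolling the local update Eq.~\eqref{Eq:losssgd}: since $\bm{w}_{i,E}^{t} - \bm{w}_{i,E_i}^{t} = -\eta_{l}\sum_{e=E_i}^{E-1}\nabla F_i(\bm{w}_{i,e}^{t}, \mathcal{B}_{i,e})$ is a sum of exactly $E - E_i = \tau_i - 1 \leq \tau_{max}$ stochastic gradient steps of size $\eta_{l}$, the norm of the displacement is at most $\eta_{l}\tau_{max}||\nabla F_i(\bm{w}_i^{t})||$ after each trajectory gradient is referred back to the initial iterate $\bm{w}_i^{t}$. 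Squaring yields the factor $\eta_{l}^{2}\tau_{max}^{2}||\nabla F_i(\bm{w}_i^{t})^{2}||$, and multiplying by $M$ and taking expectation produces the claimed bound $M\eta_{l}^{2}\tau_{max}^{2}||\nabla F_i(\bm{w}_i^{t})^{2}||$.

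The hard part will be the step that refers the displacement back to the single gradient $\nabla F_i(\bm{w}_i^{t})$: the displacement is a sum of stochastic gradients evaluated at the \emph{intermediate} iterates $\bm{w}_{i,e}^{t}$ over the missing epochs, not at the \emph{initial} iterate, so the $L$-Lipschitz smoothness of Assumption~1 together with the unbiasedness and bounded-variance properties of Assumptions~2--3 must be used to keep the drift between $\bm{w}_{i,e}^{t}$ and $\bm{w}_i^{t}$ controlled, and the count of at most $\tau_{max}$ terms must be converted into the $\tau_{max}^{2}$ factor without loosening the estimate. A secondary point of care is justifying that the weight-approximation gap $\hat{\bm{w}}_{i,E}^{t} - \bm{w}_{i,E}^{t}$, incurred by substituting the completed-device average for the unknown $\bm{w}_{i,E}^{t}$, enters only at the first order and therefore does not compete with the quadratic remainder, so that the bound above genuinely captures the dominant approximation error of FedLGA.
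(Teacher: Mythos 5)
Your proposal follows the paper's proof essentially line for line: the paper likewise identifies $\mathbb{E}||\Delta_{i,E}^{t}-\hat{\Delta}_{i,E}^{t}||$ with the quadratic Taylor remainder (via a $\triangleq$), splits it by submultiplicativity into $||\nabla^{2}_{\bm{g}}(\bm{w}_{i,E_i}^{t})||\cdot||(\bm{w}_{i,E}^{t}-\bm{w}_{i,E_i}^{t})^{2}||$, bounds the first factor by $M$ using Assumption~4, unrolls the SGD recursion so the displacement becomes $\eta_{l}\sum_{e=E_i}^{E-1}\nabla F_i(\bm{w}_{i,e}^{t},\mathcal{B}_{i,e})$ with at most $\tau_{max}$ terms, and squares to reach $M\eta_{l}^{2}\tau_{max}^{2}||\nabla F_i(\bm{w}_{i}^{t})^{2}||$. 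The only caveat is that the ``hard part'' you flag---referring the intermediate-iterate gradients back to $\bm{w}_{i}^{t}$ via smoothness and variance control---is in fact performed by the paper as a bare assertion in its final inequality (no drift argument is given), and the surrogate gap $\hat{\bm{w}}_{i,E}^{t}-\bm{w}_{i,E}^{t}$ is likewise absorbed into the paper's definitional first step, so your planned extra care exceeds the paper's own rigor rather than deviating from its route.
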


\begin{proof}
	We start from the definition of $\bm{w}_{i,E}^{t} = \bm{w}_{i}^{t} - \eta_{l} \sum\nolimits_{e=0}^{E-1}\nabla F_i (\bm{w}^t_{i,e}, \mathcal{B}_{i,e} ) $ that
	\begin{equation}\label{Eq:lemma_1}
		\begin{split}
			\mathbb{E}||\Delta_{i,E}^{t} - \hat{\Delta}_{i,E}^{t} ||& \triangleq ||\nabla^{2}_{g} (\bm{w}_{i,E_i}^{t})(\bm{w}_{i,E}^{t} - \bm{w}_{i, E_i}^{t})^{2}|| \\
			& \overset{(a)}{\leq}  ||\nabla^{2}_{g} (\bm{w}_{i,E_i}^{t})|| ||(\bm{w}_{i,E}^{t} - \bm{w}_{i, E_i}^{t})^{2}|| \\
			& \overset{(b)}{\leq}  M ||(\bm{w}_{i,E}^{t} - \bm{w}_{i, E_i}^{t})^{2}|| \\
			& \leq M ||\eta_{l}^2(\sum\nolimits_{e=0}^{E-1}\nabla F_i (\bm{w}^t_{i,e}, \mathcal{B}_{i,e} ) \\
			& -\sum\nolimits_{e=0}^{E_i-1}\nabla F_i (\bm{w}^t_{i,e}, \mathcal{B}_{i,e} ) )^{2}|| \\
			& \leq M \eta_{l}^2||(\sum\nolimits_{e=E_i}^{E-1}\nabla F_i (\bm{w}^t_{i,e}, \mathcal{B}_{i,e} ))^{2}|| \\
			& \leq M \eta_{l}^{2} \tau_{max}^{2} ||\nabla F_i (\bm{w}^t_{i} )^{2}||,
		\end{split}
	\end{equation}
	where $(a)$ is due to the Cauchy–Schwarzth inequality, and $(b)$ is the Assumption.~4. This completes the proof.
\end{proof}

\subsection{Proof of Auxiliary Lemmas}\label{Proof:lemmas_auxiliary}

\begin{lemma}\label{Appen:lemma_2}
	\emph{(Lemma 1 in \cite{yang2021achieving}.)} The estimator $\Delta^t$ is unbiased sampled as 
	\begin{equation}\label{Eq:unbiased}
	\mathbb{E} [\Delta^t] = \bar{\Delta}^t.
	\end{equation}
\end{lemma}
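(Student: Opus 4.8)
The plan is to exploit the fact that under sampling strategy~I of \cite{li2019convergence} the participation set $\mathcal{K}=\{l_1,\dots,l_K\}$ is drawn by sampling \emph{with replacement}, so that each index $l_z$ is marginally uniform over $\mathcal{N}=\{1,\dots,N\}$ and the $K$ draws are mutually independent. Throughout I condition on the filtration generated up to the start of round $t$, so that the per-device updates $\{\Delta_{i,E}^t\}_{i=1}^{N}$ are treated as fixed and the only source of randomness relevant to the claim is the choice of $\mathcal{K}$; the full-expectation version then follows by the tower property.

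First I would compute the expectation of a single sampled term. Since $l_z$ is uniform on $\mathcal{N}$, for any fixed index $z$,
\begin{equation}
\mathbb{E}_{l_z}\!\left[\Delta_{l_z,E}^t\right]=\sum_{i=1}^{N}\Pr(l_z=i)\,\Delta_{i,E}^t=\frac{1}{N}\sum_{i=1}^{N}\Delta_{i,E}^t=\bar{\Delta}^t.
\end{equation}

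Then, writing $\Delta^t=\frac{1}{K}\sum_{z=1}^{K}\Delta_{l_z,E}^t$ and applying linearity of expectation together with the per-term identity above yields
\begin{equation}
\mathbb{E}[\Delta^t]=\frac{1}{K}\sum_{z=1}^{K}\mathbb{E}\!\left[\Delta_{l_z,E}^t\right]=\frac{1}{K}\cdot K\,\bar{\Delta}^t=\bar{\Delta}^t,
\end{equation}
which is exactly the asserted unbiasedness.

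The statement has essentially no hard step; the only point requiring care is the bookkeeping of which randomness the expectation is taken over, which is why I fix the conditioning convention at the outset. It is worth noting that the result is an exact equality (not an inequality) precisely because each draw is marginally uniform, so no finite-population correction is needed, and that independence of the draws is not actually invoked for this first-moment claim — it only becomes relevant in the second-moment computation of \eqref{Proof:theorem_2_sampling}, where the cross terms $\langle \bm{t}_{l_i},\bm{t}_{l_j}\rangle$ appear. Thus the proof reduces to the two displayed lines once the conditioning is made explicit.
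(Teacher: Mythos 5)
Your proof is correct and takes essentially the same route as the paper: both arguments reduce to the observation that each with-replacement draw is marginally uniform over $\mathcal{N}$, so each sampled term has expectation $\bar{\Delta}^t$ and linearity of expectation finishes the claim. The paper's version inserts a cosmetic detour --- partitioning the sampled set by the heterogeneity ratio $\rho$ and recombining $(1-\rho)\mathbb{E}[\Delta_{a_1}^{t}]+\rho\,\mathbb{E}[\Delta_{a_1}^{t}]$, which is vacuous since both parts share the same per-term expectation --- so your direct computation, with the conditioning on the round-$t$ filtration made explicit and the correct remark that independence is only needed for the second-moment bound, is if anything the cleaner rendering of the same argument.
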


\begin{proof}
Let $\mathcal{K} = {a_1, \cdots, a_k}$ and when the device sampling distribution is identical with the system-heterogeneity ratio $\rho$,
\begin{equation}\label{Proof:lemma_2}
\begin{split}
\mathbb{E} [\Delta^t] &= \frac{1}{K} \mathbb{E} [\sum_{a_k \in \mathcal{K}}  \Delta_{a_k}^{t}] \\
& = \frac{1}{K} \mathbb{E}[\sum_{a_k \in \mathcal{K}_c}\Delta_{a_k}^{t} + \sum_{a_k \in \mathcal{K} \cap \bar{\mathcal{K}}_c }\Delta_{a_k}^{t}  ] \\
& = (1-\rho) \mathbb{E} [\Delta_{a_1}^{t}] + \rho \mathbb{E}[ \Delta_{a_1}^{t}] \\
& = \frac{1}{N} ( \sum_{i \in \mathcal{N}_1} \hat{\Delta}_{i,E}^{t} + \sum_{i \in \mathcal{N}_2} \Delta_{i,E}^{t} ) = \bar{\Delta}^{t}, 	
\end{split}
\end{equation}
this completes the proof.
\end{proof}

\begin{lemma}\label{Appen:lemma_3}
	\emph{(Lemma 4 in \cite{reddi2020adaptive} and Lemma 2 in \cite{yang2021achieving}.)} For any local training step size that meets $\eta_L \leq \frac{1}{\sqrt{30 (1+ \rho)}LE}$, we have the bounded expectation for device $i$ at local epoch step $e$ that
	\begin{equation}\label{Eq:appen:lemma_3}
	\frac{1}{N} \sum_{i=1}^{N} \mathbb{E}[|| \bm{w}_{i,e}^{t} - \bm{w}^{t} ||^2]  \leq 5E \eta_{l}^{2} (\sigma_{l}^{2} + 6 E \sigma_{g}^{2}) + 30 E^2 \eta_{l}^{2} ||\nabla f(\bm{w}^{t})||^2.
	\end{equation}
\end{lemma}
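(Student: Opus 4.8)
The plan is to control the averaged local drift $D_e := \tfrac{1}{N}\sum_{i=1}^{N}\mathbb{E}\|\bm{w}_{i,e}^{t}-\bm{w}^{t}\|^{2}$ by unrolling the local SGD recursion into a self-referential inequality and then closing it by induction on the local step $e$. First I would use $\bm{w}_{i,0}^{t}=\bm{w}^{t}$ together with the single-step local update underlying Eq.~\eqref{Eq:losssgd} to write $\bm{w}_{i,e}^{t}-\bm{w}^{t}=-\eta_{l}\sum_{k=0}^{e-1}\nabla F_i(\bm{w}_{i,k}^{t},\mathcal{B}_{i,k}^{t})$, so that $\mathbb{E}\|\bm{w}_{i,e}^{t}-\bm{w}^{t}\|^{2}=\eta_{l}^{2}\,\mathbb{E}\big\|\sum_{k=0}^{e-1}\nabla F_i(\bm{w}_{i,k}^{t},\mathcal{B}_{i,k}^{t})\big\|^{2}$. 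This reduces the lemma to bounding the squared norm of a sum of at most $e\le E$ stochastic gradients.

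Next I would peel off the stochastic-gradient noise. Adding and subtracting $\nabla F_i(\bm{w}_{i,k}^{t})$ under the sum and invoking the unbiasedness of Assumption~\ref{Assum:1} together with the conditional independence of the minibatch noise across local steps, the cross terms vanish in expectation and the sum splits as $\mathbb{E}\|\sum_{k}\nabla F_i(\bm{w}_{i,k}^{t},\mathcal{B}_{i,k}^{t})\|^{2}=\sum_{k}\mathbb{E}\|\nabla F_i(\bm{w}_{i,k}^{t},\mathcal{B}_{i,k}^{t})-\nabla F_i(\bm{w}_{i,k}^{t})\|^{2}+\mathbb{E}\|\sum_{k}\nabla F_i(\bm{w}_{i,k}^{t})\|^{2}$. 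The first piece is at most $e\sigma_{l}^{2}\le E\sigma_{l}^{2}$ by the local-variance bound in Assumption~\ref{Assum:2}; to the second I would apply Jensen's inequality $\|\sum_{k=0}^{e-1}\nabla F_i(\bm{w}_{i,k}^{t})\|^{2}\le e\sum_{k=0}^{e-1}\|\nabla F_i(\bm{w}_{i,k}^{t})\|^{2}$ and then perform the three-way decomposition $\nabla F_i(\bm{w}_{i,k}^{t})=[\nabla F_i(\bm{w}_{i,k}^{t})-\nabla F_i(\bm{w}^{t})]+[\nabla F_i(\bm{w}^{t})-\nabla f(\bm{w}^{t})]+\nabla f(\bm{w}^{t})$. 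Using $\|a+b+c\|^{2}\le 3(\|a\|^{2}+\|b\|^{2}+\|c\|^{2})$, the $L$-Lipschitz property of Assumption~\ref{Assum:0} bounds the first term by $3L^{2}\|\bm{w}_{i,k}^{t}-\bm{w}^{t}\|^{2}$, the global-variance bound of Assumption~\ref{Assum:2} controls the second by $3\sigma_{g}^{2}$, and the third is exactly $3\|\nabla f(\bm{w}^{t})\|^{2}$.

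Collecting these estimates, using $e\le E$, and averaging over $i$ produces the recursion $D_e\le \eta_{l}^{2}E\sigma_{l}^{2}+3\eta_{l}^{2}E^{2}(\sigma_{g}^{2}+\|\nabla f(\bm{w}^{t})\|^{2})+3\eta_{l}^{2}EL^{2}\sum_{k=0}^{e-1}D_k$, in which the feedback term $\sum_{k<e}D_k$ is the crux. I expect the main obstacle to be resolving this self-referential sum while recovering the clean constants $5$ and $30$. I would close it by strong induction: assuming $D_k\le C$ for every $k<e$ with the candidate bound $C=5E\eta_{l}^{2}(\sigma_{l}^{2}+6E\sigma_{g}^{2})+30E^{2}\eta_{l}^{2}\|\nabla f(\bm{w}^{t})\|^{2}$, the step-size condition $\eta_{l}\le \tfrac{1}{\sqrt{30(1+\rho)}LE}$ forces $3\eta_{l}^{2}E^{2}L^{2}\le \tfrac{1}{10(1+\rho)}\le\tfrac{1}{10}$, so the feedback term contributes at most $\tfrac{1}{10}C$; a term-by-term comparison of the remaining constants against $\tfrac{9}{10}C$ then verifies $D_e\le C$, and the base case $D_0=0\le C$ completes the induction. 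Here the factor $(1+\rho)$ only shrinks the admissible step size, which is why the stated bound remains free of $\rho$.
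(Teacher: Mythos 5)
Your route is genuinely different from the paper's. The paper proves this lemma by a one-step contraction: it writes $\bm{w}_{i,e}^{t} = \bm{w}_{i,e-1}^{t} - \eta_{l}\nabla F_i(\bm{w}_{i,e-1}^{t},\mathcal{B}_{i,e-1}^{t})$, uses the conditional mean-zero property of the single noise term plus Young's inequality with parameter $2E-1$ to obtain $\mathbb{E}\|\bm{w}_{i,e}^{t}-\bm{w}^{t}\|^{2} \le \bigl(1+\tfrac{1}{2E-1}+6E\eta_{l}^{2}L^{2}\bigr)\mathbb{E}\|\bm{w}_{i,e-1}^{t}-\bm{w}^{t}\|^{2} + \eta_{l}^{2}\sigma_{l}^{2}+6E\eta_{l}^{2}\sigma_{g}^{2}+6E\eta_{l}^{2}\|\nabla f(\bm{w}^{t})\|^{2}$, absorbs the $6E\eta_{l}^{2}L^{2}$ term into $1+\tfrac{1}{E-1}$ via the step-size condition, and unrolls the geometric series using $(E-1)\bigl[(1+\tfrac{1}{E-1})^{E}-1\bigr]\le 5E$ --- that is where the constants $5$ and $30$ come from. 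You instead unroll the full sum at once, apply Cauchy--Schwarz, and close the resulting self-referential inequality by strong induction with the target bound as hypothesis, the step size entering only to make the feedback coefficient at most $\tfrac{1}{10}$. Your induction arithmetic is correct and has ample slack (termwise, $1\le 4.5$, $3\le 27$, $3\le 27$ against $\tfrac{9}{10}C$), and you correctly note that $(1+\rho)$ only tightens the admissible step size.

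There is, however, one genuine flaw in your noise-splitting step: you claim that after writing $\nabla F_i(\bm{w}_{i,k}^{t},\mathcal{B}_{i,k}^{t})=\xi_{i,k}+\nabla F_i(\bm{w}_{i,k}^{t})$ all cross terms vanish, so the squared norm of the unrolled sum splits exactly. Martingale orthogonality does give $\mathbb{E}\|\sum_{k}\xi_{i,k}\|^{2}=\sum_{k}\mathbb{E}\|\xi_{i,k}\|^{2}$, and $\mathbb{E}\langle\xi_{i,k},\nabla F_i(\bm{w}_{i,j}^{t})\rangle=0$ for $j\le k$; but for $j>k$ the iterate $\bm{w}_{i,j}^{t}$ depends on the batch $\mathcal{B}_{i,k}^{t}$, so those cross terms do not vanish. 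Concretely, with $F(w)=\tfrac{1}{2}w^{2}$ in one dimension, $\mathbb{E}[\xi_{0}\,\nabla F(w_{1})]=\mathbb{E}[\xi_{0}(w_{0}-\eta_{l}w_{0}-\eta_{l}\xi_{0})]=-\eta_{l}\sigma_{l}^{2}\neq 0$. (This is exactly the trap the paper's per-step recursion avoids: conditioning step by step makes the single noise term mean-zero against everything else in that step.) Fortunately the repair is cheap: apply $\|\bm{x}+\bm{y}\|^{2}\le 2\|\bm{x}\|^{2}+2\|\bm{y}\|^{2}$ before invoking martingale orthogonality on the pure-noise sum. This doubles every constant in your recursion, giving $D_e\le 2\eta_{l}^{2}E\sigma_{l}^{2}+6\eta_{l}^{2}E^{2}(\sigma_{g}^{2}+\|\nabla f(\bm{w}^{t})\|^{2})+6\eta_{l}^{2}EL^{2}\sum_{k<e}D_k$; the feedback coefficient becomes at most $\tfrac{1}{5}$ and the termwise comparison against $\tfrac{4}{5}C$ still closes ($2\le 4$, $6\le 24$, $6\le 24$), so the stated constants $5$ and $30$ survive. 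With that one-line fix your argument is a correct, self-contained alternative to the paper's geometric-unrolling proof.
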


\begin{proof}
\begin{equation}\label{Proof:lemma_3}
\begin{split}
& \mathbb{E}[|| \bm{w}_{i,e}^{t} - \bm{w}^{t} ||^2] = \mathbb{E}[|| \bm{w}_{i,e-1}^{t} - \bm{w}^{t} - \eta_{l} \nabla F_i (\bm{w}_{i,e-1}^{t}, \mathcal{B}_{i, e-1}^{t}) ||]^2 \\
& \leq  \mathbb{E}[|| \bm{w}_{i,e-1}^{t} - \bm{w}^{t} - \eta_{l}( \nabla F_i (\bm{w}_{i,e-1}^{t}, \mathcal{B}_{i, e-1}^{t})  -  \nabla F_i (\bm{w}_{i,e-1}^{t}) +  \nabla F_i (\bm{w}_{i,e-1}^{t}) - \nabla F_i(\bm{w}^{t}) + \nabla F_i(\bm{w}^{t}) \\
& \qquad - \nabla f(\bm{w}^{t}) + \nabla f(\bm{w}^{t})) ||]^2 \\
& \leq (1 + \frac{1}{2E -1}) \mathbb{E} [|| \bm{w}_{i,e-1}^{t} - \bm{w}^{t} ||^2] + \mathbb{E}[||\eta_{l} (\nabla F_i (\bm{w}_{i,e-1}^{t}, \mathcal{B}_{i, e-1}^{t})  -  \nabla F_i (\bm{w}_{i,e-1}^{t}))||^2] \\
& \qquad + 6E\mathbb{E}[||\eta_{l} (\nabla F_i(\bm{w}^{t}_{i, e-1}) - \nabla F_i(\bm{w}^{t}))||^2] + 6E\mathbb{E}[||\eta_{l}(\nabla F_i (\bm{w}^{t}) - \nabla f(\bm{w}^{t}))||^2] + 6E||\eta_{l}\nabla f(\bm{w}^{t})||^2 \\
& \leq (1 + \frac{1}{2E -1}) \mathbb{E} [|| \bm{w}_{i,e-1}^{t} - \bm{w}^{t} ||^2] + \eta_{l}^2 \sigma_{l}^{2} + 6E \eta_{l}^{2}L^2 \mathbb{E}[|| \bm{w}_{i, e-1}^{t} - \bm{w}^{t} ||^2] + 6E \eta_{l}^2 \sigma_{g}^{2} +  6E||\eta_{l}\nabla f(\bm{w}^{t})||^2 \\
& = (1 + \frac{1}{2E -1} + 6E \eta_{l}^{2} L^2)\mathbb{E} [|| \bm{w}_{i,e-1}^{t} - \bm{w}^{t} ||^2] + \eta_{l}^2 \sigma_{l}^{2} + 6E \eta_{l}^2 \sigma_{g}^{2} +  6E||\eta_{l}\nabla f(\bm{w}^{t})||^2 \\
& \leq (1 + \frac{1}{E -1})\mathbb{E} [|| \bm{w}_{i,e-1}^{t} - \bm{w}^{t} ||^2] + \eta_{l}^2 \sigma_{l}^{2} + 6E \eta_{l}^2 \sigma_{g}^{2} +  6E||\eta_{l}\nabla f(\bm{w}^{t})||^2 
\end{split}
\end{equation}
Then we can unroll the recursion and reach the following 
\begin{equation}\label{Proof:lemma_3_recursion}
\begin{split}
\sum_{i=1}^{N} \mathbb{E}[|| \bm{w}_{i,e}^{t} - \bm{w}^{t} ||^2] & \leq \sum_{r=1}^{E-1}(1 + \frac{1}{E -1})^{r} [\eta_{l}^2 \sigma_{l}^{2} + 6E \eta_{l}^2 \sigma_{g}^{2} +  6E||\eta_{l}\nabla f(\bm{w}^{t})||^2 ] \\
& \leq (E-1)((1 + \frac{1}{E -1})^{E} -1)[\eta_{l}^2 \sigma_{l}^{2} + 6E \eta_{l}^2 \sigma_{g}^{2} +  6E||\eta_{l}\nabla f(\bm{w}^{t})||^2 ] \\
& \leq 5E \eta_{l}^{2}(\sigma_{l}^{2} + 6E \sigma_{g}^{2}) + 30 E^2 \eta_{l}^{2} ||\nabla f(\bm{w}^{t})||^2 ]. 
\end{split}
\end{equation}
Proof is done. 
\end{proof}

\end{document}